\providecommand{\norm}[1]{\left\lVert#1\right\rVert}
\DeclarePairedDelimiter{\Norm}{\lVert}{\rVert}
  \providecommand{\R}{\mathbb{R}} 
  \DeclareMathOperator{\E}{{\mathbb E}}
  \providecommand{\E}[1]{{\mathbb E}\left.#1\right. }     
  \DeclareMathOperator*{\argmin}{arg\,min}
  \DeclareMathOperator*{\diag}{diag}
  \renewcommand{\aa}{\mathbf{a}}
  \providecommand{\bb}{\mathbf{b}}
  \providecommand{\dd}{\mathbf{d}}
  \providecommand{\hh}{\mathbf{h}}
  \providecommand{\rr}{\mathbf{r}}
  \providecommand{\uu}{\mathbf{u}}
  \providecommand{\vv}{\mathbf{v}}
  \providecommand{\xx}{\mathbf{x}}
  \providecommand{\yy}{\mathbf{y}}
    \providecommand{\AAA}{\mathbf{A}}
    \providecommand{\BB}{\mathbf{B}}
    \providecommand{\CC}{\mathbf{C}}
    \providecommand{\II}{\mathbf{I}}
    \providecommand{\UU}{\mathbf{U}}
    \providecommand{\PPP}{\mathbf{P}}
  \providecommand{\xmrk}{\mathbf{u}^{m,r}_k}
  \providecommand{\ymrk}{\mathbf{v}^{m,r}_k}
  \providecommand{\zmrk}{\mathbf{x}^{m,r}_k}
  \providecommand{\xbrk}{\bar{\mathbf{u}}^{r}_k}
  \providecommand{\ybrk}{\bar{\mathbf{v}}^{r}_k}
  \providecommand{\zbrk}{\bar{\mathbf{x}}^{r}_k}
\providecommand{\mycomment}[3]{\todo[inline,caption={},color=#3!20]{\textbf{#1: }#2}}%
\providecommand{\inlinecomment}[3]{%
  {\color{#1}#2: #3}}%
\newcommand\commenter[2]%
\newcommand\csname i#1\endcsname[1]{\inlinecomment{#2}{#1}{##1}}
\newcommand\csname #1\endcsname[1]{\mycomment{#1}{##1}{#2}}
\newtheorem*{theorem*}{Theorem}
\definecolor{mydarkblue}{rgb}{0,0.08,0.45}
\theoremstyle{plain}
\newtheorem{theorem}{Theorem}[section]
\newtheorem{lemma}[theorem]{Lemma}
\newtheorem{corollary}[theorem]{Corollary}
\theoremstyle{definition}
\newtheorem{definition}[theorem]{Definition}
\newtheorem{assumption}[theorem]{Assumption}
\theoremstyle{remark}
\newtheorem{remark}[theorem]{Remark}
\icmltitlerunning{Decoupled SGDA for Games with Intermittent Strategy Communication}
\begin{document}

\twocolumn[

\icmltitle{Decoupled SGDA for Games with Intermittent Strategy Communication}



\icmlsetsymbol{equal}{*}

\begin{icmlauthorlist}
\icmlauthor{Ali Zindari}{equal,cispa}
\icmlauthor{Parham Yazdkhasti}{equal,cispa}
\icmlauthor{Anton Rodomanov}{cispa}
\icmlauthor{Tatjana Chavdarova}{polimi}
\icmlauthor{Sebastian U. Stich}{cispa}
\end{icmlauthorlist}

\icmlaffiliation{cispa}{CISPA Helmholtz Center for Information Security, Saarbrücken, Germany}
\icmlaffiliation{polimi}{Politecnico di Milano, Italy}

\icmlcorrespondingauthor{Ali Zindari}{ali.zindari@cispa.de}
\icmlcorrespondingauthor{Parham Yazdkhasti}{parham.yazdkhasti@cispa.de}
\icmlcorrespondingauthor{Anton Rodomanov}{anton.rodomanov@cispa.de}
\icmlcorrespondingauthor{Tatjana Chavdarova}{tatjana.chavdarova@polimi.it}
\icmlcorrespondingauthor{Sebastian U. Stich}{stich@cispa.de}
\icmlkeywords{Machine Learning, ICML}

\vskip 0.3in
]



\printAffiliationsAndNotice{\icmlEqualContribution} 

\begin{abstract}
We introduce \textit{Decoupled SGDA}, a novel adaptation of Stochastic Gradient Descent Ascent (SGDA) tailored for multiplayer games with intermittent strategy communication.
Unlike prior methods, Decoupled SGDA enables players to update strategies locally using outdated opponent strategies, significantly reducing communication overhead.
For Strongly-Convex-Strongly-Concave (SCSC) games, it achieves near-optimal communication complexity comparable to the best-known GDA rates.
For \emph{weakly coupled} games where the interaction between players is lower relative to the non-interactive part of the game, Decoupled SGDA significantly reduces communication costs compared to standard SGDA. 
Additionally,  \textit{Decoupled SGDA} outperforms federated minimax approaches in noisy, imbalanced settings. These results establish  \textit{Decoupled SGDA} as a transformative approach for distributed optimization in resource-constrained environments.


\end{abstract}

\section{Introduction}\label{sec:intro}

Several real-world problems in diverse areas, such as economics and computer science, can frequently be described as $N$-player differentiable games~\citep{von2007theory}. 
While players may have competing objectives, the aim is to identify an equilibrium, a strategy where no player benefits from deviating unilaterally.
%
Examples of such games in machine learning include Generative Adversarial Networks~\citep[GANs,][]{goodfellow2014generative}, adversarial robustness~\citep{madry2017towards,shafahi2019adversarial,robey2023adversarial} and multi-agent reinforcement learning~\citep[e.g.,][]{lowe2017multi,li2019robust}. 

Several gradient-based methods have been proposed for solving minimax problems~\citep{korpelevich1976extragradient,popov1980modification,balduzzi2018mechanics,nouiehed2019solving,chavdarova2020taming,kovalev2022first}. 
One of the most widely used is the gradient descent method. In the context of 2-player zero-sum minimax games, this approach is referred to as \emph{Gradient Descent Ascent} (GDA), where the minimizing player ($\uu$--player) takes descent steps and the maximizing player ($\vv$--player) takes ascent steps.

In some situations, however, players may not have direct access to their opponents' exact strategies. The $\uu$--player might only have a noisy estimate of $\vv$ when updating its parameters, and vice versa. In extreme cases, players might operate with outdated strategies from their opponents, with limited opportunities to synchronize. We refer to this scenario as \emph{games with intermittent strategy communication} (ISC-games). Here are a few illustrative examples: 
\begin{itemize}[itemsep=1pt, topsep=1pt, parsep=1pt]
    \item \textbf{Corporate competitors.} Companies frequently adjust their strategies based on individual objectives and the strategies of their competitors. For instance, \textit{Netflix} may need to lower its prices if a competitor like \textit{Max} reduces its subscription rates~\citep{meena2022}. Corporations may occasionally release (noisy) general information about their strategies, giving each company an imperfect understanding of its competitor's actions. Alternatively, companies might hire experts to estimate competitor strategies using publicly available data, although this process is expensive and infrequent.
    \item \textbf{$N$-agents with restricted communication.}   In control theory, applications involving drones or robots are modeled with $N$-player games~\citep[see][and references therein]{drone_two-pl,motion_plan,uav_asisted}. 
    However, due to factors like long distances or limited battery life caused by weight constraints, communication between agents regarding learned strategies is costly and can only occur intermittently.

\end{itemize}

In summary, this paper focuses on the following questions. 
\begin{center}
    \begin{itemize}
         \item \emph{Can players effectively learn and adapt locally in ISC-games when relying on noisy or outdated opponent strategies?}
        \item \emph{How do the convergence rate and communication costs of the proposed optimization method compare to the baseline, and can communication costs be significantly reduced?}
    \end{itemize}
\end{center}
To address the first question,  we propose an extension of the gradient descent method where agents perform local updates while using outdated strategies from their opponents.
In minimax problems, we term this approach \emph{Decoupled SGDA}, and in $N$-player games, it is \emph{Decoupled SGD}.
The second question is examined by analyzing the convergence rate of Decoupled SGD(A) and identifying a specific class of problems, called \emph{Weakly Coupled Games}, where communication speed-up (acceleration) can be achieved.

\textbf{Contributions.} Our contributions include:
\begin{itemize} [itemsep=0pt, topsep=0pt, parsep=1pt,leftmargin=12pt]
    \item We introduce \textbf{Decoupled SGD(A)}, the first method designed for ISC-games, where each player performs local updates using outdated opponent strategies.

    \item We analyze its convergence in both the strongly-convex strongly-concave (SCSC) setting and in $N$-player games where each player’s utility is strongly convex. 
    
    \item We introduce a non-standard set of assumptions to measure the coupling of the players' objectives. This allows us to identify a specific regime, termed \emph{Weakly Coupled Games}, where Decoupled SGD(A) demonstrates \textbf{communication acceleration} compared to the baseline GD(A), by removing the dependency on player conditioning. 
     \item We show that Decoupled SGDA can even outperform the optimal first-order method for solving SCSC games in terms of communication rounds under a slightly stronger assumption than weakly coupled games.  
    \item Our method achieves \textbf{robustness to noise imbalance}, a significant limitation of existing federated minimax methods. 

    \item We study the convergence of Decoupled SGDA for quadratic minimax games with bilinear coupling between the players, providing additional in-depth insights into the algorithm's convergence behavior. 
    
    \item Through numerical experiments, we validate the practical benefits of Decoupled SGDA  in non-convex GAN training, federated learning with imbalanced noise, and in weakly coupled quadratic minimax games, showcasing its versatility. 
    
    \item We propose \textbf{a novel heuristic (Ghost-SGDA)}, detailed in Appendix~\ref{sec: ghost}, to further accelerate the convergence of Decoupled SGDA by leveraging predictive updates for opponent strategies. Our numerical results demonstrate its practical effectiveness, showing that Ghost-SGDA can achieve faster communication efficiency and convergence even in highly interactive games. This heuristic opens a promising new direction for distributed optimization, potentially going beyond the theoretical guarantees proven in this work.
\end{itemize}

To simplify the exposition, the main body of the paper focuses on the minimax setting, while the extension to $N$-player games is presented in Appendix~\ref{app:n_player_setting}.

\subsection{Related Works}\label{sec:related_works}
Our work\footnote{An early version of this work was presented at an ICML 2024 workshop~\cite{zindari2024decoupled}.} draws from multiple lines of work, and herein, we review these and discuss the difference with federated learning. Appendix~\ref{app:related_works} gives additional discussion and lists works on decentralized optimization. The latter are further from our work in that there is no centralized communication, and nodes communicate with neighbors.

\paragraph{Game optimization.}
\citet{nemirovski2004prox,nesterov2007dual} achieve a rate of $\mathcal{O}(\frac{1}{T})$ for convex-concave minimax problems.
For strongly-convex-strongly-concave games, 
\begin{enumerate*}[series = tobecont, itemjoin = \enspace, label=(\roman*)]
\item \citet{thekumparampil2019efficient} combine Nestrov's Accelerated Gradient and mirror-prox and achieve $\Tilde{\mathcal{O}}(\frac{1}{T^2})$ rate of convergence, 
\item \citet{wang2020improved} explore ideas from accelerated proximal point and achieve a linear rate, and 
\item \citet{kovalev2022first} propose a method with $\mathcal{O}(\sqrt{\kappa_u \kappa_v} \log \frac{1}{\epsilon})$ rate of convergence which matches the lower bounds~\cite{zhang2022lower,ibrahim2020linear}.
\end{enumerate*}
Several works focus on accelerating the convergence of GDA~\cite{lee2024fundamental,zhang2022near}.
Quadratic games with bilinear coupling are studied in~\cite{zhang2021}. \citet{nouiehed2019solving} propose a method that performs multiple first-order steps on only one of the parameters to solve minimax problems. \citet{tsaknakis2021minimax} study a generalized minimax problem with linear constraints coupling the decision variables. 
\citet{tseng2009coordinate} study coordinate gradient descent method for minimizing the sum of a smooth and separable convex function. \citet{jain2018accelerating} 
and \citet{yoon2021accelerated} 
present algorithms with accelerated $\mathcal{O}(1/k^2)$ 
rates for smooth minimax optimization 
and establish the optimality of this rate through a matching lower bound. In the context of multi-player games, several works have explored multi-agent reinforcement learning in a distributed setting, where agents update their policies without access to the policies of others \cite{lu2021decentralized,sayin2021decentralized,jiang2022i2q}. 
Independent and concurrent work by \citet{yoon2025multiplayer} also considers decoupled updates for equilibrium computation in $N$-player games. While we both address this setting (see \Cref{app:n_player_setting}), their method does not achieve the same communication efficiency as ours. See \Cref{rem:yoon} for a detailed comparison.

\paragraph{Federated learning.}  
Building on the foundational work of \citet{mcmahan2017communication}, numerous works have explored distributed minimization, or federated learning, across various settings~\citep[e.g.,][]{stich2018local,koloskova2020unified,karimireddy2020scaffold,woodworth2020local,woodworth2020minibatch}.  
In the context of minimax optimization, \citet{deng2021local,sharma2022federated,zhang2024communicationefficient} extended the so-called \emph{Local SGD} \cite{stich2018local} to the minimax setting, achieving convergence rates for different classes of functions in both heterogeneous and homogeneous regimes. Although both Federated Minimax and Decoupled SGDA are designed to solve minimax optimization problems in a distributed fashion, their approaches to achieving this are fundamentally different. Refer to Section~\ref{Comparing Decoupled SGDA with Federated Learning} for more details.

\section{Setting and preliminaries}\label{sec:prelim}

In the main body of the paper, we focus on the two-player setting, as this allows us to clearly present the key ideas and insights of our approach. The extension to the more general $N$-player games is detailed in Appendix~\ref{app:n_player_setting}.
Specifically, we consider the following saddle-point problem over $\mathcal{X} = \mathcal{X}_u  \times \mathcal{X}_v$, with $\mathcal{X}_u = \R^{d_u}, \mathcal{X}_v = \R^{d_v}$:
\begin{align}  \tag{SP} \label{eq: 2player}
    \min_{\uu \in \mathcal{X}_u} \max_{\vv \in \mathcal{X}_v} f(\uu,\vv),
\end{align}

where $f \colon \mathcal{X} \to \R$ is a differentiable function.
Its solution is defined as a point $\xx^\star \equiv (\uu^\star,\vv^\star) \in \mathcal{X}$ satisfying the following variational principle:
$f(\uu^\star,\vv) \leq f(\uu^\star,\vv^\star) \leq f(\uu,\vv^\star)$ for all $(\uu, \vv) \in \mathcal{X}$.
In ISC-games, players often have access only to the outdated strategies of their opponents. To address this in our analysis, it is useful to define the following operator $F_{\bar \xx} (\xx)\colon \mathcal{X} \to \mathcal{X}$, which incorporates a reference point $\bar \xx \equiv (\bar \uu,\bar \vv) \in \mathcal{X}$---typically the most recent synchronization point---to account for this delay:
%
\begin{equation}
\begin{aligned}
    \label{eq:Operator-F}
    &F_{\bar \xx}(\xx) := (\nabla_{u} f(\uu,\bar \vv) , -\nabla_{v} f(\bar \uu,\vv)),
    \qquad
    \xx,\bar \xx \in \mathcal{X}, \\
    &F(\xx) := (\nabla_{u} f(\uu, \vv),-\nabla_{v} f(\uu, \vv)), \qquad \xx \in \mathcal{X}.
\end{aligned}
\end{equation}
In the special case of $\bar \xx = \xx$, we recover the definition of the commonly used operator $F(\xx) = F_{\bar \xx = \xx}(\xx) =(\nabla_{u} f(\uu, \vv) , -\nabla_{v} f( \uu,\vv))$.

\paragraph{Notation.}
Bold lower and upper case denote vectors and matrices, respectively.
We often denote the two players as $\uu \in (\mathcal{X}_u = \R^{d_u})$ and $\vv \in (\mathcal{X}_v = \R^{d_v})$.  
The product space $\mathcal{X} = \mathcal{X}_u \times \mathcal{X}_v = \R^d$ (with $d = d_u + d_v$) consists of vectors $\xx = (\uu, \vv) \in \R^d$, where $\uu \in \mathcal{X}_u$ and $\vv \in \mathcal{X}_v$. For a differentiable function $f \colon \mathcal{X} \to \R$, we denote partial gradients at a point $\xx = (\uu, \vv) \in \mathcal{X}$ w.r.t.\ the corresponding variables by $\nabla_u f(\xx)$ and $\nabla_v f(\xx)$, respectively, so that $\nabla f(\xx) = (\nabla_u f(\xx), \nabla_v f(\xx))$. 
$\langle \cdot, \cdot \rangle$  denotes inner product.
We assume that the spaces $\mathcal{X}_u$ and $\mathcal{X}_v$ are equipped with certain Euclidean norms, $\Norm{\uu}_u := \langle \PPP_u \uu, \uu\rangle^{1/2}$ and $\Norm{\vv}_v := \langle \PPP_v \vv, \vv\rangle^{1/2}$, respectively, where $\PPP_u$ and $\PPP_v$ are given symmetric positive definite matrices. The norm in the space $\mathcal{X}$ is then defined by $\Norm{\xx} = (\alpha_u \Norm{\uu}_u^2 + \alpha_v \Norm{\vv}_v^2)^{1 / 2}$ where $\alpha_u, \alpha_v > 0$; thus, $\Norm{\xx} = \langle \PPP \xx, \xx \rangle^{1 / 2}$, where $\PPP$ is the block-diagonal matrix with blocks $\alpha_u \PPP_u$ and $\alpha_v \PPP_v$ ($\PPP = \diag(\alpha_u \PPP_u,\alpha_v \PPP_v)$). The parameters $\alpha_u,\alpha_v$ can be seen as scaling factors for players that can be optimized separately. One can easily assume $\PPP_u = \PPP_v = \II,\, \PPP = \II$ and $\alpha_u = \alpha_v = 1$ and recover the common euclidean norm $\Norm{\xx} = \sqrt{\langle \xx,\xx \rangle}$. The corresponding dual norms are defined in the standard way: $\Norm{\mathbf{g}_u}_{u, *} := \max_{\| \uu \|_u = 1} \langle \mathbf{g}_u, \uu \rangle = \langle \mathbf{g}_u, \PPP_u^{-1} \mathbf{g}_u \rangle^{1 / 2}$ ($\mathbf{g}_u \in \mathcal{X}_u$), $\Norm{\mathbf{g}_v}_{v, *} := \max_{\| \vv \|_v = 1} \langle \mathbf{g}_v, \vv \rangle = \langle \mathbf{g}_v, \PPP_v^{-1} \mathbf{g}_v \rangle^{1 / 2}$ ($\mathbf{g}_v \in \mathcal{X}_v$), and $\Norm{\mathbf{g}}_*:= \max_{\| \xx \| = 1} \langle \mathbf{g},\xx \rangle = (\frac{1}{\alpha_u} \| \mathbf{g}_u \|_{u, *}^2 + \frac{1}{\alpha_v} \| \mathbf{g}_v \|_{v, *}^2)^{1 / 2} = \langle \mathbf{g},\PPP^{-1} \mathbf{g} \rangle^{1 / 2}$ ($\mathbf{g} \equiv (\mathbf{g}_u, \mathbf{g}_v) \in \mathcal{X}$).

Now we outline the necessary assumptions for establishing the convergence of our method.

\begin{assumption}[Strong monotonicity] \label{assumption: strong monotonicity}
Operators $F_{\bar \xx}$ and $F$ from \eqref{eq:Operator-F} are strongly monotone with parameters $\bar \mu, \mu >0$, i.e., for all \( \xx, \bar \xx,\xx' \in \mathcal{X} \), the following inequalities hold:
\begin{equation}
\begin{aligned}
    &\langle F_{\bar \xx}(\xx) - F_{\bar \xx}(\xx'), \xx - \xx' \rangle \geq \bar \mu \Norm{\xx - \xx'}^2 \,, \\ 
    &\langle F(\xx) - F(\xx'), \xx - \xx' \rangle \geq \mu \Norm{\xx - \xx'}^2\,.
\end{aligned}  
\end{equation}
\end{assumption}
We can show that $\bar \mu = \min\{\mu_u/\alpha_u,\mu_v / \alpha_v\}$ (Proof in Lemma \ref{lemma: definition of mu bar}) where $\mu_u$ is the strong convexity parameter of $f$ in $\uu$ and $\mu_v$ is the strong concavity parameter of $f$ in $\vv$.

\begin{assumption}[Lipschitz smoothness] \label{assumption: L and L bar}
Operators $F_{\bar \xx}$ and $F$ from \eqref{eq:Operator-F} are Lipschitz with parameters $\bar L$ and $L$, i.e., for all $\xx, \xx', \bar \xx \in \mathcal{X}$, the following inequalities hold:
\begin{equation}
    \begin{aligned}
        &\Norm{F_{\bar \xx}(\xx) - F_{\bar \xx}(\xx')}_* \leq \bar L \Norm{\xx - \xx'} \,, \\
        &\Norm{F(\xx) - F(\xx')}_* \leq L \Norm{\xx - \xx'} \,.
    \end{aligned}
\end{equation}
\end{assumption}
While we assume both operators are smooth, Lemma \ref{lemma: relation between different Ls} shows that \(\bar{L} \leq L\).

\begin{assumption}\label{assumption:L_c}
    The norm of the difference between operators $F_{\bar \xx}$ from \eqref{eq:Operator-F} is upper bounded with parameter $L_c$ for for all $\xx,\bar \xx \in \mathcal{X}$ as follows: 
    \begin{align}
        \Norm{F_{\bar \xx}(\xx) - F(\xx)}_* \leq L_c \Norm{\xx - \bar \xx} \,.
    \end{align}
\end{assumption}
It is possible to show $ L_c = 1/\sqrt{\alpha_u \alpha_v} \max\{L_{uv},L_{vu}\}$ (Proof in Lemma \ref{lemma: definition of L bar}) where $\Norm{\nabla_u f(\mathbf{u}, \mathbf{v}) - \nabla_u f(\mathbf{u}, \mathbf{v}')}_{u,*} \leq L_{uv} \Norm{\mathbf{v} - \mathbf{v}'}_{v}$ and $\Norm{\nabla_v f(\mathbf{u}, \mathbf{v}) - \nabla_v f(\mathbf{u}', \mathbf{v})}_{v,*} \leq L_{vu} \Norm{\mathbf{u} - \mathbf{u}'}_{u}$. Here, we take the derivative with respect to one variable while varying the other. We will demonstrate in Section~\ref{sec: convergence} that this constant plays an important role in communication acceleration as it quantifies the interaction level of the game. In fact, $L_c$ can be much smaller than $L$ and can be even zero. We show it always holds that $L_c \leq L$ (Proof in Lemma \ref{lemma: relation between different Ls}). For the reader's convenience, we also present Tables~\ref{tab:two_player_L_mu_terms} and~\ref{tab:N_player_L_mu_terms} in the appendix, summarizing our notations.
\begin{assumption}\label{assumption: noise of sgd}
    There exists finite constants $\bar \sigma^2$ such that for all $\xx,\bar \xx \in \mathcal{X}$:
    \begin{align}
        \E_{\xi} \left[\Norm{G_{\bar \xx}(\xx,\xi) - F_{\bar \xx}(\xx)}^2_*\right] \leq \bar \sigma^2\,.
    \end{align}
    where $G_{\bar \xx}(\xx,\xi)$ is an unbiased stochastic gradient oracle that each player has access to with the property $\E[G_{\bar \xx}(\xx,\xi)] = F_{\bar \xx}(\xx)$.
\end{assumption}
As we assumed that the above inequality holds for all $\bar \xx \in \mathcal{X}$, we also cover the common operator $F$ and we denote $G(\xx,\xi) \equiv G_{\bar \xx = \xx} (\xx,\xi)$.

\section{Decoupled SGDA for two-player games} \label{sec: algorithm explanation}
In this section we introduce Decoupled SGDA and explain its motivation.
\paragraph{Common Approach: Stochastic Gradient Descent Ascent (SGDA).}
A standard way for solving~\eqref{eq: 2player} is as follows: 
\begin{equation}\label{eq:sgda}
\begin{aligned} 
    &\xx_{t+1} = \xx_t - \gamma \PPP^{-1} G(\xx_t,\xi), \\
    &G(\xx,\xi) \equiv \begin{pmatrix} \nabla_u f(\uu,\vv;\xi) \\ -\nabla_{v} f(\uu,\vv;\xi)  \end{pmatrix},
\end{aligned}
\end{equation}
for a given positive definite matrix $\PPP$.\footnote{In optimization literature, the matrix $\PPP$ is known as the preconditioning matrix. For simplicity, we can assume $\PPP = \II$ without disrupting the flow of the paper.} However, in a distributed setting, the players need one \emph{round of communication} to exchange their parameters $(\uu_t,\vv_t)$ in every step of the method. This is because SGDA requires the most recent parameters from each player to take an step. In many real-world scenarios, however, communicating at every step may not be feasible due to the high cost.

\paragraph{Communication Efficient Strategy Exchange.}
To alleviate this communication issue, earlier works proposed so-called \emph{local update methods} that reduce the amount of communication by performing local parameter updates for each player separately. For these methods, it is common to assume that both players---the minimization player $\uu$, and the maximization player $\vv$---have access unbiased stochastic oracles $G_u(\xx,\xi)$, $G_v(\xx,\xi):\mathcal{X} \to \mathcal{X}$, with the property $\E_{\xi}[G_u(\xx,\xi)] = F(\xx)$, $\E_{\xi}[G_v(\xx,\xi)] = F(\xx)$ and following bound on the variance of the noise:
\begin{equation}\label{equation: noise}
\begin{aligned}
    &\begin{cases}
        \E_{\xi}[\Norm{[G_u(\xx,\xi)]_u - [F(\xx)]_u}^2_{u,*}] \leq \sigma^2_{uu}, \\
        \E_{\xi}[\Norm{[G_u(\xx,\xi)]_v - [F(\xx)]_v}^2_{v,*}] \leq \sigma^2_{uv} ,
    \end{cases} \\
    &\begin{cases}
        \E_{\xi}[\Norm{[G_v(\xx,\xi)]_v - [F(\xx)]_v}^2_{v,*}] \leq \sigma^2_{vv}, \\
        \E_{\xi}[\Norm{[G_v(\xx,\xi)]_u - [F(\xx)]_u}^2_{u,*}] \leq \sigma^2_{vu} .
    \end{cases}
\end{aligned}
\end{equation}

Here, we use the operator $[\cdot]_i$ to denote the coordinates corresponding to player $i \in \{u,v\}$. Both players could perform $K \geq 1$ updates on a local copy of the parameters. After every communication round, local variables are initialized as $\xx_t^v=\xx_t^u=\xx_t$, and updated as:
\begin{equation}\label{eq:localGDA} 
\begin{aligned} 
&\xx_{t+K}^u = \xx^u_t - \gamma (\alpha_u \PPP_u)^{-1} \sum_{i=0}^{K-1} G_u(\xx_{t+i}^u,\xi_{t+i}), \\
&\xx_{t+K}^v = \xx^v_t - \gamma (\alpha_v \PPP_v)^{-1} \sum_{i=0}^{K-1} G_v(\xx_{t+i}^v,\xi_{t+i}).
\end{aligned}
\end{equation}
The local variables are then synchronized in a communication round, $\xx_{t+K} := \frac{1}{2}\left(\xx_{t+K}^u + \xx_{t+K}^v \right)$.
This is a standard approach in distributed optimization. However, this method does not apply to our setting, as we would need to assume that the stochastic noise of the oracles $\sigma^2_{uv}$ and $\sigma^2_{vu}$ are bounded. 

\paragraph{Decoupled SGDA: Communication Efficient with Reliable Information.}
We are considering a setting where the two players may not have access to their opponent's strategies or gradients, and only assume that the private components of the gradients have bounded variance, see Assumption~\ref{assumption: noise of sgd}. This reflects real-world challenges where it is hard to share reliable (with bounded noise) information between the communication rounds. For this setting, we therefore propose that each player should only use the \textit{reliable information}, that is $[G_u(\xx,\xi)]_u$ for player $\uu$, and $[G_v(\xx,\xi)]_v$ for player $\vv$, and wait for the communication round to get reliable information about the other players. We introduce the oracle $G_0(\xx,\xi) \equiv G_{\bar \xx = \xx^r_0}(\xx,\xi)$ for $i=\{u,v\}$ where $\xx_0=(\uu_0,\vv_0)$ refers to the parameters of each player at the beginning of the round. Now we can write the update rule of our method as: 
\begin{equation}
\begin{aligned} 
&\xx^r_{K} = \xx^r_{0} - \gamma \PPP^{-1}\sum_{t=0}^{K-1} G_0(\xx_t^r,\xi_{t}), \\
&G_0(\xx_t^r) \equiv \begin{pmatrix} [G_u(\uu^r_t,\vv^r_0)]_u \\ [G_v(\uu^r_0,\vv^r_t)]_v \end{pmatrix} \equiv \begin{pmatrix} \nabla_u f(\uu^r_t,\vv^r_0) \\ -\nabla_{v} f(\uu^r_0,\vv^r_t) \end{pmatrix}.
\end{aligned}
\end{equation}
Here, the index $t$ denotes the local update step in the current local update phase, and the superscript $r$ indexes the local phases. One communication round is needed for exchanging the updated parameters $(\uu_{K}^r,\vv_{K}^r)$ when passing to the next round. Here we allow the variances $\sigma^2_{uv}$ and $\sigma^2_{vu}$ to be arbitrarily large and we only need $\bar \sigma^2 \leq \sigma^2_{uu}+\sigma^2_{vv}$ to be finite which is an advantage of our method compared to local update methods. 

\begin{algorithm}[tb!] 
\begin{minipage}{\linewidth}
\caption{Decoupled SGDA for two-player\footnote{The extension to $N$-player games is displayed in Appendix~\ref{app:n_player_setting} in Alg.~\ref{alg:N-player-games}.} games}
\begin{algorithmic}[1]
\State \textbf{Input:} step size $\gamma$, initialization $\xx_0 = (\uu_0,\vv_0)$, number of rounds $R$, number of local GD steps $K$,
$\alpha_u$, $\alpha_v$, $\PPP$\footnote{The constants $\alpha_u,\alpha_v$, $\PPP$ are determined by the vector norm that we specify. For simplicity, the reader can assume that $\alpha_u = \alpha_v = 1$ and $\PPP_u = \PPP_v = \II$. These terms are included for the sake of completeness, though they are not essential for the main results.}

\For{$r\in\{1, \ldots, R\}$}
\For{$t\in\{1, \ldots, K\}$}

\State $\uu^r_{t+1} \gets \uu^r_{t} - \gamma (\alpha_u \PPP_u )^{-1} \nabla_u f(\uu^r_t,\vv^r_0;\xi)$ 
\State $\vv^r_{t+1} \gets \vv^r_{t} + \gamma (\alpha_v \PPP_v )^{-1} \nabla_v f(\uu^r_0,\vv^r_t;\xi)$  
\EndFor
\State \textbf{Communicate} $(\uu^r_K,\vv^r_K)$ to each player
\EndFor
\State \textbf{Output:} $\xx^R_K = (\uu^R_K,\vv^R_K)$ 
\end{algorithmic}   
\label{alg: two player games}
\end{minipage}
\end{algorithm}

\paragraph{Method.}
 We formalize our method in Algorithm \ref{alg: two player games}. 
Decoupled SGDA has a round-wise update scheme allowing each player to share his parameters only once in a while. At the beginning of each round $r$, each player receives the most recent parameters of the other player. Then all players start taking $K$ local steps and updating \textbf{only} their own parameters using the information they received at the beginning of the round from other players. Note that our method is a general framework and one can use any first-order method to take local steps and not just the GD steps as illustrated here.
The extension to $N$-player games is displayed in Appendix~\ref{app:n_player_setting} in Alg.~\ref{alg:N-player-games}.

\paragraph{Intuition.}
To provide some intuition on why Decoupled SGDA might work, consider that the objective of minimax games \eqref{eq: 2player} can be written as:
\begin{align} \label{eq: minimax decomposed equation}
    f(\uu,\vv) = g(\uu) - h(\vv) + r(\uu,\vv)
\end{align}
where \(g(\uu)\) and \(h(\vv)\) represent the independent contributions of each player, and \ captures the interaction between them. Note that in this formulation, \( r(\uu, \vv) \) cannot be decomposed in the same way as  $f$, as it specifically captures the interdependent aspects of \( \uu \) and \( \vv \).

In the special case when $r(\uu,\vv)\equiv 0$, i.e., \ there is no interaction, the problem does not require any communication: the optimal solution can be found by minimizing $g$ and $h$ \textbf{separately}. A method like SGDA is, therefore, not a good choice in this setting, as it requires to communicate parameters $(\uu,\vv)$ in every step of the method, although this is unnecessary. 
In contrast, when the coupling $r(\uu,\vv)$ is significant, then optimizing $g$ and $h$ separately might not be a good strategy. Decoupled SGDA aims to find a balance between the two extremes. In the following, we will characterize some settings where Decoupled SGDA provably uses significantly fewer communication rounds than SGDA or other baselines (see also Table~\ref{table:comparison_methods}).

\paragraph{Extensions of Decoupled SGDA (Appendix \ref{sec: ghost}).}
It is clear that our method is a general framework, providing flexibility for various modifications and adaptations. For instance, our method allows for any first-order update rule to be applied for the local steps like GDA, Extra Gradient (EG), and Optimistic Gradient Descent Ascent (OGDA). Note that in this work, we focused on GDA updates, leaving the analysis of other methods for future work. Moreover, in Section \ref{sec: ghost}, we present \textbf{Ghost-SGDA}, where each player aims to estimate the other player's parameters using the so-called \textbf{Ghost Sequence}, which leads to further acceleration in terms of the number of rounds.

\section{Convergence Guarantees} \label{sec: convergence}
We \emph{could} analyze our method under the common smoothness assumptions in the literature  (see Appendix \ref{sec: non weakly coupled proof} for the details). However, these assumptions are often overly pessimistic in distributed settings, as they fail to account for the interaction level between players. Specifically, they treat games with high and low interaction identically, yielding the same convergence rates in both cases.
In contrast, we introduce an important parameter $L_c$ (see Assumption \ref{assumption:L_c}) that quantifies the interaction level of the game.
This allows us to achieve communication acceleration in games with low interaction through a novel proof technique (see Section~\ref{sec: weakly coupled proof}).  To formalize this, we first introduce the notion of \textbf{\textit{Weakly Coupled Games / Regime}} and then provide the convergence guarantee for our method.



Given a strongly-convex strongly-concave (SCSC) zero-sum minimax game $f(\uu,\vv)$, we define the \textbf{coupling degree} parameter $\kappa_c$ for this game as follows:
    \setlength{\fboxrule}{.4mm}  
    \begin{align}
        \label{eq:DefinitionOfTheta}
        \boxed{
             \kappa_c
            :=
            \frac{L_c}{\bar \mu}
        }
    \end{align}
    
This variable measures the level of interaction in the game. A smaller value of $\kappa_c$ indicates less interaction. For any $f(\uu,\vv)$, we say the game is \textbf{Weakly Coupled} if: 
\begin{align}
      \kappa_c \leq \frac{1}{4} \,.
\end{align}
We say the game is \textbf{Fully Decoupled} if $ \kappa_c = 0$, which implies that $r(\uu,\vv) = 0$ (see Eq. \eqref{eq: minimax decomposed equation} and Lemma \ref{lemma:L_c zero means fully decoupled}). These games are an extreme case of weakly coupled games. 
In weakly coupled games, each player’s dynamics are mostly driven by their own pay-off function, with little influence from the other player.

\renewcommand{\arraystretch}{2.0} 
\setlength{\tabcolsep}{5pt} 

\begin{table*}[t] 
\caption{Comparison of Communication Complexity and Acceleration Condition for Different Methods. 
\textit{Speed up} lists the conditions under which Decoupled SGDA achieves acceleration relative to the respective method. Note that for simplicity and in order to compare our results with other works, we consider $\PPP_u = \PPP_v = \II,\, \PPP = \II$ and $\alpha_u = \alpha_v = 1$.}
\centering 
\scriptsize 
\vspace{2mm}
\begin{tabular}{cccc} 
\toprule
Method & \parbox{3.0cm}{Communication Complexity \\ \phantom{xxxxx}\tiny {\textit{(Fully Decoupled})}} & \parbox{2.8cm}{Communication Complexity \\ \phantom{xxxxxx}\tiny {\textit{(General Bound)}}} & Speed Up \\

\toprule
\begin{tabular}[c]{@{}c@{}} \vspace{-0.1cm} GDA \vspace{-0.2cm} \\  \scriptsize{\cite{lee2024fundamental}} \end{tabular} & 
$(\kappa_u+\kappa_v)\log \frac{1}{\epsilon}$ & 
$(\kappa_u+\kappa_v+\kappa^2_{uv})\log \frac{1}{\epsilon}$ & 
$\kappa_c \leq \frac{1}{4}$ (weakly coupled) \\
\hline 

\begin{tabular}[c]{@{}c@{}} EG/OGDA \vspace{-0.25cm} \\ \scriptsize{\cite{mokhtari2020unified}} \end{tabular} & 
$(\kappa_u + \kappa_v)\log \frac{1}{\epsilon}$ & 
$(\kappa_u + \kappa_v)\log \frac{1}{\epsilon}$ & 
$\kappa_c \leq \frac{1}{2} \sqrt{1-\frac{1}{\max\{\kappa_u \kappa_v\}}}$ \\
\hline 

\begin{tabular}[c]{@{}c@{}} APPA\vspace{-0.25cm}\\ \scriptsize{\cite{lin2020near}} \end{tabular} & 
$\sqrt{\kappa_u\kappa_v} \log^3\frac{1}{\epsilon}$ & 
$\sqrt{\kappa_u\kappa_v} \log^3\frac{1}{\epsilon}$ & 
$ \kappa_c \leq \frac{1}{2} \sqrt{1-\frac{1}{\sqrt{\kappa_u \kappa_v}}}$ \\
\hline 

\begin{tabular}[c]{@{}c@{}} \vspace{-0.2cm} FOAM \vspace{-0.05cm}\\ \scriptsize{\cite{kovalev2022first}} \end{tabular} & 
$\sqrt{\kappa_u\kappa_v}\log \frac{1}{\epsilon}$ & 
$\sqrt{\kappa_u\kappa_v}\log \frac{1}{\epsilon}$ & 
$\kappa_c \leq \frac{1}{2} \sqrt{1-\frac{1}{\sqrt{\kappa_u \kappa_v}}}$ \\
\hline 

\begin{tabular}[c]{@{}c@{}} \vspace{-0.2cm} PEARL-SGD \vspace{-0.05cm}\\ \scriptsize{\cite{yoon2025multiplayer}} \end{tabular} & 
$\kappa^2 \log \frac{1}{\epsilon}$ & 
$\kappa^2 \log \frac{1}{\epsilon}$ & 
$\kappa_c \leq \frac{1}{4}$ (weakly coupled) \\
\hline 

\begin{tabular}[c]{@{}c@{}} \textbf{Decoupled SGDA (ours)} \end{tabular} & 
$\mathbf{0}$ & 
$\min\left\{\frac{1}{1-4\kappa_c} \log \frac{1}{\epsilon}, \ \kappa^2 \log \frac{1}{\epsilon} \right\}$ & 
- \\
\bottomrule

\end{tabular}

\label{table:comparison_methods} 
\end{table*}

\begin{theorem}\label{theorem: decoupled sgda for two player games}
   
For any $R \geq 1$ and any $K \geq  \frac{1}{\gamma \mu} \log \left(\frac{4}{\kappa_c}\right)$, after running Decoupled SGDA for a total of $T=KR$ iterations on a function $f$, with the stepsize $\gamma \leq \frac{\bar \mu}{\bar L^2}$ in the weakly coupled regime ($4 \kappa_c \leq 1$), we get a rate of:
\begin{align*}
    \E \bigl[\Norm{\xx_K^R-\xx^\star}^2 \bigr] \leq D^2 \exp\Bigl( - (1-4\kappa_c)R \Bigr) + \frac{8 \kappa_c \bar \sigma^2 \gamma}{\mu(1-4\kappa_c)}.
\end{align*}

Moreover, For any $R,K\geq 1$, after running Decoupled SGDA for a total of $T=KR$ iterations on a function $f$, with the stepsize $\gamma \leq \min\left\{\frac{\mu}{L^2}, \frac{\mu}{KL L_c}\right\}$ in the non-weakly coupled regime, we get a rate of:
\[
    \E \bigl[\Norm{\xx_K^R-\xx^\star}^2 \bigr]
    \le
    D^2 \exp\Bigl( - \frac{\gamma \mu}{2} KR \Bigr) +\frac{2\bar \sigma^2 \gamma}{\mu}.
\]
where $D = \norm{\xx_0-\xx^\star}$.

\end{theorem}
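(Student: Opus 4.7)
The plan is to treat each round as an independent SGDA problem anchored at an auxiliary ``per-round target'' $\tilde{\xx}^r$, defined as the unique zero of $F_{\xx_0^r}$ (existence and uniqueness follow from the strong monotonicity in Assumption \ref{assumption: strong monotonicity}). The key preparatory lemma I would establish is the \emph{target-displacement bound} $\|\tilde{\xx}^r - \xx^*\| \le \kappa_c \|\xx_0^r - \xx^*\|$. This is obtained by starting from $\bar\mu \|\tilde{\xx}^r - \xx^*\|^2 \le \langle F_{\xx_0^r}(\tilde{\xx}^r) - F_{\xx_0^r}(\xx^*), \tilde{\xx}^r - \xx^* \rangle$, using $F_{\xx_0^r}(\tilde{\xx}^r) = F(\xx^*) = 0$ to rewrite the right-hand side as $\langle F(\xx^*) - F_{\xx_0^r}(\xx^*), \tilde{\xx}^r - \xx^* \rangle$, and then applying Assumption \ref{assumption:L_c} with Cauchy--Schwarz. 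This is the only place where $L_c$ enters, and it is precisely what drives the weakly coupled acceleration.

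For the weakly coupled regime, I would observe that the $K$ inner updates of round $r$ are literally SGDA on the $\bar\mu$-strongly monotone, $\bar L$-Lipschitz operator $F_{\xx_0^r}$. The standard one-step SGDA recursion under $\gamma \le \bar\mu/\bar L^2$ yields
\[
\E\|\xx_K^r - \tilde{\xx}^r\|^2 \le (1 - \gamma\bar\mu)^K \|\xx_0^r - \tilde{\xx}^r\|^2 + \tfrac{\gamma\bar\sigma^2}{\bar\mu}.
\]
Since $\bar\mu \ge \mu$, the hypothesis $K \ge \frac{1}{\gamma\mu}\log(4/\kappa_c)$ forces $(1-\gamma\bar\mu)^K \le \kappa_c/4$. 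A Young-type decomposition $\|\xx_K^r - \xx^*\|^2 \le (1+a)\|\xx_K^r - \tilde{\xx}^r\|^2 + (1+1/a)\|\tilde{\xx}^r - \xx^*\|^2$ combined with $\|\xx_0^r - \tilde{\xx}^r\|^2 \le 2\|\xx_0^r - \xx^*\|^2 + 2\|\tilde{\xx}^r - \xx^*\|^2$ and the target-displacement lemma then collapses everything to a per-round recursion $\E\|\xx_0^{r+1} - \xx^*\|^2 \le \rho \|\xx_0^r - \xx^*\|^2 + N$, where (after tuning $a$) $\rho \le \exp(-(1-4\kappa_c))$ and $N = O(\kappa_c \bar\sigma^2 \gamma / \mu)$. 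Unrolling over $R$ rounds and summing the geometric series delivers the first bound.

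For the non-weakly coupled regime, I would instead perform a direct per-step SGDA analysis, viewing $F_{\xx_0^r}$ as a biased version of $F$ whose bias is controlled by Assumption \ref{assumption:L_c}: $\|F_{\xx_0^r}(\xx_t^r) - F(\xx_t^r)\|_* \le L_c\|\xx_t^r - \xx_0^r\|$. Expanding $\E\|\xx_{t+1}^r - \xx^*\|^2$, extracting the contraction $(1-\gamma\mu)$ from strong monotonicity of $F$, and using $\gamma \le \mu/L^2$ to absorb the squared-gradient term, gives a recursion of the form $\E\|\xx_{t+1}^r - \xx^*\|^2 \le (1-\tfrac{\gamma\mu}{2})\|\xx_t^r - \xx^*\|^2 + C\gamma\tfrac{L_c^2}{\mu}\,\E\|\xx_t^r - \xx_0^r\|^2 + \gamma^2\bar\sigma^2$. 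The drift $\|\xx_t^r - \xx_0^r\|$ is then controlled inductively using Lipschitzness of $F_{\xx_0^r}$, producing a bound proportional to $(\gamma K)^2$ times a combination of $\|\xx_0^r - \xx^*\|^2$ and noise. The second stepsize condition $\gamma \le \mu/(KLL_c)$ is exactly what makes this drift-induced bias negligible compared with the contraction, preserving an effective per-step factor of $(1-\tfrac{\gamma\mu}{2})$; unrolling over all $T=KR$ steps then yields the stated $\exp(-\gamma\mu KR/2)$ rate and the $\tfrac{2\bar\sigma^2\gamma}{\mu}$ noise floor.

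The main obstacle I anticipate is the fine tuning of the Young's-inequality parameter in the weakly coupled argument so that the inner-contraction factor $\kappa_c/4$ and the outer fixed-point displacement $\kappa_c^2\|\xx_0^r - \xx^*\|^2$ combine into a per-round factor that admits the clean dominating bound $\rho \le \exp(-(1-4\kappa_c))$, while the noise simultaneously scales as $\kappa_c \bar\sigma^2\gamma/\mu$ rather than the naive $\bar\sigma^2\gamma/\mu$; the relation $\bar\mu \ge \mu$ (used implicitly when converting the $K$-prescription from $\mu$ to $\bar\mu$) should be checked carefully from the componentwise structure of $F_{\bar\xx}$. The non-weakly coupled part is more routine but requires a careful drift bound since the $\gamma$-restriction explicitly involves $K$.
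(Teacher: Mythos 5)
Your proposal follows essentially the same route as the paper's proof: your target-displacement bound is exactly Lemma~\ref{lemma: bound on x'-x*}, the inner SGDA contraction toward the zero of $F_{\xx_0^r}$ is Lemma~\ref{lemma: upper bounding x-x'} combined with the same Young/triangle decomposition and per-round unrolling, and your non-weakly-coupled argument mirrors the paper's biased-gradient per-step recursion together with the drift (consensus-error) bound of Lemma~\ref{lemma: consensus error for two player games}. The two subtleties you flag --- the $\mu$ versus $\bar\mu$ discrepancy in the prescription for $K$, and the tuning that yields the $\kappa_c$ factor in the noise floor (which in the paper arises simply because the geometric sum over rounds starts at $i=1$) --- are precisely the points the paper's proof handles, so the plan is sound.
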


\begin{corollary}
\label{corr: decoupled}
    Decoupled GDA with a stepsize of $\gamma =\frac{\bar \mu}{\bar L^2}$ converges to the saddle point \textbf{without} any communication on fully decoupled games ($\kappa_c = 0$) if $K \to \infty$. 
\end{corollary}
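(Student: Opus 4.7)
The plan is to show that when $\kappa_c = 0$ the game decouples completely, so that Algorithm~\ref{alg: two player games} reduces to two independent runs of preconditioned gradient descent on strongly convex problems, each of which converges as $K \to \infty$. Consequently, the synchronisation step at the end of each round carries no information and can simply be omitted (equivalently, $R$ may be fixed at $1$).

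First, I would invoke Lemma~\ref{lemma:L_c zero means fully decoupled}, already cited in the definition of fully decoupled games, to conclude that $\kappa_c = 0$ forces $L_c = 0$, which by Assumption~\ref{assumption:L_c} forces $F_{\bar\xx}(\xx) = F(\xx)$ for every $\bar\xx, \xx \in \mathcal{X}$. Reading this at the level of partial gradients, $\nabla_u f(\uu,\vv)$ is independent of $\vv$ and $\nabla_v f(\uu,\vv)$ is independent of $\uu$, so $f$ splits as $f(\uu,\vv) = g(\uu) - h(\vv)$ with $g$ being $\mu_u$-strongly convex and $\bar L$-smooth (and analogously for $h$ via $\mu_v, \bar L$). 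The saddle point reduces to $\xx^\star = (\argmin_{\uu} g, \argmin_{\vv} h)$.

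Next, I would plug this decoupling directly into the inner loop of Algorithm~\ref{alg: two player games}. Because $\nabla_u f(\uu_t^r,\vv_0^r)=\nabla g(\uu_t^r)$ and $\nabla_v f(\uu_0^r,\vv_t^r)=-\nabla h(\vv_t^r)$, the reference point $\xx_0^r$ drops out of the updates entirely, yielding the two independent, preconditioned gradient-descent recursions
\begin{align*}
\uu_{t+1}^r &= \uu_t^r - \gamma (\alpha_u \PPP_u)^{-1} \nabla g(\uu_t^r), \\
\vv_{t+1}^r &= \vv_t^r - \gamma (\alpha_v \PPP_v)^{-1} \nabla h(\vv_t^r).
\end{align*}
Since neither sequence depends on the other, any value communicated at the end of a round is already present locally, so the synchronisation step in line~8 of the algorithm is superfluous.

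Finally, I would invoke the textbook linear rate of (preconditioned) gradient descent on smooth strongly convex minimisation with stepsize $\gamma = \bar\mu/\bar L^2$ for each of the two sequences separately, giving a contraction of the form $\Norm{\xx_K^r - \xx^\star}^2 \le (1 - c)^K \Norm{\xx_0 - \xx^\star}^2$ with $c \in (0,1)$ depending only on $\bar\mu, \bar L$. Letting $K \to \infty$ drives the right-hand side to zero, proving $\xx_K^R \to \xx^\star$ with no rounds of communication ever used. The only mildly technical point, and hence the main (minor) obstacle, is confirming that the global stepsize $\gamma = \bar\mu/\bar L^2$ is admissible for each player's stand-alone preconditioned GD; this reduces to checking that $\bar L$ upper-bounds and $\bar\mu$ lower-bounds the per-player smoothness and strong convexity constants in the block norms $\Norm{\cdot}_u,\Norm{\cdot}_v$ of Section~\ref{sec:prelim}, which is immediate from Lemma~\ref{lemma: definition of mu bar} and Lemma~\ref{lemma: relation between different Ls}.
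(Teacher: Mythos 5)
Your argument is correct. The paper treats this corollary as an immediate specialization of Theorem~\ref{theorem: decoupled sgda for two player games}: setting $\kappa_c=0$ in the per-round bound from its proof, the contraction factor for a single round is $4\exp(-\gamma\bar\mu K)(1+\kappa_c)+2\kappa_c = 4\exp(-\gamma\bar\mu K)$, which vanishes as $K\to\infty$, so one round of local steps already reaches $\xx^\star$ and the communication step is never needed. The mechanism there is Lemma~\ref{lemma: bound on x'-x*}: the within-round fixed point $\xx_0^\star$ (the zero of $F_{\bar\xx}$) satisfies $\Norm{\xx_0^\star-\xx^\star}\le\kappa_c\Norm{\xx_0-\xx^\star}=0$, combined with the linear convergence of the local iterates to $\xx_0^\star$ from Lemma~\ref{lemma: upper bounding x-x'}. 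You instead argue structurally via Lemma~\ref{lemma:L_c zero means fully decoupled}: $L_c=0$ forces $F_{\bar\xx}\equiv F$, hence $f(\uu,\vv)=g(\uu)-h(\vv)$ and the algorithm is literally two independent preconditioned gradient-descent runs. Both routes are valid and rest on the same underlying fact (when $\kappa_c=0$ the operators $F_{\bar\xx}$ and $F$ share their zero); yours is more self-contained and makes the ``no communication is needed'' claim transparent, while the paper's quantitative route extends continuously to small but nonzero $\kappa_c$. One small simplification to your last step: rather than verifying admissibility of $\gamma=\bar\mu/\bar L^2$ for each player's stand-alone GD in the block norms, you can apply the joint contraction of Lemma~\ref{lemma: upper bounding x-x'} directly to the operator iteration $\xx_{t+1}=\xx_t-\gamma\PPP^{-1}F_{\bar\xx}(\xx_t)$, which gives $\Norm{\xx_{t+1}-\xx^\star}^2\le(1-2\gamma\bar\mu+\gamma^2\bar L^2)\Norm{\xx_t-\xx^\star}^2\le(1-\gamma\bar\mu)\Norm{\xx_t-\xx^\star}^2$ with $\xx_0^\star=\xx^\star$, and avoids the per-player constant bookkeeping entirely.
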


\paragraph{Fully Decoupled Games.} For the sake of comparison, we define the condition numbers\footnote{$\Norm{\nabla_u f(\uu,\vv) - \nabla_u f(\uu',\vv)} \leq L_u \Norm{\uu-\uu'}$ and $\Norm{\nabla_v f(\uu,\vv) - \nabla_v f(\uu,\vv')} \leq L_v \Norm{\vv-\vv'}$}: $\kappa_u = \frac{L_u}{\mu_u}$, $\kappa_v = \frac{L_v}{\mu_v}$, and $\kappa_{uv} =\kappa_{vu} = \frac{L_{c}}{\sqrt{\mu_u \mu_v}}$. We also use $\kappa = \frac{L}{\mu}$. The most recent rate proposed for GDA \cite{lee2024fundamental} requires $\mathcal{O}\left((\kappa_u+\kappa_v) \log \frac{1}{\epsilon}\right)$ rounds of communication when the game is fully decoupled. A major drawback of GDA and several other common methods in this setting is that poor conditioning in one of the players (large $\kappa_u,\kappa_v$) significantly increases the number of communication rounds. In contrast, our method overcomes this issue by utilizing local steps, effectively eliminating the dependency on players' conditioning.

\begin{corollary}
    With the choice of $\gamma = \frac{\bar \mu}{RL^2}$ if the game is weakly coupled we get:
    \[
        \E \bigl[\Norm{\xx_K^R-\xx^\star}^2\bigr]
        \le
        D^2 \exp\Bigl( - (1-4\kappa_c)R \Bigr)+  \frac{8\bar \sigma^2 \bar \mu\kappa_c }{R\mu L^2(1-4\kappa_c)}. 
    \]
Consequently, to reach $\E [\Norm{\xx_K^R-\xx^\star}^2] \leq \epsilon$, it suffices to perform $R = \max\{\frac{1}{4-\kappa_c} \log (\frac{2D^2}{\epsilon}),\frac{16 \bar \mu \kappa_c \bar \sigma^2}{\mu L^2(1-4\kappa_c)\epsilon}\}$ rounds with $K = \frac{L^2}{\mu \bar \mu}\log(\frac{4}{\kappa_c})$. Moreover, with the choice of $\gamma = \min\bigl\{\frac{\mu}{32KL^2},\frac{1}{\mu KR} \log(\max\{2, \frac{\mu^2 D^2}{\bar \sigma^2} K R\})\bigr\}$ if the game is not weakly coupled we get: 
    \begin{align*}
        \E \bigl[\Norm{\xx_K^R-\xx^\star}^2\bigr] \leq D^2 \exp\Bigl( - \frac{\mu^2}{2L^2} R \Bigr) +  \frac{\bar \sigma^2}{\mu^2KR}.
    \end{align*}
  
    Consequently, to reach $\E [\Norm{\xx_K^R-\xx^\star}^2] \leq \epsilon$, it suffices to perform $R = \frac{2L^2}{\mu^2} \log(\frac{D^2}{\epsilon})$ with $K = \frac{2 \bar \sigma^2}{\mu^2 \epsilon}$. 

\end{corollary}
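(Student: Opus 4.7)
The plan is to deduce this corollary as a direct consequence of the two bounds in Theorem~\ref{theorem: decoupled sgda for two player games} by substituting the prescribed stepsizes and then inverting the resulting inequalities to extract the round/local-step complexities. The proof is mechanical (stepsize tuning) rather than conceptual; the only real work is (a) checking that each prescribed $\gamma$ satisfies the admissibility condition of the corresponding theorem branch, and (b) balancing the bias and variance terms.

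For the weakly coupled case, I would substitute $\gamma = \bar\mu/(RL^2)$ into the first bound of the theorem. Since $\bar L \le L$ by Lemma~\ref{lemma: relation between different Ls}, the theorem's requirement $\gamma \le \bar\mu/\bar L^2$ is automatically satisfied for any $R \ge 1$. The exponential term is independent of $\gamma$ and therefore stays as $D^2 \exp(-(1-4\kappa_c)R)$, while the noise term becomes $\frac{8\kappa_c \bar\sigma^2 \bar\mu}{R\,\mu L^2 (1-4\kappa_c)}$, which matches the expression in the corollary. To force the total error below $\epsilon$, I would require each term to be at most $\epsilon/2$: the bias bound yields $R \ge \tfrac{1}{1-4\kappa_c}\log(2D^2/\epsilon)$ and the variance bound yields $R \ge \tfrac{16\bar\mu\kappa_c \bar\sigma^2}{\mu L^2 (1-4\kappa_c)\epsilon}$. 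The accompanying constraint $K \ge \tfrac{1}{\gamma\mu}\log(4/\kappa_c)$ from the theorem, taken together with the underlying admissibility $\gamma \le \bar\mu/L^2$, reduces to the stated $K \asymp \tfrac{L^2}{\mu\bar\mu}\log(4/\kappa_c)$.

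For the non-weakly coupled case, I would apply the standard SGD-style tuning argument to the bound $D^2 \exp(-\gamma\mu KR/2) + 2\bar\sigma^2 \gamma/\mu$. The idea is: if the admissibility cap $\gamma \le \min\{\mu/L^2,\mu/(KLL_c)\}$ is not active, then choosing $\gamma = \tfrac{1}{\mu KR}\log\!\big(\max\{2,\mu^2 D^2 KR/\bar\sigma^2\}\big)$ makes the exponential term of the same order as the variance term $\bar\sigma^2/(\mu^2 KR)$, producing the claimed $\tilde{\mathcal{O}}(\bar\sigma^2/(\mu^2 KR))$ scaling. If instead the cap $\gamma = \mu/(32KL^2)$ is active (the stronger of the two caps, using $L_c \le L$), plugging in gives the exponential term $D^2 \exp(-\mu^2 R/(64 L^2))$, from which inversion yields $R = \mathcal{O}((L^2/\mu^2)\log(D^2/\epsilon))$; the noise term then dictates $K = \mathcal{O}(\bar\sigma^2/(\mu^2 \epsilon))$ after the bias is already $\le \epsilon/2$.

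The main obstacle is the bookkeeping in the weakly coupled branch: the condition $K \ge \tfrac{1}{\gamma\mu}\log(4/\kappa_c)$ couples $K$ and $R$ through the stepsize, so I would need to either (i) fix $K$ first and then set $\gamma$ independently of $R$ (e.g.\ $\gamma = \bar\mu/L^2$) so that $K$'s bound does not inflate with $R$, or (ii) pick $\gamma$ as the minimum of $\bar\mu/L^2$ and a round-dependent quantity, and verify that the round-dependent choice still yields the stated variance term. The other delicate point is identifying which branch of the $\min$ defining $\gamma$ is active in the non-weakly coupled case across different regimes of $\epsilon$, $R$, and $K$; this is handled by the standard two-case analysis on whether the cap or the logarithmic choice dominates.
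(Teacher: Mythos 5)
Your proposal matches the paper's (implicit) derivation exactly: the corollary is obtained by substituting the tuned stepsizes into the two branches of Theorem~\ref{theorem: decoupled sgda for two player games}, verifying admissibility via $\bar L \le L$ and $L_c \le L$, and balancing the bias and variance terms at $\epsilon/2$ each. Your observation that the requirement $K \ge \frac{1}{\gamma\mu}\log(4/\kappa_c)$ with $\gamma = \bar\mu/(RL^2)$ actually forces $K$ to grow linearly in $R$ --- whereas the stated $K = \frac{L^2}{\mu\bar\mu}\log(4/\kappa_c)$ corresponds to the round-independent choice $\gamma = \bar\mu/L^2$ --- correctly identifies an inconsistency in the corollary as printed rather than a gap in your own argument.
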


\textbf{Weakly and Non-Weakly Coupled Games.} The main property of our rate for weakly coupled games is the \textbf{absence} of \(\kappa_u\), \(\kappa_v\), or \(\kappa\), which can be very large even if the player interaction is low. We are able to capture this effect due to differentiating between different smoothness parameters. In addition, mathematically identifying the regime in which we can benefit from low interaction and achieve communication acceleration is another important aspect of our work. This stands in contrast to most popular methods, whose communication complexity always depends on the quantities $\kappa_u,\kappa_v$ or $\kappa$ (see Table \ref{table:comparison_methods}), which can be overly pessimistic, especially in the weakly coupled regime. Moreover, for non-weakly coupled games, our rate recovers the standard \(\mathcal{O}(\kappa^2 \log(1/\epsilon))\) rate for GDA from \cite{zhang2022near,azizian2020accelerating}. 

\textbf{Noise Term.} Our method does not depend on \(\sigma_{uv}\) or \(\sigma_{vu}\), allowing them to be arbitrarily large. In contrast, existing federated minimax methods assume these quantities are bounded, which may not hold in many real-world settings. In the weakly coupled regime, \(\bar{\sigma}\) is multiplied by \(\kappa_c\), a small quantity, reducing the effect of noise. In the non-weakly coupled regime, we can mitigate noise by taking more local steps. 
Herein, we state the communication complexity of our method and compare it with GDA as the baseline.

\begin{figure*}[th]
    \centering
    \includegraphics[width=1\linewidth]{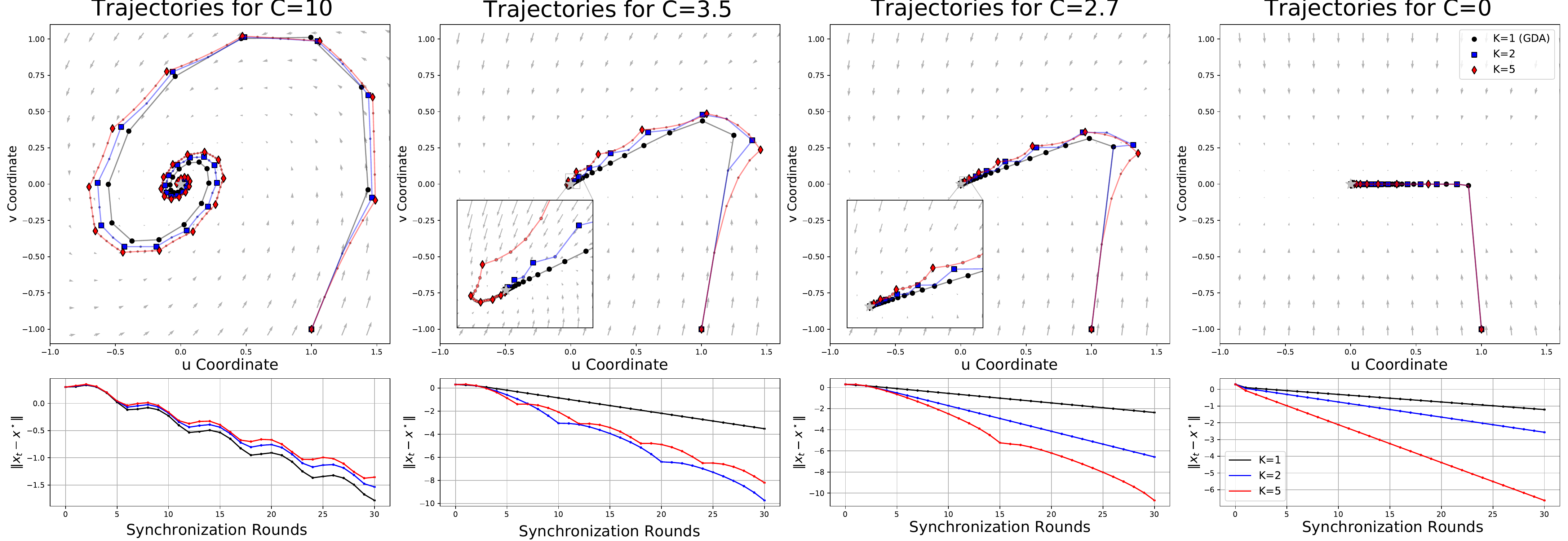}
    \vskip-.5em
    \caption{
    \textbf{Trajectories (top row) and distance to equilibrium over synchronization rounds (bottom row) of GDA ($K=1$) and Decoupled SGDA with $K=\{2,5\}$ on the \eqref{eq:quadratic_game} problem ($d=2$).}
    $\CC$ in \eqref{eq:quadratic_game} is a constant here---the larger, the stronger the interactive term. \textbf{Left-to-right:} decreasing the constant $c\in \{ 10, 3.5, 2,7, 0\}$.
    The markers denote the local steps and star the solution.
    See \S~\ref{sec:experiments} for discussion.
    }
    \label{fig:trajectories}
\end{figure*}

\begin{corollary}\label{corollary: comm complexity of decoupled gda vs gda}
     For any $K \geq  \frac{1}{\gamma \mu} \log \left(\frac{4}{\kappa_c}\right)$, after running Decoupled SGDA on a weakly coupled game, we have the following communication complexity in order to achieve $\epsilon$ accuracy in the noiseless setting: 
\newcommand{\thickbox}[1]{%
    \setlength{\fboxrule}{1pt}
    \boxed{#1}%
}
\begin{equation*}\label{eq:comm_complexity}
    \stackrel{\text{Decoupled GDA}}{\thickbox{\mathcal{O}\Bigl(\frac{1}{1-4\kappa_c} \log \frac{1}{\epsilon}\Bigr)}} \,\, \text{vs.} \,\, \stackrel{\text{GDA}}{\thickbox{\mathcal{O}\Bigl( (\kappa_u + \kappa_v + \kappa_{uv}^2) \log \frac{1}{\epsilon}\Bigr)}}
\end{equation*}
Moreover, Decoupled SGDA in the weakly coupled regime \textbf{always} has better communication complexity compared to the baseline GDA. In other words, 
$\frac{1}{1-4\kappa_c} \leq \kappa_u + \kappa_v + \kappa_{uv}^2$.

\end{corollary}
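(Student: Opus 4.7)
The first half of the corollary is essentially a direct rearrangement of Theorem~\ref{theorem: decoupled sgda for two player games}. In the noiseless setting ($\bar\sigma=0$), the weakly-coupled bound reduces to $\E\bigl[\|\xx_K^R-\xx^\star\|^2\bigr] \leq D^2 \exp\bigl(-(1-4\kappa_c)R\bigr)$, and demanding this be at most $\epsilon$ immediately yields $R = \mathcal{O}\bigl(\frac{1}{1-4\kappa_c}\log(1/\epsilon)\bigr)$ rounds. The matching communication complexity $\mathcal{O}\bigl((\kappa_u+\kappa_v+\kappa_{uv}^2)\log(1/\epsilon)\bigr)$ for GDA is quoted from \cite{lee2024fundamental} as recorded in Table~\ref{table:comparison_methods}.

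The heart of the corollary is the algebraic inequality $\frac{1}{1-4\kappa_c} \leq \kappa_u+\kappa_v+\kappa_{uv}^2$ for every weakly-coupled game. My plan is to first record two structural facts: (i) $\kappa_u,\kappa_v\geq 1$, since $L_u\geq\mu_u$ and $L_v\geq\mu_v$ for any smooth strongly convex/concave function, and (ii) with the canonical scaling $\alpha_u=\alpha_v=1$ one has $\bar\mu = \min\{\mu_u,\mu_v\}\leq\sqrt{\mu_u\mu_v}$, so $\kappa_c \geq \kappa_{uv}$. These facts control both sides of the desired inequality.

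Next I would split into two sub-regimes. When $\kappa_c \leq 1/8$, the estimate $\frac{1}{1-4\kappa_c}\leq 2 \leq \kappa_u+\kappa_v$ closes the argument immediately. For the complementary range $1/8 < \kappa_c \leq 1/4$ the left-hand side can become large, and the clean move is to invoke the second branch of Theorem~\ref{theorem: decoupled sgda for two player games}: Decoupled SGDA also enjoys the $\mathcal{O}\bigl(\kappa^2\log(1/\epsilon)\bigr)$ rate, so its actual complexity is $\min\bigl\{\frac{1}{1-4\kappa_c},\kappa^2\bigr\}$ as reflected in Table~\ref{table:comparison_methods}. It therefore suffices to establish the deterministic inequality $\kappa^2 \leq \kappa_u+\kappa_v+\kappa_{uv}^2$ in this regime, which combined with the first sub-regime yields the claim.

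The main obstacle is this last algebraic step: bounding $\kappa^2 = L^2/\mu^2$ by the block-wise quantities. A naive triangle-inequality split of $F$ only gives $L^2 \lesssim L_u^2+L_v^2+L_c^2$, which is quadratic in $L_u,L_v$ and hence too loose. The fix is a sharper block analysis of $F$ that exploits the skew-symmetric cancellation of the off-diagonal Jacobian blocks in the monotonicity estimate (as in \cite{lee2024fundamental}), recovering the tight ratio $L^2/\mu^2 \leq \kappa_u+\kappa_v+\kappa_{uv}^2$. Combining the two sub-regimes and substituting back into the round complexities produces the stated comparison.
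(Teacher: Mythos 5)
Your derivation of the $\mathcal{O}\bigl(\tfrac{1}{1-4\kappa_c}\log(1/\epsilon)\bigr)$ round complexity from Theorem~\ref{theorem: decoupled sgda for two player games} in the noiseless case is fine, as is your first sub-regime: for $\kappa_c \le 1/8$ one indeed has $\tfrac{1}{1-4\kappa_c}\le 2 \le \kappa_u+\kappa_v$ because $\kappa_u,\kappa_v\ge 1$. (For what it is worth, the paper itself states this corollary without supplying a proof in the appendix, so there is no "official" argument to compare against.)

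The second sub-regime is where your plan breaks. First, pivoting to the $\kappa^2$ branch proves a different statement: the corollary asserts the literal inequality $\tfrac{1}{1-4\kappa_c}\le\kappa_u+\kappa_v+\kappa_{uv}^2$, not that $\min\bigl\{\tfrac{1}{1-4\kappa_c},\kappa^2\bigr\}$ is dominated by the right-hand side. Second, and fatally, the inequality $\kappa^2\le\kappa_u+\kappa_v+\kappa_{uv}^2$ that you hope to extract from a "sharper block analysis" is false: take a fully decoupled game with $\mu_u=\mu_v=1$ and $L_u=L_v=10$, so that $\kappa\ge\kappa_u=10$, hence $\kappa^2\ge 100$, while $\kappa_u+\kappa_v+\kappa_{uv}^2=20$. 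Restricting to $1/8<\kappa_c\le 1/4$ does not rescue this, since that condition only constrains $L_{uv},L_{vu}$ and leaves $\kappa_u,\kappa_v$ arbitrary (e.g.\ add $L_{uv}=L_{vu}=1/5$ to the same example to get $\kappa_c=1/5$). The bound $\kappa_u+\kappa_v+\kappa_{uv}^2$ of \citet{lee2024fundamental} is an \emph{improvement} over the naive $\kappa^2$ rate for GDA, so no refinement of the Lipschitz analysis can yield $\kappa^2\le\kappa_u+\kappa_v+\kappa_{uv}^2$. You should also be aware that the target inequality itself is delicate near the boundary of the weakly coupled regime: since $\kappa_{uv}=L_c/\sqrt{\mu_u\mu_v}\le L_c/\bar\mu=\kappa_c\le 1/4$, the right-hand side is at most $\kappa_u+\kappa_v+1/16$; taking $f=\tfrac12\Norm{\uu}^2-\tfrac12\Norm{\vv}^2+c\,\langle\uu,\vv\rangle$ with $c\uparrow 1/4$ sends the left-hand side to infinity while the right-hand side stays below $2+1/16$. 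Any complete proof must therefore either stay away from a neighbourhood of $\kappa_c=1/4$ or impose an additional relation between $\kappa_c$ and $\kappa_u+\kappa_v$; the two-case strategy as proposed cannot close the gap.
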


Table \ref{table:comparison_methods} compares our method with other first-order methods in terms of \textbf{communication complexity} in both the fully decoupled and weakly coupled regimes. It is clear that in the fully decoupled regime, our method outperforms all other methods. Furthermore, it is expected to compare our method with GDA by considering it as the baseline because our method uses GD local updates (and not updates using EG or momentum). In Corollary \ref{corollary: comm complexity of decoupled gda vs gda}, we stated that we always have a better complexity compared to GDA in the weakly coupled regime. However, we can show that under a slightly stronger assumption, our method achieves better communication complexity than the optimal first-order method for solving SCSC games.

\begin{figure*}[!tb]
    \centering
    \includegraphics[width=0.9\linewidth]{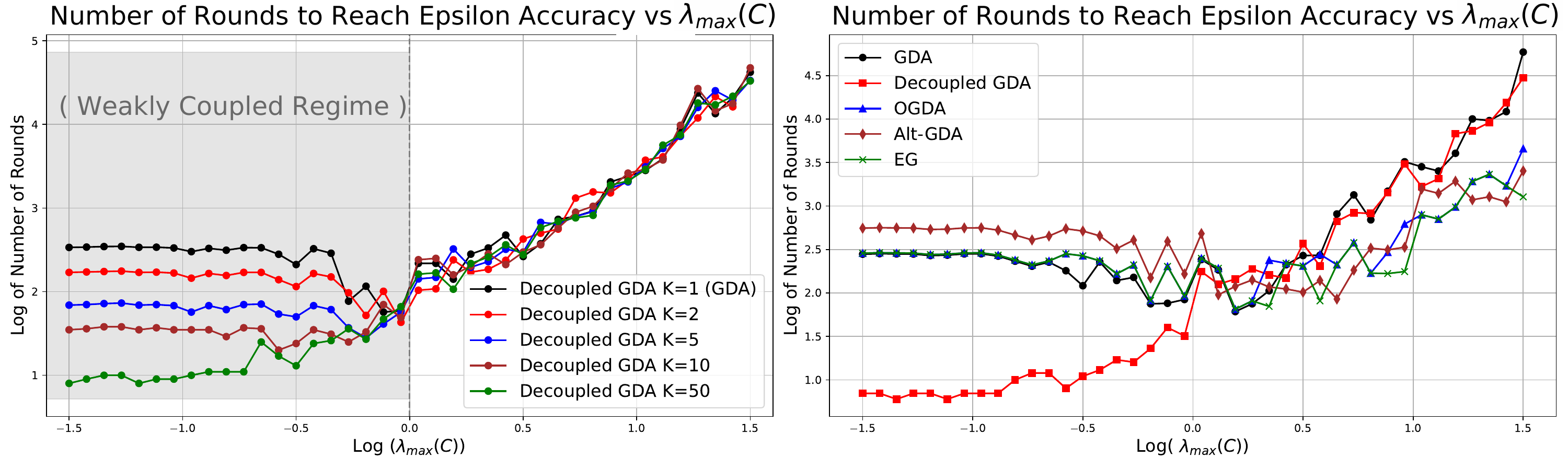}
    \vskip-.5em
    \caption{
    \textbf{Number of rounds (log-scale; lower is better) to reach epsilon accuracy for varying \(\lambda_{\text{max}}(\CC)\) in \eqref{eq:quadratic_game}.} \textbf{Left:} Decoupled GDA with different \(K\)-values and GDA ($K=1$). \textbf{Right:} comparison between GDA, Decoupled GDA, Optimistic GDA~\citep{popov1980modification}, Alternating GDA and Extragradient~\citep{korpelevich1976extragradient}.
    }
    \label{fig:rounds_vs_accuracy}
\end{figure*}

\begin{corollary}\label{corollary: beating FOAM}
    For any SCSC zero-sum minimax game with coupling degree $\kappa_c \leq \frac{1}{2} \sqrt{1-\frac{1}{\sqrt{\kappa_u \kappa_v}}}$, our method achieves a better communication complexity than FOAM which is the optimal first-order method for solving SCSC games. In another word, if $\frac{1}{1-4\kappa_c} \ll \sqrt{\kappa_u \kappa_v} $, our method achieves significant communication acceleration compared to FOAM. 
\end{corollary}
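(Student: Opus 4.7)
The plan is to compare the two communication complexities directly. By Corollary~\ref{corollary: comm complexity of decoupled gda vs gda}, Decoupled SGDA requires $\mathcal{O}\bigl(\tfrac{1}{1-4\kappa_c}\log\tfrac{1}{\epsilon}\bigr)$ rounds in the weakly coupled regime, whereas FOAM~\cite{kovalev2022first}---which attains the lower bound for first-order methods on SCSC games---requires $\Omega\bigl(\sqrt{\kappa_u\kappa_v}\log\tfrac{1}{\epsilon}\bigr)$ rounds. Since the $\log(1/\epsilon)$ factors coincide, the entire statement reduces to establishing the pointwise bound $\tfrac{1}{1-4\kappa_c} \lesssim \sqrt{\kappa_u\kappa_v}$ under the stated hypothesis on $\kappa_c$, with the gap between the two sides widening as $\kappa_u\kappa_v$ grows.

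First I would check that the hypothesis $\kappa_c \leq \tfrac{1}{2}\sqrt{1 - 1/\sqrt{\kappa_u\kappa_v}}$ is consistent with the weakly coupled regime $\kappa_c < \tfrac{1}{4}$, so that Theorem~\ref{theorem: decoupled sgda for two player games} and its corollary indeed apply. This is essentially immediate, since the right-hand side of the hypothesis never exceeds $\tfrac{1}{2}$ and approaches $\tfrac{1}{2}$ only in the limit $\kappa_u\kappa_v \to \infty$, where $\sqrt{\kappa_u\kappa_v}$ is itself enormous. The second step is purely algebraic: squaring the hypothesis yields $4\kappa_c^2 \leq 1 - 1/\sqrt{\kappa_u\kappa_v}$, or equivalently $\sqrt{\kappa_u\kappa_v}\,(1 - 4\kappa_c^2) \geq 1$. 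Writing $1 - 4\kappa_c^2 = (1-2\kappa_c)(1+2\kappa_c)$ and bounding $1+2\kappa_c \leq 3/2$ on the weakly coupled regime then gives $\sqrt{\kappa_u\kappa_v} \gtrsim \tfrac{1}{1-2\kappa_c}$, which delivers the target inequality up to an absolute constant absorbed into the $\mathcal{O}(\cdot)$ notation. The strict ``$\ll$'' in the second sentence of the corollary follows by observing that whenever the hypothesis holds with room to spare, $\sqrt{\kappa_u\kappa_v}$ dominates $\tfrac{1}{1-4\kappa_c}$ by a growing factor, so that FOAM's round complexity is asymptotically much larger than Decoupled SGDA's.

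No serious obstacle is expected; the proof is essentially a short algebraic comparison grafted onto the already-established complexity of Decoupled SGDA. The only delicate point is the mismatch between $1-4\kappa_c^2$, which arises naturally after squaring the hypothesis, and the $1-4\kappa_c$ appearing in the Decoupled SGDA rate. Handling this cleanly requires tracking the uniformly bounded multiplicative factor $1+2\kappa_c \in [1, 3/2]$ on the weakly coupled regime, which is harmless for the order-of-magnitude comparison but is the one thing I would want to state carefully in the write-up.
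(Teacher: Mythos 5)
Your overall strategy---reducing the claim to the pointwise inequality $\frac{1}{1-4\kappa_c}\le\sqrt{\kappa_u\kappa_v}$ and trying to extract it algebraically from the hypothesis---is the natural one, but the central algebraic step fails. Squaring the hypothesis correctly gives $\sqrt{\kappa_u\kappa_v}\ge\frac{1}{1-4\kappa_c^2}$, and the factorization $1-4\kappa_c^2=(1-2\kappa_c)(1+2\kappa_c)$ with $1+2\kappa_c\le 3/2$ yields $\sqrt{\kappa_u\kappa_v}\ge\frac{2}{3}\cdot\frac{1}{1-2\kappa_c}$. However, $\frac{1}{1-2\kappa_c}$ is \emph{not} within an absolute constant of the quantity you need, $\frac{1}{1-4\kappa_c}$: the ratio $\frac{1-2\kappa_c}{1-4\kappa_c}$ diverges as $\kappa_c\to 1/4$, so this is not a ``harmless multiplicative factor'' but the entire content of the claim. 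Concretely, take $\kappa_u=\kappa_v=1.3$ (so $\sqrt{\kappa_u\kappa_v}=1.3$) and $\kappa_c=0.24$: the hypothesis reads $0.24\le\tfrac12\sqrt{1-1/1.3}\approx 0.2402$ and holds, yet $\frac{1}{1-4\kappa_c}=25\gg 1.3=\sqrt{\kappa_u\kappa_v}$. Thus the inequality you claim to have derived is false under the stated hypothesis, and no absolute constant can absorb the gap. To conclude $\frac{1}{1-4\kappa_c}\le\sqrt{\kappa_u\kappa_v}$ one needs the strictly stronger condition $\kappa_c\le\frac14\bigl(1-\frac{1}{\sqrt{\kappa_u\kappa_v}}\bigr)$; the condition actually written in the corollary is the one that would arise if the round complexity were $\frac{1}{1-4\kappa_c^2}\log\frac{1}{\epsilon}$, so either the hypothesis must be tightened or the comparison must be made against that quantity instead.

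A secondary gap: you assert that the hypothesis is consistent with the weakly coupled regime because its right-hand side never exceeds $\tfrac12$, but $\tfrac12>\tfrac14$, so this does not establish $\kappa_c\le\tfrac14$. For large $\kappa_u\kappa_v$ the hypothesis admits $\kappa_c\in(\tfrac14,\tfrac12)$, where the bound of Corollary~\ref{corollary: comm complexity of decoupled gda vs gda} does not apply at all; weak coupling must be imposed as a separate standing assumption rather than deduced. (The paper supplies no proof of this corollary, so there is no argument of its own to compare against; but as written your derivation does not close.)
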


Corollary \ref{corollary: beating FOAM} shows that our method can even outperform the optimal first-order method in terms of communication rounds. The assumption $\kappa_c \leq \frac{1}{2} \sqrt{1 - \frac{1}{\sqrt{\kappa_u \kappa_v}}}$ can recover the weakly coupled condition if $\max\{\kappa_u, \kappa_v\} \to \infty$. Although the rate $\mathcal{O}(\sqrt{\kappa_u \kappa_v}\log(1/\epsilon))$ is optimal and matches the lower bound from \cite{zhang2022lower}, large condition number of players ($\kappa_u, \kappa_v$) can increase the communication overhead significantly. 

\begin{remark}
    \label{rem:yoon}
The recent work of \citet{yoon2025multiplayer} considers a similar method in the general $N$-player setting (we also address this case in \Cref{app:n_player_setting}) and establishes a linear convergence rate of 
$
\mathcal{O}\left( \left( \frac{\ell + L_{\max} \sqrt{\ell/\mu}}{\mu} \right) \log(1/\epsilon) \right),
$
where $\ell$ denotes the star-cocoercivity constant and $L_{\max} = \max\{L_1, \dots, L_N\}$ is the largest smoothness parameter among the players. For the class of $\mu$-strongly monotone and $L$-Lipschitz continuous operators considered in our work, it holds that $L_{\max} \sqrt{\ell/\mu} \leq \ell$ (see \citealt{yoon2025multiplayer}, Appendix D), which simplifies their rate to 
$
\mathcal{O}\left( \frac{\ell}{\mu} \log(1/\epsilon) \right).
$
In general, one has the bound $\ell \leq \frac{L^2}{\mu}$~\citep{facchinei2003finite}, which implies a worst-case convergence rate of $\mathcal{O}(\kappa^2 \log(1/\epsilon))$, offering no communication acceleration. While acceleration is theoretically possible when $\ell$ is small, the improvement is limited: since $L \leq \ell$, the best achievable rate is $\mathcal{O}(\kappa \log(1/\epsilon))$. In the fully decoupled setting, this matches the convergence rate of GDA~\citep{lee2024fundamental}, which, as discussed earlier, is suboptimal. By contrast, our method requires only a single round of communication in this regime (see \Cref{corr: decoupled}), demonstrating a significant advantage.
A key distinction between the two works is in how interaction between players is modeled. Our analysis introduces and leverages the coupling parameter $L_c$, which more directly captures the interaction structure and leads to sharper communication complexity bounds.
\end{remark}

\section{Experiments} \label{sec:experiments}

In this section, we evaluate the empirical performance of Decoupled GDA.
For all experiments described in this section, we provide additional implementation details (and hyperparameters) in \Cref{sec:experimental_setup}.

\subsection{Quadratic Games}\label{subsec:experiment1}
Herein, we consider the following problem class:
\begin{equation}\label{eq:quadratic_game}
    \min_{\uu} \ \max_{\vv}  \quad \frac{1}{2}  \langle  \uu, \AAA\uu \rangle - \frac 1 2 \langle  \vv, \BB\vv \rangle  + \langle  \uu, \CC\vv \rangle \,,
\end{equation}
where $\uu, \vv \in \R^{\frac{d}{2}}$, and $\AAA,\BB,\CC$ are  $\frac{d}{2}\times \frac{d}{2}$ positive definite matrices. We will use varying $\CC$ to control the players' interaction.

Figure~\ref{fig:trajectories} illustrates the performances of Decoupled SGDA on the \eqref{eq:quadratic_game} for varying numbers of local steps $K$ and different intensities of the interactive term of \eqref{eq:quadratic_game}.
The results show that as the interactive term weakens, Decoupled SGDA converges more quickly than the GDA baseline ($K=1$). Additionally, with a stronger interactive term, increasing the number of local steps $K$ leads to faster convergence for the same number of synchronization rounds.
Figure~\ref{fig:rounds_vs_accuracy} depicts the performances over a spectrum of payoff functions controlled by the constant matrix  $\CC$ in \eqref{eq:quadratic_game}. 
In the Weakly Coupled Game regime, highlighted by shading, Decoupled SGDA outperforms the baseline GDA.
In Figure~\ref{fig:rounds_vs_accuracy} (right), we compare it with other optimization methods, demonstrating that Decoupled SGDA achieves similar results with significantly fewer communication rounds in the weakly coupled regime.

\subsection{Communication Efficiency For Non-convex Functions} \label{subsec:experiment2}%
While our theoretical focus was on SCSC games, in this section, we explore if our insights extend to broader problem instances.
We focus on a \emph{Toy GAN} non-convex game as follows:
\begin{equation}
\begin{aligned}
\label{eq:ToyGan}
&\min_{\uu} \max_{\vv} \bigl\{ \mathbb{E}_{\phi \sim \mathcal{N}(0, \Sigma)}[\phi^T \vv 
\phi] - \\ &\mathbb{E}_{\phi\sim \mathcal{N}(0, 1)}[(\uu \phi)^T \vv (\uu \phi)] + \lambda_1 \|\uu\|^2 - \lambda_2 \|\vv\|^2 \bigr\} \,,
\end{aligned}
\end{equation}
where $\uu\in \R^{d_1}$, $\vv\in \R^{d_2}$. Figure \ref{fig:nonconvex} shows the smallest gradient norm (lower is better) each algorithm can achieve for a fixed number of communication rounds, with varying values of $1/\lambda$. As $\lambda$ decreases, the regularization terms dominate, making the game less interactive (similar to the weakly coupled regime). When $\lambda$ increases, reducing interaction, Decoupled GDA achieves a much lower gradient norm with the same number of communication rounds. This demonstrates that Decoupled GDA efficiently solves non-convex problems in settings analogous to the weakly coupled regime by leveraging local updates to reduce communication. This experiment highlights the method’s capabilities beyond SCSC games. The trajectory of Decoupled GDA iterations for this non-convex minimax problem can be found in Appendix~\ref{subsec: additional experiments non-convex}. 
\begin{figure*}[tb]
    \centering
    \includegraphics[width=0.9\linewidth]{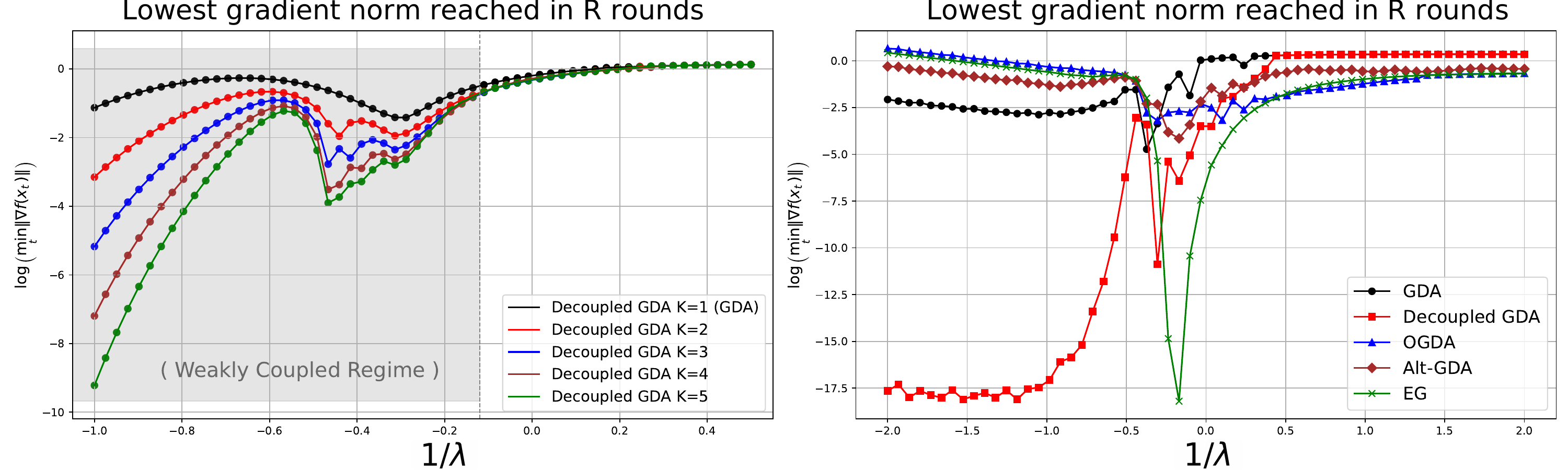}
    \vskip-.5em
    \caption{ 
    \textbf{Lowest gradient norm reached after a fixed number of communication rounds, for varying $1/\lambda$ in \eqref{eq:ToyGan}}.
    \textbf{ Left:} Effect of $K$. \textbf{Right:} different optimization methods, GDA, Decoupled GDA, Optimistic GDA~\citep{popov1980modification}, ALT--alternating GDA and Extragradient~\citep{korpelevich1976extragradient}.
    See \S~\ref{sec:experiments} for discussion.
    }
    \label{fig:nonconvex}
\end{figure*}
\noindent\textbf{Decoupled SGDA with gradient approximation.}
\label{subsec:experiment3}
Herein, we compare \emph{Decoupled SGDA} with Federated Minimax, aka 
 \eqref{eq:localGDA}.
We study environments with gradient oracles with \emph{unbalanced noise} unbalanced noise. Each player has access to a gradient oracle that provides low-variance noise for their own gradients but high-variance noise for the remaining players. In the quadratic game introduced earlier, each oracle adding zero-mean Gaussian noise to the gradient. The variance differs between gradients for a player’s own parameters (diagonal variance) and those of others (off-diagonal variance). Equation \ref{equation: noise} formalizes this. In both experiments, we kept the diagonal variance ($\sigma^2_{uu}, \sigma^2_{vv}$) constant, while varying the off-diagonal variance ($\sigma^2_{vu}$, $\sigma^2_{uv}$) in the second experiment. Figure~\ref{fig:noise_comparison} compares \emph{Decoupled SGDA} and \emph{Local SGDA}, the latter being the most commonly used method for federated minimax problems~\citep{deng2021local}. It depicts the smallest gradient norm each algorithm achieves within a fixed number of communication rounds across different scenarios. The left plot demonstrates how both methods perform in games with varying levels of interaction. When the interaction is weaker, Decoupled SGDA achieves significantly lower gradient norms with the same number of communication rounds. The right plot highlights the effect of noise variance, showing that while high noise negatively impacts Local SGDA, it has minimal to no effect on Decoupled SGDA. In the presence of imbalanced noise, the results suggest that switching from local SGDA to Decoupled SGDA is beneficial, even for highly interactive games. 
\subsection{Communication Efficiency in GAN Training.} \label{subsec:experiment4} 
Figure~\ref{fig:gan_training} compares Decoupled SGDA with baseline methods in terms of FID score over communication rounds. The results show that Decoupled SGDA converges faster and requires fewer communication rounds than standard GDA and its variants. This advantage is particularly evident on the CIFAR-10 and SVHN datasets, where increasing the number of local steps (K) leads to lower FID scores. These findings highlight the efficiency of our approach in reducing communication overhead while maintaining strong performance in complex, non-convex tasks such as GAN training.
\begin{figure}[tt]
    \centering
    \includegraphics[width=1\linewidth]{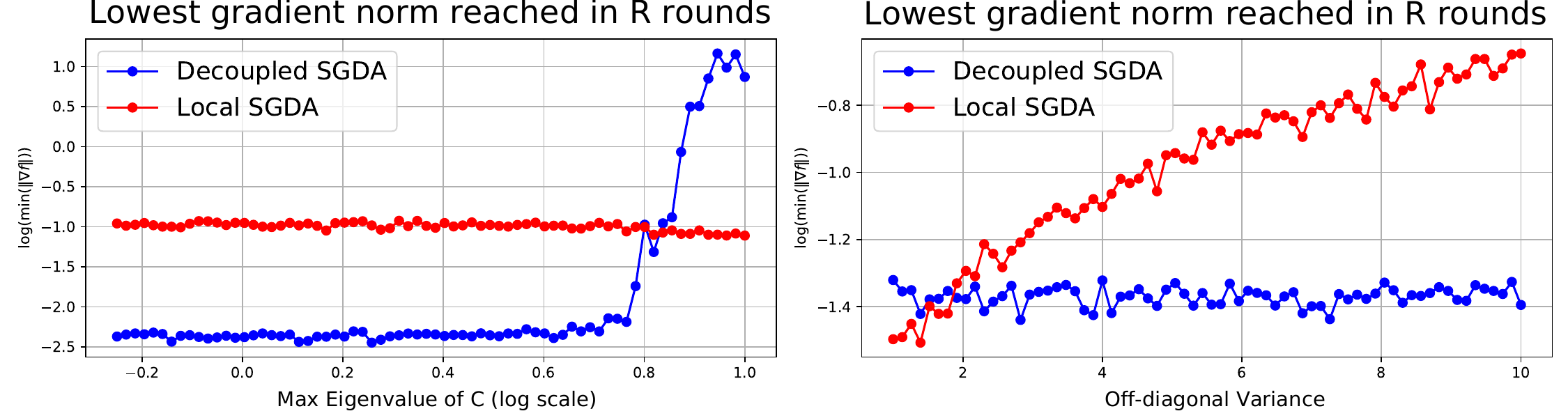}
    \vskip-.5em
    \caption{
    \textbf{Lowest gradient norm reached by Decoupled SGDA and Local SGDA for a \textbf{fixed number of communication rounds} using unbalanced noisy gradient oracles. } \textbf{Left:} \emph{Decoupled SGDA} vs.\ \emph{Federated Minimax} for varying values of $\norm{\CC}$ and fixed variance. 
    \textbf{Left:} \emph{Decoupled SGDA} vs.\ \emph{Local SGDA} for varying levels of off-diagonal variance noise ($\sigma_{uv}$, $\sigma_{vu}$). See~\ref{sec:experiments}.}
    \label{fig:noise_comparison}
\end{figure} 
\begin{figure}[t]
    \centering
    \includegraphics[width=1\linewidth]{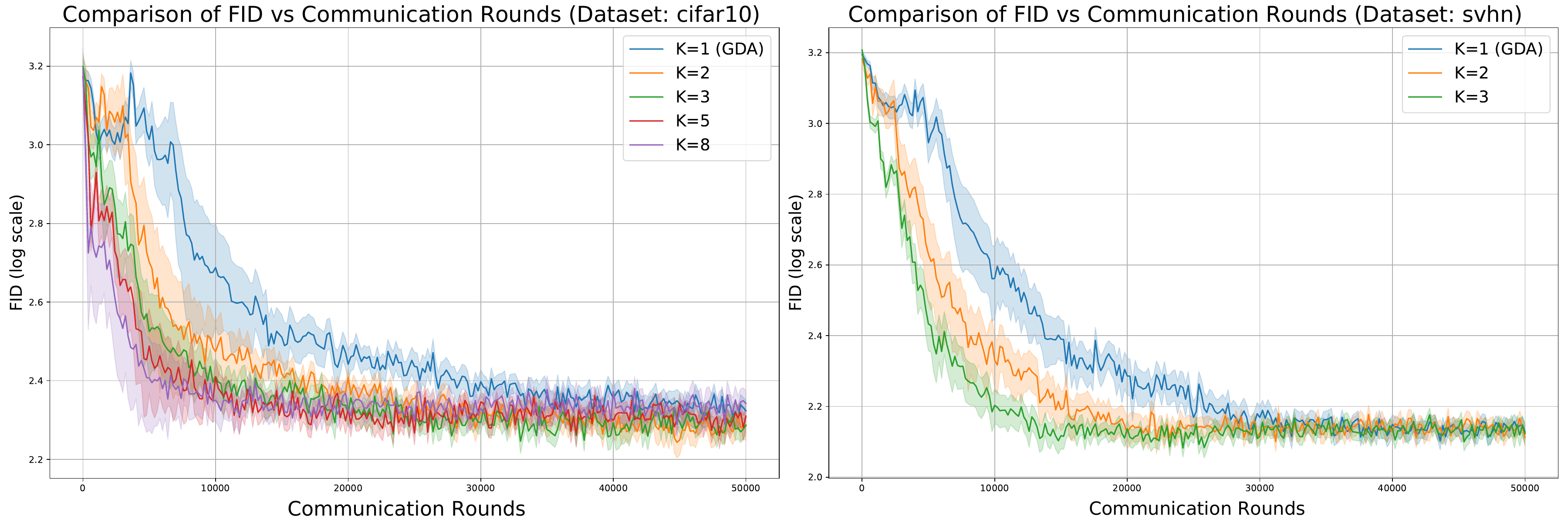}
    \vskip-.5em
    \caption{\textbf{ $y$-axis: FID scores (log scale; lower is better) during GAN training, versus $x$-axis communication rounds.} \textbf{ Left:} results on the \emph{CIFAR-10}~\citep{cifar10} dataset. \textbf{Right:} results on the \emph{SVHN}~\citep{svhn} dataset.}
    \label{fig:gan_training}
\end{figure}

\section{Conclusion}

We proposed \emph{Decoupled SGDA}, an effective optimization method for games with intermittent strategy communication, particularly suited for settings with weak interaction between players or imbalanced noise levels. Our theoretical and empirical results show that Decoupled SGDA outperforms traditional methods like Local SGDA in terms of communication efficiency and robustness, and extends beyond strongly-convex strongly-concave (SCSC) games to non-convex settings. Its adaptability to varying interaction and noise makes it a valuable tool for federated and decentralized optimization.

Several future directions remain. One could explore varying $K$ across players or adapting other game optimization methods (e.g., Extra-Gradient) to broader settings without requiring strong convexity. Moreover, the decoupled updates naturally lend themselves to privacy-sensitive applications, as players do not need direct access to others' parameters, reducing risks from gradient sharing~\citep[see][and references therein]{zhu2019deep,zhao2020idlg,wei2020framework}. Future work could investigate privacy-preserving extensions and further develop Ghost-SGDA to enhance performance in decentralized and privacy-constrained environments.


\section*{Impact Statement}
This work is purely theoretical with no direct societal impact. However, our findings can improve communication efficiency in decentralized systems, such as federated learning and multi-agent reinforcement learning, by reducing communication complexity.    
\section*{Acknowledgments}
TC was supported by the FAIR (Future Artificial Intelligence Research) project, funded by the NextGenerationEU program within the PNRR-PE-AI scheme (M4C2, Investment 1.3, Line on Artificial Intelligence).

\bibliography{example_paper}
\bibliographystyle{icml2025}

\newpage
\appendix
\onecolumn
\renewcommand{\contentsname}{Table of Contents}
\tableofcontents

\section{Summary of Parameters}
\begin{table}[H]
\centering
\caption{Summary of Symbols for Two-Player Games}
\renewcommand{\arraystretch}{1.5}
\setlength{\arrayrulewidth}{.5mm} 
\setlength{\tabcolsep}{10pt} 
\resizebox{\textwidth}{!}{
\begin{tabular}{|c|c|c|}
\hline
\textbf{Symbol} & \textbf{Definition} & \textbf{Mathematical Definition} \\ \hline
\(L\) & Smoothness parameter for operator \(F(\xx)\) & \(\norm{F(\mathbf{x}) - F(\mathbf{x'})}_* \leq L \norm{\mathbf{x} - \mathbf{x'}}\) \\ \hline
\(\bar L\) & Smoothness parameter for operator \(F_{\bar \xx}(\xx)\) & \(\norm{F_{\bar \xx}(\mathbf{x}) - F_{\bar \xx}(\mathbf{x'})}_* \leq \bar L \norm{\mathbf{x} - \mathbf{x'}}\) \\ \hline
\( L_c\) & Difference between \(F_{\bar \xx}(\xx)\) and $F(\xx)$ & \(\norm{F_{\bar \xx}(\mathbf{x}) - F(\mathbf{x})}_* \leq  L_c \norm{\bar \xx - \mathbf{x}}\) \\ \hline
\(L_u\) & Smoothness parameter with respect to \(\uu\) & \(\norm{\nabla_u f(\mathbf{u}, \mathbf{v}) - \nabla_u f(\mathbf{u}', \mathbf{v})}_{u,*} \leq L_u \norm{\mathbf{u} - \mathbf{u'}}_u\) \\ \hline
\(L_v\) & Smoothness parameter with respect to \(\vv\) & \(\norm{\nabla_v f(\mathbf{u}, \mathbf{v}) - \nabla_v f(\mathbf{u}, \mathbf{v'})}_{v,*} \leq L_v \norm{\mathbf{v} - \mathbf{v'}}_v\) \\ \hline
\(L_{uv}\) & Interaction smoothness parameter & \(\norm{\nabla_u f(\mathbf{u}, \mathbf{v}) - \nabla_u f(\mathbf{u}, \mathbf{v'})}_{u,*} \leq L_{uv} \norm{\mathbf{v} - \mathbf{v'}}_v\) \\ \hline
\(L_{vu}\) & Interaction smoothness parameter for \(v\) with respect to \(u\) & \(\norm{\nabla_v f(\mathbf{u}, \mathbf{v}) - \nabla_v f(\mathbf{u}', \mathbf{v})}_{v,*} \leq L_{vu} \norm{\mathbf{u} - \mathbf{u'}}_u\) \\ \hline
\(\mu_u\) & Strong convexity parameter for \(\uu\) & \(f(\mathbf{u}', \mathbf{v}) \geq f(\mathbf{u}, \mathbf{v}) + \langle \nabla_u f(\mathbf{u}, \mathbf{v}), \mathbf{u}' - \mathbf{u} \rangle + \frac{\mu_u}{2} \norm{\mathbf{u}' - \mathbf{u}}^2_u\) \\ \hline
\(\mu_v\) & Strong concavity parameter for \(\vv\) & \(f(\mathbf{u}, \mathbf{v}') \leq f(\mathbf{u}, \mathbf{v}) + \langle \nabla_v f(\mathbf{u}, \mathbf{v}), \mathbf{v}' - \mathbf{v} \rangle - \frac{\mu_v}{2} \norm{\mathbf{v}' - \mathbf{v}}^2_v\) \\ \hline
\(\bar \mu\) & Strong monotonicity parameter for \(F_{\bar \xx}(\xx)\) & \(\langle F_{\bar \xx}(\mathbf{x}) - F_{\bar \xx}(\mathbf{x'}), \mathbf{x} - \mathbf{x'} \rangle \geq \bar \mu \norm{\mathbf{x} - \mathbf{x'}}^2\) \\ \hline
\(\mu\) & Strong monotonicity parameter for \(F(\xx)\) & \(\langle F(\mathbf{x}) - F(\mathbf{x'}), \mathbf{x} - \mathbf{x'} \rangle \geq \mu \norm{\mathbf{x} - \mathbf{x'}}^2\) \\ \hline
\( \kappa_c\) & Coupling degree of the game & $\frac{L_c}{\bar \mu}$\\ \hline
\end{tabular}
}
\label{tab:two_player_L_mu_terms}
\end{table}

\begin{table}[H]
\centering
\caption{Summary of Symbols for \(N\)-Player Games}
\renewcommand{\arraystretch}{1.5} 
\setlength{\arrayrulewidth}{.5mm} 
\setlength{\tabcolsep}{10pt} 
\resizebox{\textwidth}{!}{
\begin{tabular}{|c|c|c|}
\hline
\textbf{Symbol} & \textbf{Definition} & \textbf{Mathematical Definition} \\ \hline
\(\hat{L}_n\) & Upper bound for diagonal elements \(L_{ii}\) & $\norm{\nabla_n f_n(\xx) - \nabla_n f_n(\xx+\UU_n \hh_n)}_{n,*} \leq \hat{L}_n \Norm{\hh_n}_n$ \\ \hline
\(\bar{L}_n\) & Upper bound for off-diagonal elements \(L_{ij}\) for \(i \neq j\) & $\Norm{\nabla_n f_n(\xx) - \nabla_n f_n( \xx+ {\textstyle\sum_{i\neq n}} \UU_i \hh_i )}_{n,*}  \leq \bar{L}_n \Norm{{\textstyle\sum_{i\neq n}} \UU_i \hh_i}$ \\ \hline
\( L_c\) & Difference between $F(\xx)$ and $F_{\bar\xx}(\xx)$ & \(\norm{F(\mathbf{x}) - F_{\bar \xx}(\mathbf{x})}_* \leq L_c \norm{\mathbf{x} - \bar \xx}\) \\ \hline
\(\bar L\) & Smoothness parameter for operator \(F_{\bar \xx}(\xx)\) & \(\norm{F_{\bar \xx}(\mathbf{x}) - F_{\bar \xx}(\mathbf{x'})}_* \leq \bar L \norm{\mathbf{x} - \mathbf{x'}}\) \\ \hline
\(L\) & Smoothness parameter for operator \(F(\xx)\) & \(\norm{F(\mathbf{x}) - F(\mathbf{x'})}_* \leq L \norm{\mathbf{x} - \mathbf{x'}}\) \\ \hline
\(\bar \mu\) & Strong monotonicity parameter for $F_{\bar \xx}(\xx)$ & $\langle F_{\bar \xx}(\mathbf{x}) - F_{\bar \xx}(\mathbf{x'}), \mathbf{x} - \mathbf{x'} \rangle \geq \bar \mu \norm{\mathbf{x} - \mathbf{x'}}^2$ \\ \hline
\(\mu\) & Strong monotonicity parameter for \(F(\xx)\) & \(\langle F(\mathbf{x}) - F(\mathbf{x'}), \mathbf{x} - \mathbf{x'} \rangle \geq \mu \norm{\mathbf{x} - \mathbf{x'}}^2\) \\ \hline
\(\mu_n\) & Strong convexity parameter for $f_n(\xx)$ & $\langle \nabla_n f_n(\xx) - \nabla_n f_n(\xx + \UU_n \dd_n), \xx^n- \xx'^n \rangle \geq \mu_n \Norm{\mathbf{x}'^n - \mathbf{x}^n}^2_n$ \\ \hline
\(\kappa_c\) & Coupling degree of the game & $\frac{ L_c}{\bar \mu}$ \\ \hline
\end{tabular}
}
\label{tab:N_player_L_mu_terms}
\end{table}

\section{Missing Proofs for Section \ref{sec: convergence}}

\begin{lemma}\label{lemma: recursion}
    Let $\{r_t\}_{t>0}$ be a sequence of numbers satisfying: 
    \begin{align*}
        r_{t+1} \leq (1-a\gamma) r_t + \gamma b
    \end{align*}
    for constants $a,\gamma,b >0$ assuming $a \gamma < 1$. After unrolling the recursion $K$ times we get: 
    \begin{align}
        r_K \leq (1-a\gamma)^K + \frac{b}{a}
    \end{align}
\end{lemma}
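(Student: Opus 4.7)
The plan is to establish the bound by unrolling the one-step contraction recursion $K$ times and then summing a geometric series. I note in passing that the right-hand side as written should almost certainly carry an $r_0$ factor on the exponential term (i.e.\ $r_K \leq (1-a\gamma)^K r_0 + \tfrac{b}{a}$), since the hypothesis is homogeneous in the $r_t$'s and contains no information pinning down $r_0$; I would either state this corrected form or assume the normalization $r_0 = 1$.

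First, I would apply the recursion to $r_{t+1}$ in terms of $r_t$ once, then substitute the analogous bound for $r_t$, and iterate. By a straightforward induction on $K$ (base case $K=0$ trivial, inductive step multiplying the hypothesis by $(1-a\gamma)$ and adding $\gamma b$) one obtains
\begin{align*}
    r_K \;\leq\; (1-a\gamma)^K r_0 \;+\; \gamma b \sum_{i=0}^{K-1} (1-a\gamma)^i.
\end{align*}
Here the condition $a\gamma < 1$ ensures $1-a\gamma \in (0,1)$, so the partial geometric sum is well-behaved and all terms are nonnegative (which is what keeps the inductive step monotone in the correct direction).

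Next, I would bound the finite geometric series by the infinite one:
\begin{align*}
    \sum_{i=0}^{K-1} (1-a\gamma)^i \;\leq\; \sum_{i=0}^{\infty} (1-a\gamma)^i \;=\; \frac{1}{1-(1-a\gamma)} \;=\; \frac{1}{a\gamma}.
\end{align*}
Substituting this back yields $\gamma b \cdot \tfrac{1}{a\gamma} = \tfrac{b}{a}$, giving the claimed inequality.

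There is essentially no obstacle here: the only two ingredients are induction and the geometric series formula, both of which are elementary. The mild subtlety worth flagging is the assumption $a\gamma < 1$, which is needed both to guarantee convergence of the geometric sum used in the upper bound and to keep the contraction factor $(1-a\gamma)$ positive so that the iterated inequality preserves its direction.
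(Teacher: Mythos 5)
Your proof is correct and follows essentially the same route as the paper's: unroll the recursion, then bound the finite geometric sum $\sum_{i=0}^{K-1}(1-a\gamma)^i$ by the infinite series $\tfrac{1}{a\gamma}$ to obtain the additive $\tfrac{b}{a}$ term. Your observation that the stated bound should carry an $r_0$ factor on $(1-a\gamma)^K$ is also well taken --- the paper's own one-line proof drops it in the same way, and the lemma is indeed used elsewhere with that factor restored.
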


\begin{proof}
    \begin{align*}
        r_K &\leq (1-a\gamma)^K + \sum_{i=0}^{K-1} (1-a\gamma)^i \gamma b \\
        &\leq (1-a\gamma)^K + \sum_{i=0}^{\infty} (1-a\gamma)^i \gamma b \leq (1-a\gamma)^K + \frac{ b}{a }
    \end{align*}
\end{proof}

\begin{lemma}\label{lemma: relation between different Ls}
    For the parameters $L,\bar L$ and $L_c$ from Assumptions \ref{assumption: L and L bar} and \ref{assumption:L_c}, we can say $L_c \leq L$ and $\bar L \leq L$.
\end{lemma}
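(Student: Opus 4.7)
My plan is to derive both inequalities from the Lipschitz continuity of $F$ by writing the relevant quantities as block-projections of differences of $F$ evaluated at cleverly chosen auxiliary points. The key enabling observation is that the product-norm definition $\norm{\xx}^2 = \alpha_u\norm{\uu}_u^2 + \alpha_v \norm{\vv}_v^2$ makes each block contribute independently to the dual norm, so $\frac{1}{\alpha_u}\norm{\mathbf{g}_u}_{u,*}^2 \leq \norm{\mathbf{g}}_*^2$ and similarly for the $v$-block. This means it suffices to bound the $u$-block and $v$-block of $F_{\bar\xx}(\xx)-F_{\bar\xx}(\xx')$ (resp.\ $F_{\bar\xx}(\xx)-F(\xx)$) by $L$ times the appropriate one-variable distance, and then the product-norm sum reassembles into $L^2 \norm{\xx-\xx'}^2$ (resp.\ $L^2\norm{\xx-\bar\xx}^2$).

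For $\bar L \leq L$, I fix $\xx=(\uu,\vv)$, $\xx'=(\uu',\vv')$, $\bar\xx=(\bar\uu,\bar\vv)$ and introduce the auxiliary points $\xx_1=(\uu,\bar\vv)$, $\xx_1'=(\uu',\bar\vv)$, $\xx_2=(\bar\uu,\vv)$, $\xx_2'=(\bar\uu,\vv')$. From the definition of $F_{\bar \xx}$ in \eqref{eq:Operator-F} one checks directly that the $u$-block of $F_{\bar\xx}(\xx)-F_{\bar\xx}(\xx')$ equals the $u$-block of $F(\xx_1)-F(\xx_1')$, and its $v$-block equals the $v$-block of $F(\xx_2)-F(\xx_2')$. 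Since the auxiliary points differ only in one block, $\norm{\xx_1-\xx_1'}^2 = \alpha_u\norm{\uu-\uu'}_u^2$ and $\norm{\xx_2-\xx_2'}^2 = \alpha_v\norm{\vv-\vv'}_v^2$, so Assumption \ref{assumption: L and L bar} applied twice combined with the coordinate bound gives $\norm{F_{\bar\xx}(\xx)-F_{\bar\xx}(\xx')}_*^2 \leq L^2(\alpha_u\norm{\uu-\uu'}_u^2 + \alpha_v\norm{\vv-\vv'}_v^2) = L^2\norm{\xx-\xx'}^2$.

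For $L_c \leq L$, I use the same template with different auxiliary points $\xx_3=(\uu,\bar\vv)$ and $\xx_4=(\bar\uu,\vv)$. A direct computation shows that the $u$-block of $F_{\bar\xx}(\xx)-F(\xx)$ coincides with the $u$-block of $F(\xx_3)-F(\xx)$, and its $v$-block coincides with the $v$-block of $F(\xx_4)-F(\xx)$. Applying the Lipschitz bound on $F$ twice, using $\norm{\xx_3-\xx}^2 = \alpha_v\norm{\bar\vv-\vv}_v^2$ and $\norm{\xx_4-\xx}^2 = \alpha_u\norm{\bar\uu-\uu}_u^2$, yields $\norm{F_{\bar\xx}(\xx)-F(\xx)}_*^2 \leq L^2\norm{\xx-\bar\xx}^2$, which is exactly $L_c \leq L$.

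The proof is essentially bookkeeping; the only genuine idea is the choice of auxiliary points, and the only place to slip is tracking the scaling factors $\alpha_u,\alpha_v$ when moving between the product norm on $\mathcal{X}$ and the block norms on $\mathcal{X}_u,\mathcal{X}_v$. Once those are aligned, both bounds fall out without further work.
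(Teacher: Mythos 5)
Your proof is correct and follows essentially the same route as the paper: both arguments hinge on the auxiliary points $(\uu,\bar\vv)$ and $(\bar\uu,\vv)$ (resp.\ $(\uu',\bar\vv)$ and $(\bar\uu,\vv')$) and on applying the Lipschitz bound for $F$ to each single-coordinate perturbation. The only cosmetic difference is that the paper completes each block term to a full dual norm by adding the missing (nonnegative) block, whereas you bound each block dual norm from above by the full dual norm --- the same inequality read in the other direction.
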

\begin{proof}
    Recall that: 
    \begin{align*}
        &\Norm{F(\xx) - F(\xx')}_* \leq L \Norm{\xx - \xx'}   \\
        &\Norm{F_{\bar \xx}(\xx) - F_{\bar \xx}(\xx')}_* \leq \bar L \Norm{\xx - \xx'}  \\
        &\Norm{F_{\bar \xx}(\xx) - F(\xx)}_* \leq L_c \Norm{\xx - \bar \xx} 
    \end{align*}
    we start with the definition of the $\Norm{F_{\bar \xx}(\xx) - F(\xx)}_* $: 
    \begin{align*}
        \Norm{F_{\bar \xx}(\xx) - F(\xx)}^2_* &= \frac{1}{\alpha_u} \norm{\nabla_u f(\uu,\vv) - \nabla_u f(\uu,\bar \vv)}_{u,*}^2 + \frac{1}{\alpha_v} \norm{\nabla_v f(\uu,\vv) - \nabla_v f(\bar \uu, \vv)}_{v,*}^2 \\
        &\leq \frac{1}{\alpha_u} \norm{\nabla_u f(\uu,\vv) - \nabla_u f(\uu,\bar \vv)}_{u,*}^2 + \frac{1}{\alpha_v} \norm{\nabla_v f(\uu,\vv) - \nabla_v f(\uu,\bar \vv)}_{v,*}^2 \\
        &+\frac{1}{\alpha_v}\norm{\nabla_v f(\uu,\vv) - \nabla_v f(\bar \uu, \vv)}_{v,*}^2 + \frac{1}{\alpha_u}\norm{\nabla_u f(\uu,\vv) - \nabla_u f(\bar \uu, \vv)}_{u,*}^2\\
        &= \norm{F(\uu,\vv) - F(\uu,\bar \vv)}^2_* + \norm{F(\uu,\vv) - F(\bar \uu, \vv)}^2_* \\
        &\leq L^2 \alpha_v \norm{\vv - \bar \vv}^2_v + L^2 \alpha_u \norm{\uu - \bar \uu}^2_u = L^2 \norm{\xx - \bar \xx}^2
    \end{align*}
    which means that we can upper bound $\Norm{F_{\bar \xx}(\xx) - F(\xx)}^2_*$ by the constant $L$ at the worst case. However, the constant $L_c$ that we use can be much smaller. Next, for the the inequality $\Norm{F_{\bar \xx}(\xx) - F_{\bar \xx}(\xx')}_*$ we have: 
    \begin{align*}
        \Norm{F_{\bar \xx}(\xx) - F_{\bar \xx}(\xx')}_* &= \frac{1}{\alpha_u} \norm{\nabla_u f(\uu,\bar \vv) - \nabla_u f(\uu',\bar \vv)}^2_{u,*} + \frac{1}{\alpha_v} \norm{\nabla_v f(\bar \uu,\vv) - \nabla_v f(\bar \uu,\vv')}^2_{v,*} \\
        &\leq \frac{1}{\alpha_u} \norm{\nabla_u f(\uu,\bar \vv) - \nabla_u f(\uu',\bar \vv)}^2_{u,*} + \frac{1}{\alpha_v}\norm{\nabla_v f(\uu,\bar \vv) - \nabla_v f(\uu',\bar \vv)}^2_{v ,*} \\
        &+ \frac{1}{\alpha_v} \norm{\nabla_v f(\bar \uu,\vv) - \nabla_v f(\bar \uu,\vv')}^2_{v,*} + \frac{1}{\alpha_u} \norm{\nabla_u f(\bar \uu,\vv) - \nabla_u f(\bar \uu,\vv')}^2_{u,*}\\
        &= \norm{F(\uu,\bar \vv) - F(\uu',\bar \vv)}^2_* + \norm{F(\bar \uu,\vv) - F(\bar \uu,\vv')}^2_* \\
        &\leq L^2 \alpha_u \norm{\uu - \uu'}^2_u +  L^2 \alpha_v \norm{\vv - \vv'}^2_v = L^2 \norm{\xx - \xx'}
    \end{align*}
    which means we can upper bound $\Norm{F_{\bar \xx}(\xx) - F_{\bar \xx}(\xx')}_*$ by the constant $L$ in the worst case. However, the constant $\bar L$ can be much smaller. 
\end{proof}

\begin{lemma}\label{lemma:L_c zero means fully decoupled}
    Given a SCSC game $f(\uu,\vv)$. If the parameter $L_c$ from Assumption \ref{assumption:L_c} is zero, the game is fully decoupled and players do not interact. 
\end{lemma}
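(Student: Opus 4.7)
The plan is to unpack Assumption~\ref{assumption:L_c} at $L_c = 0$, deduce that the mixed partials vanish in a suitable sense, and then reconstruct the decomposition in~\eqref{eq: minimax decomposed equation} with $r \equiv 0$.

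First, I would observe that $L_c = 0$ in Assumption~\ref{assumption:L_c} forces $\Norm{F_{\bar \xx}(\xx) - F(\xx)}_* = 0$ for every pair $\xx, \bar \xx \in \mathcal{X}$. Unpacking the definitions in~\eqref{eq:Operator-F}, the two components of this equality read
\begin{align*}
    \nabla_u f(\uu, \bar \vv) = \nabla_u f(\uu, \vv), \qquad
    \nabla_v f(\bar \uu, \vv) = \nabla_v f(\uu, \vv),
\end{align*}
for all admissible choices. Fixing $\uu$ and varying $\bar \vv, \vv$ shows $\nabla_u f(\uu, \cdot)$ is constant in its second argument, hence $\nabla_u f(\uu, \vv) = \phi(\uu)$ for some $\phi \colon \mathcal{X}_u \to \mathcal{X}_u$. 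Symmetrically, $\nabla_v f(\uu, \vv) = \psi(\vv)$ for some $\psi \colon \mathcal{X}_v \to \mathcal{X}_v$.

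Next, I would recover $f$ itself from its partial gradients. Since $\partial_{\vv}(\nabla_u f) = 0$, the mixed second partials vanish, and a standard line-integration argument gives
\begin{align*}
    f(\uu, \vv) - f(\uu_0, \vv_0)
    = \int_0^1 \langle \phi(\uu_0 + t(\uu - \uu_0)), \uu - \uu_0 \rangle \, dt
      + \int_0^1 \langle \psi(\vv_0 + t(\vv - \vv_0)), \vv - \vv_0 \rangle \, dt,
\end{align*}
after integrating first along $\uu$ at fixed $\vv_0$ and then along $\vv$ at fixed $\uu$ (the two paths agree because $\nabla_u f$ is independent of $\vv$). Denote the first integral by $g(\uu)$ (up to a constant absorbed into $f(\uu_0,\vv_0)$) and the second by $-h(\vv)$; then $f(\uu, \vv) = g(\uu) - h(\vv)$, matching~\eqref{eq: minimax decomposed equation} with $r(\uu, \vv) \equiv 0$, which is precisely the definition of a fully decoupled game given in the text.

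The only mildly delicate point is justifying the path-integral reconstruction rigorously, but this is routine because $f$ is differentiable on the product space $\R^{d_u} \times \R^{d_v}$ and $\nabla_u f$, $\nabla_v f$ depend only on their own block, so Fubini-style arguments apply without obstruction. Everything else is immediate bookkeeping from the zero of $L_c$ to the vanishing of the coupling term.
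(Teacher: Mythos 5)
Your proof is correct, and it takes a genuinely different route from the paper's. The paper first converts $L_c = 0$ into $L_{uv} = L_{vu} = 0$ (via the identity $L_c = \tfrac{1}{\sqrt{\alpha_u\alpha_v}}\max\{L_{uv},L_{vu}\}$), substitutes the decomposition $f = g - h + r$ into the resulting identities $\nabla_u f(\uu,\vv) = \nabla_u f(\uu,\vv')$, and concludes that $\nabla_u r$ must be independent of $\vv$ (and symmetrically for $\vv$), which forces $r \equiv 0$ by the normalization convention that any term depending on only one player's variable is already absorbed into $g$ or $h$. Your argument instead works directly with Assumption~\ref{assumption:L_c}: from $F_{\bar\xx} \equiv F$ you read off that $\nabla_u f$ depends only on $\uu$ and $\nabla_v f$ only on $\vv$, and then you reconstruct $f$ explicitly by integrating along the segment $\vv = \vv_0$ in $\uu$ and then in $\vv$ at fixed $\uu$, obtaining $f(\uu,\vv) = f(\uu_0,\vv_0) + g(\uu) - h(\vv)$ outright. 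What your route buys is self-containedness: you never invoke the (informally stated) convention pinning down $r$, and you produce the separable decomposition constructively rather than arguing that the residual coupling term must vanish. What the paper's route buys is brevity and the explicit link to the interaction constants $L_{uv}, L_{vu}$ that reappear throughout the analysis. The path-integration step you flag as delicate is indeed routine here --- it is just the fundamental theorem of calculus applied twice along a broken-line path, and the independence of the two partial gradients from the opposite block is exactly what makes the two pieces functions of $\uu$ alone and $\vv$ alone.
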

\begin{proof}
   Recall that a game can be expressed as: 
   \begin{align*}
       f(\uu,\vv) = g(\uu) - h(\vv) + r(\uu,\vv).
   \end{align*}
   We only need to show that $L_c = 0$ implies $r(\uu,\vv) = 0$. This means that each player's payoff is affected only by their own strategy. Also, recall that $L_c = \frac{1}{\sqrt{\alpha_u \alpha_v}} \max\{L_{uv},L_{vu}\}$, where $L_{uv}$ and $L_{vu}$ are defined as follows: 
\begin{align*}
    &\Norm{\nabla_u f(\mathbf{u}, \mathbf{v}) - \nabla_u f(\mathbf{u}, \mathbf{v}')}_{u,*} \leq L_{uv} \Norm{\mathbf{v} - \mathbf{v}'}_{v},\\
    &\Norm{\nabla_v f(\mathbf{u}, \mathbf{v}) - \nabla_v f(\mathbf{u}', \mathbf{v})}_{v,*} \leq L_{vu} \Norm{\mathbf{u} - \mathbf{u}'}_{u}.
\end{align*}
Since it is clear that if $L_c=0$, then $L_{uv} = L_{vu} = 0$, the right-hand side of the above inequalities must be zero. Now, we rewrite the left-hand side of these inequalities in terms of the functions $g,h$, and $r$, which must also be zero: 
\begin{align*}
    \Norm{\nabla_u f(\mathbf{u}, \mathbf{v}) - \nabla_u f(\mathbf{u}, \mathbf{v}')}_{u,*} &= \Norm{\nabla_u g(\uu) + \nabla_u r(\uu,\vv) - \nabla_u g(\uu) - \nabla_u r(\uu,\vv')}_{u,*} \\
    &= \Norm{\nabla_u r(\uu,\vv) - \nabla_u r(\uu,\vv')}_{u,*}.
\end{align*}
For the above expression to be zero, either $r(\uu,\vv) = 0$ or $\nabla_u r(\uu,\vv) = \nabla_u r(\uu,\vv')$ must hold. However, the latter is impossible, as it implies that $r(\uu,\vv)$ depends only on $\uu$, which contradicts our assumption (if a term depends \textbf{only} on $\uu$, it is already captured in $g(\uu)$). The same thing can be shown in the same way with respect to player $\vv$.
\end{proof}

\begin{lemma}\label{lemma: bound on x'-x*}
    Let $\bar \xx,\xx',\xx^\star \in \mathcal{X}$ be such that $F_{\bar \xx}(\xx') = F(\xx^\star) = 0$. Then,
    \begin{align}
        \Norm{\xx'-\xx^\star} \leq  \kappa_c \norm{\bar \xx - \xx^\star}.
    \end{align}
\end{lemma}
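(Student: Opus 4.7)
The plan is to combine strong monotonicity of the delayed operator $F_{\bar\xx}$ with the coupling bound from Assumption~\ref{assumption:L_c} via a Cauchy--Schwarz step using the primal/dual norm pairing. The two hypotheses $F_{\bar\xx}(\xx')=0$ and $F(\xx^\star)=0$ are what let us introduce $\xx^\star$ into the strong-monotonicity inequality, and then convert $F_{\bar\xx}(\xx^\star)$ into a difference that can be controlled by $L_c \Norm{\xx^\star-\bar\xx}$.

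Concretely, first I would apply strong monotonicity of $F_{\bar\xx}$ to the pair $(\xx',\xx^\star)$, which gives
\[
    \langle F_{\bar\xx}(\xx^\star)-F_{\bar\xx}(\xx'),\,\xx^\star-\xx'\rangle \ge \bar\mu \Norm{\xx^\star-\xx'}^2.
\]
Since $F_{\bar\xx}(\xx')=0$, the left side collapses to $\langle F_{\bar\xx}(\xx^\star),\,\xx^\star-\xx'\rangle$. Next, I would use $F(\xx^\star)=0$ to rewrite $F_{\bar\xx}(\xx^\star)=F_{\bar\xx}(\xx^\star)-F(\xx^\star)$, so that Assumption~\ref{assumption:L_c} applies and yields $\Norm{F_{\bar\xx}(\xx^\star)}_*\le L_c\Norm{\xx^\star-\bar\xx}$.

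The final step is Cauchy--Schwarz for the dual pairing, $\langle \mathbf{g},\xx\rangle \le \Norm{\mathbf{g}}_*\Norm{\xx}$, applied to $\langle F_{\bar\xx}(\xx^\star),\xx^\star-\xx'\rangle$. This gives
\[
    \bar\mu \Norm{\xx^\star-\xx'}^2 \le L_c \Norm{\xx^\star-\bar\xx}\,\Norm{\xx^\star-\xx'},
\]
and dividing by $\bar\mu\Norm{\xx^\star-\xx'}$ (after treating the trivial case $\xx'=\xx^\star$ separately) yields $\Norm{\xx'-\xx^\star}\le (L_c/\bar\mu)\Norm{\bar\xx-\xx^\star}=\kappa_c\Norm{\bar\xx-\xx^\star}$.

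I do not expect any real obstacle: the lemma is essentially a one-line consequence of strong monotonicity plus the coupling assumption, and the only subtlety is recognizing that the two assumed stationarity conditions let us eliminate $F_{\bar\xx}(\xx')$ and convert $F_{\bar\xx}(\xx^\star)$ into the difference $F_{\bar\xx}(\xx^\star)-F(\xx^\star)$ that Assumption~\ref{assumption:L_c} controls.
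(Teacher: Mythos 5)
Your proposal is correct and follows essentially the same route as the paper's proof: strong monotonicity of $F_{\bar\xx}$ applied to the pair $(\xx',\xx^\star)$, Cauchy--Schwarz for the dual pairing, and the two stationarity conditions to replace $F_{\bar\xx}(\xx^\star)$ by $F_{\bar\xx}(\xx^\star)-F(\xx^\star)$, which Assumption~\ref{assumption:L_c} controls. The paper merely packages the first two steps into the single inequality $\Norm{F_{\bar\xx}(\xx')-F_{\bar\xx}(\xx^\star)}_*\ge\bar\mu\Norm{\xx'-\xx^\star}$ before substituting, so the two arguments are the same up to ordering.
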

\begin{proof}
    From Assumption~\ref{assumption: strong monotonicity} and the Cauchy--Schwarz inequality, it follows that $\norm{F_{\bar \xx}(\xx') - F_{\bar \xx}(\xx^\star)} \geq \bar{\mu} \Norm{\xx'-\xx^\star}$.
    Hence,
    \[
        \Norm{\xx'-\xx^\star} \leq \frac{1}{\bar \mu} \norm{F_{\bar \xx}(\xx') - F_{\bar \xx}(\xx^\star)}
        = \frac{1}{\bar \mu} \norm{F(\xx^\star) - F_{\bar \xx}(\xx^\star)}
        \leq \frac{L_c}{\bar \mu} \norm{\bar \xx - \xx^\star} = \kappa_c \norm{\bar \xx - \xx^\star}.
        \qedhere
    \]
\end{proof}


\begin{lemma}\label{lemma: definition of mu bar}
For any $\bar \xx \in \mathcal{X}$, the operator $F_{\bar \xx}$ is $\bar \mu$-strongly monotone with
\[
    \bar \mu = \min\Bigl\{\frac{\mu_u}{\alpha_u},\frac{\mu_v}{\alpha_v}\Bigr\}.
\]
\end{lemma}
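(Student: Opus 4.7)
The plan is to exploit the \emph{decoupled} structure of $F_{\bar\xx}$: because the $\uu$-block $\nabla_u f(\uu,\bar\vv)$ only depends on $\uu$ (since $\bar\vv$ is frozen), and similarly the $\vv$-block only depends on $\vv$, the strong monotonicity of $F_{\bar\xx}$ reduces to strong convexity in $\uu$ and strong concavity in $\vv$ applied block-wise, with the global scaling matrix $\PPP$ absorbing the weights $\alpha_u,\alpha_v$.

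Concretely, for arbitrary $\xx=(\uu,\vv)$, $\xx'=(\uu',\vv')$ I would write out the pairing using the product structure:
\[
    \langle F_{\bar\xx}(\xx)-F_{\bar\xx}(\xx'),\xx-\xx'\rangle = \langle \nabla_u f(\uu,\bar\vv)-\nabla_u f(\uu',\bar\vv), \uu-\uu'\rangle + \langle \nabla_v f(\bar\uu,\vv')-\nabla_v f(\bar\uu,\vv), \vv-\vv'\rangle.
\]
Here the pairings are in the natural primal--dual duality on $\mathcal{X}_u$ and $\mathcal{X}_v$, consistent with the dual-norm conventions set up in the preliminaries. Next, I would invoke the $\mu_u$-strong convexity of $\uu \mapsto f(\uu,\bar\vv)$ and the $\mu_v$-strong concavity of $\vv\mapsto f(\bar\uu,\vv)$ (these are precisely the one-variable strong convexity/concavity definitions in the table of symbols) to obtain
\[
    \langle F_{\bar\xx}(\xx)-F_{\bar\xx}(\xx'),\xx-\xx'\rangle \;\geq\; \mu_u \Norm{\uu-\uu'}_u^2 + \mu_v \Norm{\vv-\vv'}_v^2.
\]

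The last step is purely algebraic: use the definition $\Norm{\xx-\xx'}^2 = \alpha_u\Norm{\uu-\uu'}_u^2 + \alpha_v\Norm{\vv-\vv'}_v^2$ and factor out the worst block-wise ratio to get
\[
    \mu_u\Norm{\uu-\uu'}_u^2 + \mu_v\Norm{\vv-\vv'}_v^2 \;\geq\; \min\Bigl\{\tfrac{\mu_u}{\alpha_u},\tfrac{\mu_v}{\alpha_v}\Bigr\}\bigl(\alpha_u\Norm{\uu-\uu'}_u^2+\alpha_v\Norm{\vv-\vv'}_v^2\bigr) = \bar\mu\,\Norm{\xx-\xx'}^2,
\]
which is the claim. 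The constant is tight since taking $\uu=\uu'$ or $\vv=\vv'$ individually activates exactly one of the two ratios.

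There is no real obstacle: the only care needed is to remain consistent about which inner product is being used, since the primal norm on $\mathcal{X}$ is weighted by $\PPP=\diag(\alpha_u\PPP_u,\alpha_v\PPP_v)$ and the dual pairing is the standard one (so the weights $\alpha_u,\alpha_v$ appear only on the right-hand side, not on the left, which is what makes the minimum $\min\{\mu_u/\alpha_u,\mu_v/\alpha_v\}$ appear rather than $\min\{\mu_u,\mu_v\}$).
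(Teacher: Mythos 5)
Your proof is correct and follows essentially the same route as the paper: decompose the pairing block-wise, apply $\mu_u$-strong convexity in $\uu$ and $\mu_v$-strong concavity in $\vv$ (with the sign on the $\vv$-block handled identically), and factor out $\min\{\mu_u/\alpha_u,\mu_v/\alpha_v\}$ against the weighted norm $\Norm{\xx-\xx'}^2=\alpha_u\Norm{\uu-\uu'}_u^2+\alpha_v\Norm{\vv-\vv'}_v^2$. Your closing remark about keeping the unweighted duality pairing on the left while the weights appear only in the norm on the right is exactly the point that makes the constant come out as stated.
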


\begin{proof}
Recall that function $f$ is $\mu_u$-strongly convex in $\uu$ and $\mu_v$-strongly concave in $\vv$ meaning that:
\begin{align*}
    &\langle \nabla_u f(\uu, \vv) - \nabla_u f(\uu', \vv), \uu - \uu' \rangle \geq \mu_u \Norm{\mathbf{u} - \mathbf{u}'}^2_u \\
    &\langle \nabla_v f(\uu, \vv) - \nabla_v f(\uu, \vv'), \vv' - \vv \rangle \geq \mu_v \Norm{\mathbf{v} - \mathbf{v}'}^2_v
\end{align*}
Therefore,
\begin{align*}
   \hspace{2em}&\hspace{-2em}
   \langle F_{\bar \xx}(\xx) - F_{\bar \xx}(\xx'), \xx - \xx'\rangle  \\
   &= \langle \nabla_u f(\uu,\bar \vv) - \nabla_u f(\uu',\bar \vv),\uu-\uu' \rangle + \langle \nabla_v f(\bar \uu,\vv) - \nabla_v f(\bar \uu,\vv'),\vv'-\vv \rangle \\
   &\geq \mu_u \Norm{\uu-\uu'}^2_u + \mu_v \Norm{\vv-\vv'}^2_v
   = \frac{\mu_u}{\alpha_u} \alpha_u \Norm{\uu-\uu'}^2_u + \frac{\mu_v}{\alpha_v} \alpha_v \Norm{\vv-\vv'}^2_v \\
   &\geq \min\Bigl\{\frac{\mu_u}{\alpha_u},\frac{\mu_v}{\alpha_v}\Bigr\} \Norm{\xx-\xx'}^2.
   \qedhere
\end{align*}
\end{proof}

\begin{lemma}[two-player]\label{lemma: definition of L bar}
    For any $\xx,\bar \xx \in \mathcal{X}$, parameter $L_c$ can be expressed as:
    \begin{align*}
         L_c = \frac{1}{\sqrt{\alpha_u \alpha_v}} \max\{ L_{uv}, L_{vu} \}.
    \end{align*}
\end{lemma}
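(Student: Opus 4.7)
The strategy is straightforward: directly expand $F_{\bar\xx}(\xx) - F(\xx)$ using the definition in \eqref{eq:Operator-F}, apply the dual norm decomposition in the product space $\mathcal{X}$, and then invoke the definitions of $L_{uv}$ and $L_{vu}$ on each block. The upper bound will give the claimed value, and tightness will follow by choosing $\bar\xx - \xx$ to lie along a single coordinate block.

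First, I would compute
\[
    F_{\bar\xx}(\xx) - F(\xx) = \bigl(\nabla_u f(\uu,\bar\vv) - \nabla_u f(\uu,\vv),\; \nabla_v f(\uu,\vv) - \nabla_v f(\bar\uu,\vv)\bigr).
\]
Then, using the definition of the dual norm $\Norm{\cdot}_*$ on $\mathcal{X}$ given in the preliminaries,
\[
    \Norm{F_{\bar\xx}(\xx) - F(\xx)}_*^2 = \frac{1}{\alpha_u}\norm{\nabla_u f(\uu,\vv) - \nabla_u f(\uu,\bar\vv)}_{u,*}^2 + \frac{1}{\alpha_v}\norm{\nabla_v f(\uu,\vv) - \nabla_v f(\bar\uu,\vv)}_{v,*}^2.
\]
Applying the block Lipschitz bounds defining $L_{uv}$ and $L_{vu}$ yields
\[
    \Norm{F_{\bar\xx}(\xx) - F(\xx)}_*^2 \le \frac{L_{uv}^2}{\alpha_u}\norm{\vv-\bar\vv}_v^2 + \frac{L_{vu}^2}{\alpha_v}\norm{\uu-\bar\uu}_u^2.
\]

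Next, I would rewrite each term by inserting the factor $\alpha_u \alpha_v$ so that it can be compared against $\Norm{\xx-\bar\xx}^2 = \alpha_u \norm{\uu-\bar\uu}_u^2 + \alpha_v \norm{\vv-\bar\vv}_v^2$. Specifically,
\[
    \frac{L_{uv}^2}{\alpha_u}\norm{\vv-\bar\vv}_v^2 = \frac{L_{uv}^2}{\alpha_u \alpha_v}\,\alpha_v\norm{\vv-\bar\vv}_v^2,
    \qquad
    \frac{L_{vu}^2}{\alpha_v}\norm{\uu-\bar\uu}_u^2 = \frac{L_{vu}^2}{\alpha_u \alpha_v}\,\alpha_u\norm{\uu-\bar\uu}_u^2.
\]
Factoring out $\max\{L_{uv}^2, L_{vu}^2\}/(\alpha_u \alpha_v)$ immediately gives
\[
    \Norm{F_{\bar\xx}(\xx) - F(\xx)}_*^2 \le \frac{\max\{L_{uv}^2, L_{vu}^2\}}{\alpha_u \alpha_v}\Norm{\xx-\bar\xx}^2,
\]
which establishes the upper bound $L_c \le \frac{1}{\sqrt{\alpha_u \alpha_v}}\max\{L_{uv}, L_{vu}\}$.

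Finally, for tightness (so that this value is actually the optimal constant and not just an admissible one), I would test two extremal perturbations: taking $\bar\uu = \uu$ with $\bar\vv \neq \vv$ reduces both sides to the $L_{uv}$ inequality (after rescaling by $\alpha_u, \alpha_v$), forcing $L_c \ge L_{uv}/\sqrt{\alpha_u \alpha_v}$; symmetrically, taking $\bar\vv = \vv$ forces $L_c \ge L_{vu}/\sqrt{\alpha_u \alpha_v}$. Combining the two directions yields equality. I do not anticipate a serious obstacle here; the only care needed is to track the scaling factors $\alpha_u, \alpha_v$ consistently when moving between the block norms $\norm{\cdot}_u, \norm{\cdot}_v$ and the full product norm $\Norm{\cdot}$.
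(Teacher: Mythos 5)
Your proof is correct and follows essentially the same route as the paper's: expand the dual norm of $F_{\bar\xx}(\xx)-F(\xx)$ blockwise, apply the defining inequalities for $L_{uv}$ and $L_{vu}$, rescale by $\alpha_u\alpha_v$, and take the maximum. The only difference is your added tightness discussion via extremal perturbations, which the paper omits (it only verifies that this value of $L_c$ is admissible for Assumption~\ref{assumption:L_c}); that addition is fine provided $L_{uv}$ and $L_{vu}$ are understood as the smallest constants in their respective defining inequalities.
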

\begin{proof}
Recall that:
\begin{align*}
    &\Norm{\nabla_u f(\mathbf{u}, \mathbf{v}) - \nabla_u f(\mathbf{u}, \mathbf{v}')}_{u,*} \leq L_{uv} \Norm{\mathbf{v} - \mathbf{v}'}_{v}\\
    &\Norm{\nabla_v f(\mathbf{u}, \mathbf{v}) - \nabla_v f(\mathbf{u}', \mathbf{v})}_{v,*} \leq L_{vu} \Norm{\mathbf{u} - \mathbf{u}'}_{u}
\end{align*}
Next we have:
    \begin{align*}
        \norm{F(\xx)-F_{\bar \xx}(\xx)}^2_* &= \frac{1}{\alpha_u} \norm{\nabla_u f(\uu,\vv) - \nabla_u f(\uu,\bar \vv)}^2_{u,*} + \frac{1}{\alpha_v} \norm{\nabla_v f(\uu,\vv) - \nabla_v f(\bar \uu, \vv)}^2_{v,*} \\
        &\leq \frac{L_{uv}^2}{\alpha_u} \norm{\vv-\bar \vv}^2_v + \frac{L_{vu}^2}{\alpha_v} \norm{\uu-\bar \uu}^2_u \\
        &= \frac{L_{uv}^2}{\alpha_v \alpha_u} \alpha_v \norm{\vv-\bar \vv}^2_v + \frac{L_{vu}^2}{\alpha_u \alpha_v} \alpha_u \norm{\uu-\bar \uu}^2_u \\
        &\leq \max\left\{\frac{L_{uv}^2}{\alpha_u \alpha_v},\frac{L_{vu}^2}{\alpha_u \alpha_v}\right\} \bigl[ \alpha_v \norm{\vv-\bar \vv}^2_v + \alpha_u \norm{\uu-\bar \uu}^2_u \bigr] 
        = L_c^2 \norm{\bar \xx - \xx}^2.
        \qedhere
    \end{align*}
\end{proof}

\begin{lemma}[two-player]\label{lemma: upper bound on x'-x* for two player games}
    Let $\bar \xx, \xx^{\star}_0,\xx^\star \in \mathcal{X}$ be such that $F_{\bar \xx}(\xx^{\star}_0) = 0$ and $F(\xx^\star) = 0$. Then $ \kappa_c$ can be expressed as: 
    \begin{align}
         \kappa_c = \max\biggl\{ \sqrt{\frac{\alpha_u}{\alpha_v}} \frac{L_{uv}}{\mu_u}, \sqrt{\frac{\alpha_v}{\alpha_u}} \frac{L_{vu}}{\mu_v} \biggr\}.
    \end{align}
\end{lemma}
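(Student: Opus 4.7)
The plan is to bypass the crude bound $\kappa_c = L_c / \bar\mu$ (which applies the worst case over both coordinates simultaneously) and instead directly control $\norm{\xx' - \xx^\star}$ by working coordinate-by-coordinate, then reassemble using the weighted product norm. This should give the sharper per-coordinate characterization claimed.

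First I would unpack the optimality conditions. From $F_{\bar\xx}(\xx')=0$ I read off $\nabla_u f(\uu', \bar\vv) = 0$ and $\nabla_v f(\bar\uu, \vv') = 0$, and from $F(\xx^\star)=0$ I get $\nabla_u f(\uu^\star, \vv^\star) = 0$ and $\nabla_v f(\uu^\star, \vv^\star) = 0$. The point of this splitting is that each coordinate of $\xx'$ is pinned down by a gradient condition that differs from the condition for $\xx^\star$ only in the \emph{other} coordinate's argument, which is exactly the structure that $L_{uv}$ and $L_{vu}$ are designed to exploit.

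Next I would derive the two coordinate bounds. For the $u$-coordinate: apply $\mu_u$-strong convexity of $f(\cdot,\bar\vv)$ to the pair $\uu', \uu^\star$, use Cauchy--Schwarz on the resulting inner product to get $\mu_u \norm{\uu' - \uu^\star}_u \leq \norm{\nabla_u f(\uu^\star, \bar\vv) - \nabla_u f(\uu', \bar\vv)}_{u,*}$, and then use $\nabla_u f(\uu', \bar\vv) = 0 = \nabla_u f(\uu^\star, \vv^\star)$ to rewrite the right-hand side as $\norm{\nabla_u f(\uu^\star, \bar\vv) - \nabla_u f(\uu^\star, \vv^\star)}_{u,*}$, which by the definition of $L_{uv}$ is at most $L_{uv}\norm{\bar\vv - \vv^\star}_v$. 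This yields
\[
    \norm{\uu' - \uu^\star}_u \leq \frac{L_{uv}}{\mu_u}\, \norm{\bar\vv - \vv^\star}_v.
\]
The symmetric argument using strong concavity in $\vv$ and the definition of $L_{vu}$ gives $\norm{\vv' - \vv^\star}_v \leq \frac{L_{vu}}{\mu_v}\, \norm{\bar\uu - \uu^\star}_u$.

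Finally I would assemble these in the product norm $\norm{\xx}^2 = \alpha_u \norm{\uu}_u^2 + \alpha_v \norm{\vv}_v^2$. Squaring and adding the two coordinate bounds gives
\[
    \norm{\xx' - \xx^\star}^2 \leq \frac{\alpha_u L_{uv}^2}{\alpha_v \mu_u^2}\, \alpha_v \norm{\bar\vv - \vv^\star}_v^2 + \frac{\alpha_v L_{vu}^2}{\alpha_u \mu_v^2}\, \alpha_u \norm{\bar\uu - \uu^\star}_u^2,
\]
and factoring out the maximum of the two coefficients recovers exactly $\max\bigl\{\tfrac{\alpha_u L_{uv}^2}{\alpha_v \mu_u^2}, \tfrac{\alpha_v L_{vu}^2}{\alpha_u \mu_v^2}\bigr\}\norm{\bar\xx - \xx^\star}^2$. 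Taking square roots identifies this max expression as the correct contraction factor, i.e., $\kappa_c$. The main subtlety is not in the calculation itself but in arranging the $\alpha_u/\alpha_v$ cross-weights so that the per-coordinate bounds combine into a single clean max rather than a sum; this is what lets the lemma be sharper than the naive $L_c/\bar\mu$ product, which would pessimistically pair the worst $L$-constant with the worst $\mu$-constant.
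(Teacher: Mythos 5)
Your proposal is correct and follows essentially the same route as the paper's proof: strong convexity in each coordinate combined with the optimality conditions $\nabla_u f(\uu',\bar\vv)=0=\nabla_u f(\uu^\star,\vv^\star)$ (and symmetrically for $\vv$), the cross-smoothness constants $L_{uv},L_{vu}$, and a final reassembly in the weighted product norm by factoring out the maximum of the two coefficients. The only cosmetic difference is that you state the two per-coordinate bounds before squaring and summing, whereas the paper manipulates the squared norms directly.
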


\begin{proof}
    Indeed,
    \begin{align*}
        \Norm{\xx^{\star}_0-\xx^\star}^2 &= \alpha_u \norm{\uu'-\uu^\star}^2_u + \alpha_v \norm{\vv'-\vv^\star}^2_v \\
        &\leq \frac{\alpha_u}{\mu_u^2} \norm{\nabla_u f(\uu',\bar \vv) - \nabla_u f(\uu^\star,\bar \vv)}^2_{u,*} + \frac{\alpha_v}{\mu_v^2} \norm{\nabla_v f(\bar \uu,\vv') - \nabla_v f(\bar \uu,\vv^\star)}^2_{v,*} \\
        &= \frac{\alpha_u}{\mu_u^2} \norm{\nabla_u f(\uu^\star,\vv^\star) - \nabla_u f(\uu^\star,\bar \vv)}^2_{u,*} + \frac{\alpha_v}{\mu_v^2} \norm{\nabla_v f(\uu^\star,\vv^\star) - \nabla_v f(\bar \uu,\vv^\star)}^2_{v,*} \\
        &\leq \frac{\alpha_u L_{uv}^2}{\mu_u^2} \norm{\bar \vv - \vv^\star}^2_v + \frac{\alpha_v L_{vu}^2}{\mu_v^2} \norm{\bar \uu - \uu^\star}^2_u \\
        &= \frac{\alpha_u L_{uv}^2}{\alpha_v\mu_u^2} \alpha_v\norm{\bar \vv - \vv^\star}^2_v + \frac{\alpha_v L_{vu}^2}{\alpha_u\mu_v^2} \alpha_u \norm{\bar \uu - \uu^\star}^2_u \\
        &\leq \max\left\{\frac{\alpha_u L_{uv}^2}{\alpha_v\mu_u^2},\frac{\alpha_v L_{vu}^2}{\alpha_u\mu_v^2}\right\} \bigl[ \alpha_v\norm{\bar \vv - \vv^\star}^2_v +  \alpha_u \norm{\bar \uu - \uu^\star}^2_u \bigr]
        =  \kappa_c^2 \norm{\bar \xx - \xx^\star}^2.
        \qedhere
    \end{align*}
\end{proof}

\subsection{Proof of Theorem \ref{theorem: decoupled sgda for two player games} in Weakly Coupled Regime} \label{sec: weakly coupled proof}
In this section, we provide the proof of convergence for our method in the \textbf{weakly coupled regime}. In contrast to the common assumptions and proof techniques, we utilize our new parameter $L_c$, which quantifies the level of interaction in the game. We demonstrate that if the fraction $\frac{L_c}{\bar{\mu}}$ is sufficiently small (i.e., the game is weakly coupled), we can achieve \textbf{communication acceleration}. This aspect is often overlooked in the analysis of games, as existing works tend to disregard the possibility that player interactions might be very low which results in a pessimistic rate. We start with the following auxiliary lemma.

\begin{lemma}\label{lemma: upper bounding x-x'}
    For any $\xx^{\star}_0 \in \mathcal{X}$ that satisfies $F_0(\xx^{\star}_0) = 0$ where $F_0(\xx) = F_{\bar \xx = \xx_0}(\xx)$, after $K$ steps of Decoupled SGDA starting from $\xx_0$ with a stepsize of $\gamma \leq \frac{\bar \mu}{\bar L^2}$, we have: 
    \begin{align}
        \E\left[\norm{\xx_K - \xx^{\star}_0}^2\right] \leq (1-\gamma \bar \mu)^K \E \norm{\xx_0 - \xx^{\star}_0}^2 + \frac{\gamma \bar \sigma^2}{\bar \mu}
    \end{align}
\end{lemma}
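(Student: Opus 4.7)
The plan is to treat the $K$ local Decoupled SGDA steps as ordinary stochastic fixed-point iterations for the single operator $F_0 := F_{\bar \xx = \xx_0}$, which does not change during the round. Since $\xx^{\star}_0$ is by definition the zero of $F_0$, and $F_0$ is $\bar \mu$-strongly monotone (Assumption~\ref{assumption: strong monotonicity}), $\bar L$-Lipschitz (Assumption~\ref{assumption: L and L bar}), and the oracle $G_0(\cdot,\xi)$ is unbiased with variance bounded by $\bar \sigma^2$ (Assumption~\ref{assumption: noise of sgd}), the situation reduces to a standard contraction analysis for SGD applied to a strongly-monotone Lipschitz operator.

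Concretely, I would start from the local update
\[
    \xx_{t+1} = \xx_t - \gamma \PPP^{-1} G_0(\xx_t,\xi_t),
\]
and expand $\Norm{\xx_{t+1}-\xx^{\star}_0}^2 = \langle \PPP(\xx_{t+1}-\xx^{\star}_0),\xx_{t+1}-\xx^{\star}_0\rangle$. Using $\langle \PPP(\xx_t-\xx^{\star}_0),\PPP^{-1} G_0\rangle = \langle G_0,\xx_t-\xx^{\star}_0\rangle$ and $\Norm{\PPP^{-1}G_0}^2 = \Norm{G_0}_*^2$, one gets
\[
    \Norm{\xx_{t+1}-\xx^{\star}_0}^2 = \Norm{\xx_t-\xx^{\star}_0}^2 - 2\gamma \langle G_0(\xx_t,\xi_t),\xx_t-\xx^{\star}_0\rangle + \gamma^2 \Norm{G_0(\xx_t,\xi_t)}_*^2 .
\]
Taking conditional expectation and using unbiasedness, the cross term becomes $\langle F_0(\xx_t),\xx_t-\xx^{\star}_0\rangle$, which by strong monotonicity of $F_0$ and $F_0(\xx^{\star}_0)=0$ is at least $\bar \mu \Norm{\xx_t-\xx^{\star}_0}^2$. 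For the quadratic term, I would write $\E\Norm{G_0}_*^2 \leq \Norm{F_0(\xx_t)}_*^2 + \bar \sigma^2$ via the bias-variance decomposition, and then bound $\Norm{F_0(\xx_t)}_*^2 = \Norm{F_0(\xx_t)-F_0(\xx^{\star}_0)}_*^2 \leq \bar L^2 \Norm{\xx_t-\xx^{\star}_0}^2$ by Lipschitzness.

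Combining yields
\[
    \E \Norm{\xx_{t+1}-\xx^{\star}_0}^2 \leq (1 - 2\gamma \bar \mu + \gamma^2 \bar L^2)\Norm{\xx_t-\xx^{\star}_0}^2 + \gamma^2 \bar \sigma^2 .
\]
Invoking the stepsize restriction $\gamma \leq \bar \mu/\bar L^2$ gives $\gamma^2 \bar L^2 \leq \gamma \bar \mu$, so the contraction factor simplifies to $1-\gamma \bar \mu$. Iterating this one-step inequality and applying Lemma~\ref{lemma: recursion} with $a=\bar \mu$ and the constant term $\gamma^2 \bar \sigma^2$ (so that $b = \gamma \bar \sigma^2$ in the lemma's notation) produces the claimed bound
\[
    \E \Norm{\xx_K-\xx^{\star}_0}^2 \leq (1-\gamma \bar \mu)^K \E \Norm{\xx_0-\xx^{\star}_0}^2 + \frac{\gamma \bar \sigma^2}{\bar \mu} .
\]

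There is no real obstacle here; the only point requiring care is the bookkeeping with the preconditioner $\PPP$ and the dual-norm identity $\Norm{\PPP^{-1}\gv}^2 = \Norm{\gv}_*^2$, and the fact that the analysis is applied to the \emph{frozen} operator $F_0$ rather than the true operator $F$. This is precisely what makes the proof clean: within one round, Decoupled SGDA is just SGD on the monotone problem $F_0(\xx)=0$, so all the interaction between players (captured by $L_c$) is deferred to the between-rounds analysis and does not appear in this lemma.
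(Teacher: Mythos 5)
Your proposal is correct and follows essentially the same route as the paper's own proof: expand the square for the update with the frozen operator $F_0$, take conditional expectations to split into the mean term (handled by $\bar\mu$-strong monotonicity and $F_0(\xx^{\star}_0)=0$), the squared-mean term (handled by $\bar L$-Lipschitzness), and the variance term $\gamma^2\bar\sigma^2$, then use $\gamma\le\bar\mu/\bar L^2$ to get the $(1-\gamma\bar\mu)$ contraction and unroll via Lemma~\ref{lemma: recursion}. No gaps.
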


\begin{proof}
We start by upper bounding the iterates generated by our method from $\xx_0^\star$ at a time step $0< t+1 \leq K$ using the update rule of our method. Recall that $G_0(\xx,\xi) \equiv G_{\bar \xx = \xx_0}(\xx,\xi)$ where $\xx_0 = (\uu_0,\vv_0)$.
\begin{align*}
\norm{\xx_{t+1} - \xx^{\star}_0}^2
&= \norm{\xx_{t}- \gamma \PPP^{-1} G_0(\xx_t,\xi_t) - \xx^{\star}_0}^2 \\
&= \norm{\xx_{t} - \xx^{\star}_0}^2 + \gamma^2 \norm{G_0(\xx_t,\xi)}^2_* - 2\gamma \langle G_0(\xx_t,\xi_t),\xx_{t} - \xx^{\star}_0\rangle.
\end{align*}
By taking the conditional expectation on previous iterates we have: 
\begin{align*}
\hspace{2em}&\hspace{-2em}
\E_{\xi_t}\norm{\xx_{t+1} - \xx^{\star}_0}^2 \\
&\leq \norm{\xx_{t} - \xx^{\star}_0}^2 + \gamma^2 \norm{F_0(\xx_t)- F_0(\xx^{\star}_0)}^2_* - 2\gamma \langle F_0(\xx_t) - F_0(\xx^{\star}_0),\xx_{t} - \xx^{\star}_0\rangle + \gamma^2 \bar \sigma^2  \\
&\leq (1 - 2\gamma \bar \mu + \gamma^2 \bar L^2)\norm{\xx_t - \xx^{\star}_0}^2 + \gamma^2 \bar \sigma^2
\end{align*}
 With the choice of $\gamma \leq \frac{\bar{\mu}}{\bar{L}^2}$ and taking the unconditional expectation we have: 
\begin{align*}
\E\norm{\xx_{t+1} - \xx^{\star}_0}^2 &\leq (1-\gamma \bar \mu) \E \norm{\xx_t - \xx^{\star}_0}^2 + \gamma^2 \bar \sigma^2
\end{align*}
After unrolling the recursion for $K$ steps using Lemma \ref{lemma: recursion} we have:
\begin{align*}
\E\norm{\xx_K - \xx^{\star}_0}^2 &\leq (1-\gamma \bar \mu)^K \E \norm{\xx_0 - \xx^{\star}_0}^2 + \sum_{i=0}^{K-1} (1-\gamma \bar \mu)^i \gamma^2 \bar \sigma^2 \\
&\leq (1-\gamma \bar \mu)^K \E \norm{\xx_0 - \xx^{\star}_0}^2 + \frac{\gamma \bar \sigma^2}{\bar \mu}
\end{align*}
\end{proof}

Now we are ready to prove the main theorem. 

\begin{theorem}
    For any $R, K \geq  \frac{1}{\gamma \mu} \log \left(\frac{4}{\kappa_c}\right)$, after running Decoupled SGDA for a total of $T=KR$ iterations on a function $f$, with the stepsize $\gamma \leq \frac{\bar \mu}{\bar L^2}$ in weakly coupled regime ($4 \kappa_c < 1$), we get a rate of:
    \begin{align*}
        \E\left[\norm{\xx^{R}_K - \xx^\star}^2\right] \leq D^2\left(4\kappa_c \right)^R  + \frac{8\gamma \bar \sigma^2}{\bar \mu} \cdot \frac{\kappa_c}{1-4\kappa_c}
    \end{align*}
    where $D = \norm{\xx_0-\xx^\star}$.
\end{theorem}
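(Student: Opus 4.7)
The strategy is to reduce the $R$-round convergence to a one-round contraction and then unroll it using Lemma \ref{lemma: recursion}. The natural auxiliary point is $\xx^{r,\star}_0$, the unique zero of the shifted operator $F_{\bar\xx=\xx^r_0}$; this is the ``target'' toward which the $K$ local steps of round $r$ drive the iterates. Two ingredients are already available: Lemma \ref{lemma: upper bounding x-x'} says $\xx^r_K$ contracts toward this target at rate $(1-\gamma\bar\mu)^K$ up to the per-round noise floor $\gamma\bar\sigma^2/\bar\mu$, and Lemma \ref{lemma: bound on x'-x*} says $\norm{\xx^{r,\star}_0 - \xx^\star} \leq \kappa_c\norm{\xx^r_0-\xx^\star}$. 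The proof is the natural chaining of these two facts.

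Concretely, I would split the distance to $\xx^\star$ by Young's inequality with $\theta = 1$,
\[
    \norm{\xx^r_K - \xx^\star}^2 \leq 2\norm{\xx^r_K - \xx^{r,\star}_0}^2 + 2\norm{\xx^{r,\star}_0 - \xx^\star}^2,
\]
bound the second term by $2\kappa_c^2\norm{\xx^r_0-\xx^\star}^2$ using Lemma \ref{lemma: bound on x'-x*}, apply Lemma \ref{lemma: upper bounding x-x'} to the first term, and estimate the resulting ``initial distance'' by $\norm{\xx^r_0 - \xx^{r,\star}_0}^2 \leq 2(1+\kappa_c^2)\norm{\xx^r_0 - \xx^\star}^2$ (triangle inequality plus another invocation of Lemma \ref{lemma: bound on x'-x*}). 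Taking expectations gives
\[
    \E\norm{\xx^r_K - \xx^\star}^2 \leq \bigl[ 4(1+\kappa_c^2)(1-\gamma\bar\mu)^K + 2\kappa_c^2 \bigr]\E\norm{\xx^r_0 - \xx^\star}^2 + \frac{2\gamma\bar\sigma^2}{\bar\mu}.
\]
The hypothesis on $K$ (using $\mu \leq \bar\mu$ in the SCSC setting, so that $K\gamma\bar\mu \geq \log(4/\kappa_c)$) guarantees $(1-\gamma\bar\mu)^K \leq \kappa_c/4$, and the weakly coupled condition $\kappa_c \leq 1/4$ collapses the bracket to at most $4\kappa_c$. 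Writing $\delta_r := \E\norm{\xx^r_0 - \xx^\star}^2$ and recognizing that $\xx^{r+1}_0 = \xx^r_K$, I obtain the recursion $\delta_{r+1} \leq 4\kappa_c\,\delta_r + 2\gamma\bar\sigma^2/\bar\mu$, to which Lemma \ref{lemma: recursion} applies (with $1-a\gamma$ playing the role of $4\kappa_c$); this yields $\delta_R \leq (4\kappa_c)^R D^2 + \frac{C\gamma\bar\sigma^2}{\bar\mu(1-4\kappa_c)}$ for an explicit constant $C$, which is the advertised form.

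The main obstacle I anticipate is matching the noise constant exactly---producing the prefactor $8\kappa_c$ in $\frac{8\kappa_c\gamma\bar\sigma^2}{\bar\mu(1-4\kappa_c)}$, rather than the $O(1)$ numerator that falls out of the crude Young's splitting above. The sharper form reflects that, in the weakly coupled regime, only an $O(\kappa_c)$ fraction of each round's fresh noise propagates into the distance to $\xx^\star$: the bulk is ``absorbed'' by the $\kappa_c^2$-sized residual $\norm{\xx^{r,\star}_0-\xx^\star}^2$ coming from Lemma \ref{lemma: bound on x'-x*}. Capturing this self-consistency cleanly likely requires Young's inequality with a $\kappa_c$-dependent parameter (say $\theta\sim\kappa_c$ on one side and $\theta\sim 1/\kappa_c$ on the other), balancing the $(1+1/\theta)$ inflation against the $\kappa_c^2$ damping. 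Aside from this constant-tightening, the proof is essentially a two-line reduction to the two structural lemmas together with Lemma \ref{lemma: recursion}.
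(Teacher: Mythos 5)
Your proposal follows essentially the same route as the paper's proof: the same Young's splitting through the round-wise target $\xx^{\star}_0$ (the zero of $F_{\bar\xx=\xx^r_0}$), the same two lemmas to get the per-round contraction factor $4\kappa_c$ after imposing $(1-\gamma\bar\mu)^K \leq \kappa_c/4$, and the same geometric unrolling over rounds. The one point you flag as an obstacle---the $8\kappa_c$ prefactor on the noise term---is not resolved in the paper by a sharper Young's inequality either: the paper writes the unrolled noise sum as $\sum_{i=1}^{R}(4\kappa_c)^i$ rather than $\sum_{i=0}^{R-1}(4\kappa_c)^i$, and the extra factor of $4\kappa_c$ in the theorem comes entirely from dropping the $i=0$ term. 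Your instinct that the crude argument yields an $O(1)$ numerator is right, and in fact no choice of Young's parameter can restore the $\kappa_c$ prefactor: the last round's accumulated local-step noise $\gamma\bar\sigma^2/\bar\mu$ enters the final iterate directly (consider $\kappa_c=0$, where constant-stepsize SGD still sits on a noise floor), so the honest bound is $\E\norm{\xx_K^R-\xx^\star}^2 \leq D^2(4\kappa_c)^R + \frac{2\gamma\bar\sigma^2}{\bar\mu(1-4\kappa_c)}$, which is what your derivation gives.
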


\begin{proof}
     We start by upper bounding the following term where $\xx_t \equiv (\uu_t,\vv_t)$ is the parameters of players at some round $r$ after $t$ local steps. 
    \begin{align} \label{eq: decoupled triangle for n player}
        \norm{\xx_{t+1} - \xx^\star}^2 &\leq 2\norm{\xx_{t+1} - \xx^{\star}_0}^2 + 2\norm{\xx^{\star}_0-\xx^\star}^2
    \end{align}
    where $\xx^{\star}_0 \in \mathcal{X}$ satisfies $F_{\bar \xx}(\xx^{\star}_0) = 0$. Recall that $F_{\bar \xx} (\xx^\star_0) = (\nabla_u f(\uu^\star_0,\bar \vv), -\nabla_v f(\bar \uu, \vv^\star_0))$. The point $\uu^\star_0$ is the minimizer of $f$ given a fixed $\vv = \bar \vv$ meaning that $\uu^\star_0 = \arg\min_{\uu\in \mathcal{X}_u} f(\uu,\bar \vv)$ and $\vv^\star_0$ is the maximizer of $f$ given a fixed $\uu = \bar \uu$ meaning that $\vv^\star_0 = \arg\max_{\vv\in \mathcal{X}_v} f(\bar \uu, \vv)$. Note that we know such minimizer and maximizer exists as the function is strongly convex in $\uu$ and strongly concave in $\vv$.

    For the first term we use Lemma \ref{lemma: upper bounding x-x'} and we get: 
    \begin{align*}
        \E \left[\norm{\xx_{t+1} - \xx^{\star}_0}^2\right] \leq (1-\gamma \bar \mu)^K \E \norm{\xx_0 - \xx^{\star}_0}^2 + \frac{\gamma \bar \sigma^2}{\bar \mu}
    \end{align*}

    Putting this back in \eqref{eq: decoupled triangle for n player} gives us: 
    \begin{align*}
        &\E\norm{\xx_{t+1} - \xx^{\star}}^2 \\
        &\leq 2(1-\gamma \bar \mu)^K \E \norm{\xx_0 - \xx^{\star}_0}^2 + 2 \E\norm{\xx_0^{\star} - \xx^{\star}}^2 + \frac{2\gamma \bar \sigma^2}{\bar \mu} \\
        &\leq 2(1-\gamma \bar \mu)^K \E \norm{\xx_0 - \xx^{\star}_0}^2 + 2\kappa_c \E\norm{\xx_0 - \xx^{\star}}^2 + \frac{2\gamma \bar \sigma^2}{\bar \mu} \\
        &\leq 4(1-\gamma \bar \mu)^K \E \norm{\xx_0 - \xx^\star}^2 + 4(1-\gamma \bar \mu)^K \E \norm{\xx^{\star}_0 - \xx^\star}^2+ 2\kappa_c \E\norm{\xx_0 - \xx^\star}^2 + \frac{2\gamma \bar \sigma^2}{\bar \mu} \\
        &\leq 4(1-\gamma \bar \mu)^K \E \norm{\xx_0 - \xx^\star}^2 + \left( 4(1-\gamma \bar \mu)^K \kappa_c  +2\kappa_c\right)\E\norm{\xx_0 - \xx^\star}^2 + \frac{2\gamma \bar \sigma^2}{\bar \mu} \\
        &\leq   \left( 4(1-\gamma \bar \mu)^K +  4(1-\gamma \bar \mu)^K \kappa_c  +2\kappa_c\right)\E\norm{\xx_0 - \xx^\star}^2 + \frac{2\gamma \bar \sigma^2}{\bar \mu} \\
        &\leq \left(4 \exp\left(-\gamma \bar \mu K \right) + 4 \exp\left(-\gamma \bar \mu K \right) \kappa_c +2\kappa_c\right)\E\norm{\xx_0 - \xx^\star}^2 + \frac{2\gamma \bar \sigma^2}{\bar \mu}
    \end{align*}
    where we used Lemma \ref{lemma: bound on x'-x*} in the third line. Now we need to make sure that $4 \exp\left(-\gamma \bar \mu K \right) \leq \kappa_c \leq 1$ which is implied by $K \geq\frac{1}{\gamma \bar \mu} \log \left(\frac{4}{\kappa_c}\right)$. Next we have: 
    \begin{align*}
        \E\norm{\xx_{t+1} - \xx^\star}^2 &\leq 4\kappa_c\E\norm{\xx_0 - \xx^\star}^2 + \frac{2\gamma \bar \sigma^2}{\bar \mu} 
    \end{align*}
    The above recursion can be re-written in terms of two consecutive rounds: 
    \begin{align*}
        \E\norm{\xx^{r+1} - \xx^\star}^2 \leq 4\kappa_c\E\norm{\xx^r - \xx^\star}^2 + \frac{2\gamma \bar \sigma^2}{\bar \mu}
    \end{align*}
    After unrolling the recursion for $R$ rounds using Lemma \ref{lemma: recursion} we have: 

    \begin{align*}
        \E\norm{\xx^{R} - \xx^\star}^2 \leq \left(4\kappa_c\right)^{R} \E\norm{\xx_0 - \xx^\star}^2 + \frac{2\gamma \bar \sigma^2}{\bar \mu} \sum_{i=1}^R \left(4\kappa_c \right)^i 
    \end{align*}
    Note that we assumed the game is weakly coupled which implies that $4\kappa_c \leq 1$. Finally we have: 
    \begin{align*}
        \E\norm{\xx^{R} - \xx^\star}^2 &\leq \left(4\kappa_c\right)^{R} \E\norm{\xx_0 - \xx^\star}^2 + \frac{2\gamma \bar \sigma^2}{\bar \mu} \sum_{i=1}^R \left( 4\kappa_c\right)^i \\
        &\leq D^2\left(4\kappa_c \right)^R  + \frac{8\gamma \bar \sigma^2}{\bar \mu} \cdot \frac{\kappa_c}{1-4\kappa_c}
    \end{align*}
\end{proof}

\subsection{Proof of Theorem \ref{theorem: decoupled sgda for two player games} in Non Weakly Coupled Regime}\label{sec: non weakly coupled proof}

We start with some auxiliary lemmas.

\begin{lemma}
    \label{lemma: Sebastian's lemma}
    Let $\{r_t\}_{t \geq 0}$ be a non-negative sequence of numbers that satisfy
    \[
    r_{t+1} \leq (1- a \gamma) r_t + \frac{b}{K} \gamma \sum_{i=\max\{0,t-K+1\}}^t r_i +  c \gamma^2 \,,
    \]
    for constants $a > 0$, $b,c \geq 0$ and integer $K \geq 1$ and a parameter $\gamma \geq 0$, such that $a \gamma  \leq  \frac{1}{K}$. If $b \leq \frac{a}{4}$, then it holds
    \begin{align}
        r_t \leq \left( 1- \frac{a}{2} \gamma \right)^t r_0 + \frac{2c}{a} \gamma\,.
    \end{align}
\end{lemma}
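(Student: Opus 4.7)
The plan is to prove the claim by strong induction on $t$, with the inductive hypothesis being exactly the asserted bound $r_t \leq (1 - \tfrac{a\gamma}{2})^t r_0 + \tfrac{2c}{a}\gamma$. The base case $t=0$ is immediate. The work is in the inductive step, where I need to control the ``delayed average'' term $\frac{1}{K}\sum_{i=t-K+1}^{t} r_i$ in terms of the most recent value, so that the recursion collapses into a clean one-step contraction.

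The key auxiliary estimate is a bound on how much the geometric factor $(1-a\gamma/2)^i$ can inflate when we go back $K$ steps. Using Bernoulli's inequality $(1-x)^K \geq 1 - xK$ with $x = a\gamma/2$, and the assumption $a\gamma K \leq 1$, I obtain $(1-a\gamma/2)^{K} \geq 1 - \tfrac{a\gamma K}{2} \geq \tfrac{1}{2}$, hence $(1-a\gamma/2)^{-(K-1)} \leq 2$. Combined with the inductive hypothesis applied to each $r_i$ with $i \in [\max\{0,t-K+1\},t]$, this yields
\[
\frac{1}{K}\sum_{i=\max\{0,t-K+1\}}^{t} r_i \;\leq\; 2\,(1-\tfrac{a\gamma}{2})^t r_0 + \tfrac{2c}{a}\gamma.
\]

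Plugging this into the given recursion and using the inductive hypothesis on $r_t$ as well, I get
\[
r_{t+1} \leq (1-a\gamma)\Bigl[(1-\tfrac{a\gamma}{2})^t r_0 + \tfrac{2c}{a}\gamma\Bigr] + 2b\gamma\,(1-\tfrac{a\gamma}{2})^t r_0 + \tfrac{2bc}{a}\gamma^2 + c\gamma^2.
\]
The assumption $b \leq a/4$ then gives $2b\gamma \leq \tfrac{a\gamma}{2}$, so the coefficient in front of $(1-\tfrac{a\gamma}{2})^t r_0$ becomes $(1-a\gamma) + \tfrac{a\gamma}{2} = 1 - \tfrac{a\gamma}{2}$, producing the desired factor $(1-\tfrac{a\gamma}{2})^{t+1}$. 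For the constant part, the contribution $(1-a\gamma)\tfrac{2c}{a}\gamma + \tfrac{2bc}{a}\gamma^2 + c\gamma^2 = \tfrac{2c}{a}\gamma + c\gamma^2\bigl(\tfrac{2b}{a}-1\bigr)$ is at most $\tfrac{2c}{a}\gamma$ because $\tfrac{2b}{a} \leq \tfrac{1}{2} \leq 1$. This closes the induction.

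I expect the main obstacle to be simply bookkeeping: the window $[\max\{0,t-K+1\},t]$ may be shorter than $K$ for small $t$, which should be absorbed by the same bound since the extra $(1-a\gamma/2)^i$ terms would only decrease the average, and making sure that the two ``rounding'' inequalities $b\leq a/4$ (to convert the inflation factor $2$ into a safe $1/2$ decrement) and $a\gamma K \leq 1$ (to control the delay) are each used exactly once and tightly. No cleverness beyond Bernoulli's inequality and the induction step seems required.
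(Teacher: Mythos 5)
Your proposal is correct: the base case is trivial, the bound $(1-a\gamma/2)^{-(K-1)}\leq 2$ follows from Bernoulli's inequality together with $a\gamma K\leq 1$, the window $[\max\{0,t-K+1\},t]$ contains at most $K$ terms each dominated by $2(1-\tfrac{a\gamma}{2})^t r_0+\tfrac{2c}{a}\gamma$ (including the short-window case, since $(1-a\gamma/2)^t\geq 1/2$ whenever $t<K$), and the two rounding steps $2b\gamma\leq\tfrac{a\gamma}{2}$ and $\tfrac{2b}{a}-1\leq 0$ close the induction. The paper proves the same lemma by a different organization of the identical estimates: it unrolls the recursion completely, writing $r_{t+1}\leq(1-\tfrac{a\gamma}{2})^t r_0+\sum_{i}(1-\tfrac{a\gamma}{2})^{t-i}\bigl[-\tfrac{a\gamma}{2}r_i+\tfrac{b}{K}\gamma\sum_j r_j\bigr]+\tfrac{2c}{a}\gamma$, exchanges the order of summation in the double sum, and then shows each bracketed contribution is nonpositive using the same inflation factor $(1-\tfrac{a\gamma}{2})^{1-K}\leq 2$ and $b\leq\tfrac{a}{4}$. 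Your induction buys a cleaner argument: it avoids the summation exchange (which is the most error-prone step in the paper's version, and where the paper's indices are written somewhat loosely), it tracks the constant term exactly rather than discarding the negative slack, and it proves the bound with the exponent $t$ exactly as stated, whereas the paper's unrolling as written lands on $r_{t+1}\leq(1-\tfrac{a\gamma}{2})^t r_0+\tfrac{2c}{a}\gamma$, an inconsequential but real off-by-one relative to the lemma statement. The unrolling approach, in exchange, makes visible where the decay rate $a/2$ comes from without presupposing the answer. Both hinge on exactly the same two quantitative facts, so the proofs are interchangeable in substance.
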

\begin{proof}
By assumption on $r_t$: 

\begin{align*}
    r_{t+1} \leq \left(1- \frac{a \gamma} {2}\right) r_t - \frac{a \gamma} {2} r_t  + \frac{b}{K} \gamma \sum_{i=\max\{0,t-K+1\}}^t r_i +  c \gamma^2 \,,
\end{align*}
and by unrolling the recursion:
\begin{align*}
    r_{t+1} &\leq \left(1- a \frac{\gamma}{2}\right )^t r_0 + \sum_{i=0}^t \left(1 - \frac{a \gamma}{2} \right)^{t-i} \left[ - \frac{a \gamma}{2} r_i + \frac{b}{K} \gamma \sum_{j=\max\{0,i-K+1\}}^i r_j \right] + \sum_{i=0}^t \left(1 - \frac{a \gamma}{2} \right)^{t-i} c \gamma^2 \\
    &\leq  \left(1- \frac{a \gamma}{2}\right)^t r_0 + \sum_{i=0}^t \left(1 - \frac{a \gamma}{2} \right)^{t-i} \left[ - \frac{a \gamma}{2} r_i + \frac{b}{K} \gamma \sum_{j=\max\{0,i-K+1\}}^i r_j \right] + \frac{2c}{a}\gamma \\
    &=  \left(1- a \frac{\gamma} {2}\right)^t r_0  +    \sum_{i=0}^t \left(1 - \frac{a \gamma}{2}\right)^{t-i} \left[- \frac{a \gamma}{2} r_i +  \frac{b}{K} \gamma \sum_{j=\max\{0,i-K-1\}}^i \left(1- \frac{a \gamma} {2}\right)^{i-j} r_i \right] + \frac{2c}{a}\gamma
\end{align*}
where we used $\sum_{i=0}^t (1- \frac{a\gamma}{2})^i \leq \frac{2}{a \gamma}$ (for $(\frac{a\gamma}{2}) < 1$) for the second inequality.

By estimating 
\begin{align*}
   - \frac{a \gamma}{2} r_i +  \frac{b}{K}\gamma \sum_{j=\max\{0,i-K-1\}}^i (1- \frac{a \gamma} {2})^{i-j} r_i 
    &\leq - \frac{a \gamma} {2} r_i + \frac{b}{K}  \gamma  \sum_{j=\max\{0,i-K-1\}}^i  \left(1- \frac{a \gamma} {2}\right)^{1-K} r_i \\
    & \leq - \frac{a \gamma} {2} r_i + b  \gamma r_i  \left(1- \frac{a \gamma} {2}\right)^{1-K} r_i \\
    &\leq - \frac{a \gamma} {2} r_i +  2 b  \gamma r_i  \leq 0\,,
\end{align*}
with and $(1- \frac{a \gamma}{2})^{1-K} \leq  2$ for $a\gamma \leq \frac{1}{K}$, 
and the assumption $ b \leq \frac{a}{4}$ (and $r_i \geq 0$). 

The validity of the inequality, $(1- \frac{a \gamma}{2})^{1-K} \leq  2$ for $a\gamma \leq \frac{1}{K}$ can be shown in the following way:
\[
\left(1 - \frac{a\gamma}{2}\right)^{1-K} \leq \left(1 - \frac{a\gamma}{2}\right)^{-K} \leq e^{\frac{a\gamma K}{2}}
\]
For the last inequality above we used the approximation \((1 - x)^{-n} \leq e^{nx}\) for \(x \geq 0\) and \(n \geq 0\):

Given that \(a\gamma  \leq \frac{1}{K}\), we have:
\[
e^{\frac{a\gamma K}{2}} \leq e^{\frac 1 2}.
\]

Thus, we have $$\left(1- \frac{a \gamma}{2}\right)^{1-K} \leq  2$$
Going back to the main proof, we conclude
\begin{align*}
    r_{t+1} &\leq \left(1- \frac{a \gamma}{2}\right)^t r_0 +  \frac{2c}{a}\gamma\,.
\end{align*}
as claimed.
\end{proof}

\begin{lemma}[\textbf{Consensus error}] \label{lemma: consensus error for two player games}
After running Decoupled SGDA for $K$ local steps at some round $r$ with a step-size of $\gamma \leq \frac{a}{32 LL_c K}$ for any constant $0<a<L$, the consensus error can be upper bounded as follows:
\begin{align}
     \E\norm{\xx_{t+1} - \xx_0}^2  &\leq \sum_{i=t+1-K}^{t} \frac{a^2}{64K L_c^2} \E\norm{\xx_i  - \xx^\star}^2+ 4K \gamma^2  \bar \sigma^2
\end{align}

\end{lemma}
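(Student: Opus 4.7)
The plan is to unroll the Decoupled SGDA updates within a single round and apply a standard bias--variance split. Telescoping the update $\xx_{i+1}-\xx_i = -\gamma\, \PPP^{-1} G_0(\xx_i,\xi_i)$ yields
\[
\xx_{t+1} - \xx_0 = -\gamma\, \PPP^{-1}\sum_{i=0}^{t} G_0(\xx_i,\xi_i).
\]
Using $\norm{a+b}^2 \le 2\norm{a}^2 + 2\norm{b}^2$ together with $\norm{\PPP^{-1}\mathbf{g}} = \norm{\mathbf{g}}_*$, I would separate the deterministic part $\sum_i F_0(\xx_i)$ from the zero-mean stochastic part $\sum_i(G_0(\xx_i,\xi_i)-F_0(\xx_i))$. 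For the stochastic part, conditioning on the samples drawn before $\xi_j$ kills all cross terms, and Assumption~\ref{assumption: noise of sgd} bounds each diagonal summand by $\bar\sigma^2$; since $t+1\le K$ this contributes at most $2K\gamma^2\bar\sigma^2$, which becomes $4K\gamma^2\bar\sigma^2$ after the final absorption step.

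For the deterministic part, I would bound each $\norm{F_0(\xx_i)}_*$ by inserting $F(\xx_i)$ and using $F(\xx^\star)=0$, so that Assumption~\ref{assumption:L_c} and Lipschitzness of $F$ from Assumption~\ref{assumption: L and L bar} give
\[
\norm{F_0(\xx_i)}_* \le \norm{F_0(\xx_i)-F(\xx_i)}_* + \norm{F(\xx_i)-F(\xx^\star)}_* \le L_c\norm{\xx_i-\xx_0} + L\norm{\xx_i-\xx^\star}.
\]
Squaring, using $(a+b)^2\le 2a^2+2b^2$, and applying Cauchy--Schwarz across the at-most-$K$ summands then produces the desired $\sum_i \E\norm{\xx_i-\xx^\star}^2$ term with coefficient $4\gamma^2 K L^2$, at the cost of a self-referential consensus term $4\gamma^2 K L_c^2 \sum_i \E\norm{\xx_i-\xx_0}^2$.

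The main obstacle is this self-reference. I would handle it by setting $M := \max_{0\le i\le K}\E\norm{\xx_i-\xx_0}^2$; since the bound holds uniformly in $t<K$, it implies $M \le 4\gamma^2 K^2 L_c^2\, M + (\text{desired RHS})$. The step-size condition $\gamma\le\frac{a}{32 L L_c K}$ with $a<L$ yields $4\gamma^2 K^2 L_c^2 \le \frac{a^2}{256 L^2}\le \frac{1}{256}$, so $M$ is absorbed into the left-hand side at the price of a constant factor bounded by $2$. The surviving coefficient on $\sum\E\norm{\xx_i-\xx^\star}^2$ is then at most $8\gamma^2 K L^2 \le \frac{a^2}{128\, K L_c^2}\le \frac{a^2}{64\, K L_c^2}$, which matches the stated constant. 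Finally, since the index range $\{0,\dots,t\}$ is contained in the claimed range $\{t+1-K,\dots,t\}$ (interpreted with the $\max\{0,\cdot\}$ convention used in Lemma~\ref{lemma: Sebastian's lemma}), enlarging the sum only adds non-negative terms and preserves the inequality.
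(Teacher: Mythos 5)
Your argument is correct in substance but proceeds by a genuinely different mechanism than the paper's. The paper never telescopes the whole sum: it performs a single-step bias--variance split, applies Young's inequality with parameter $1/K$ to get $\E\norm{\xx_{t+1}-\xx_0}^2 \le (1+\tfrac1K)\E\norm{\xx_t-\xx_0}^2 + 2K\gamma^2\E\norm{F_0(\xx_t)}_*^2 + \gamma^2\bar\sigma^2$, splits $F_0(\xx_t)$ against $F(\xx_t)$ exactly as you do, chooses $\gamma$ so that the self-referential coefficient folds into $(1+\tfrac1K+\tfrac1{256K})$, and then unrolls $K$ steps using $(1+\tfrac1K+\tfrac1{256K})^K\le 4$ and $\xx_0-\xx_0=0$. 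Your route --- unroll first, apply the martingale orthogonality to the noise sum, Cauchy--Schwarz across the $\le K$ deterministic summands, and absorb the resulting $\sum_i\E\norm{\xx_i-\xx_0}^2$ via a maximum --- is the standard ``client-drift'' style bound from local-SGD analyses; it reaches the same constants ($8\gamma^2KL^2\le \tfrac{a^2}{128KL_c^2}$ and $4K\gamma^2\bar\sigma^2$) and arguably makes the role of the step-size condition more transparent, while the paper's multiplicative recursion avoids any global absorption step. One caveat you should fix: defining $M:=\max_{0\le i\le K}\E\norm{\xx_i-\xx_0}^2$ forces you, when closing the fixed-point inequality, to bound $M$ by the right-hand side at $t=K-1$, so the surviving bound for a given $t<K-1$ contains terms $\E\norm{\xx_i-\xx^\star}^2$ with $i>t$ --- iterates in the \emph{future} of $\xx_{t+1}$ --- which is not what the lemma asserts and would obstruct the downstream application of Lemma~\ref{lemma: Sebastian's lemma} (whose window must consist of past iterates only). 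The repair is immediate: take $M_t:=\max_{0\le i\le t+1}\E\norm{\xx_i-\xx_0}^2$ and observe that every bound for $s\le t$ only references $\sum_{i=0}^{s}\subseteq\{0,\dots,t\}$, so the absorption closes with the claimed index range.
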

\begin{proof} Recall that $G_0(\xx,\xi) \equiv G_{\bar \xx = \xx_0}(\xx,\xi)$ and $\E_{\xi}[G_0(\xx,\xi)] = F_0(\xx)$ where $\xx_0 = (\uu_0,\vv_0)$ refers to the parameters of each player at the beginning of some round $r$. Using the update rule of our method we have: 
\begin{align*}
    &\E\norm{\xx_{t+1} - \xx_0}^2  \\
    &= \E\norm{\xx_t - \gamma \PPP^{-1} G_0 (\xx_t,\xi)- \xx_0}^2  \\
    &\leq \E\norm{\xx_t - \gamma \PPP^{-1}  F_0 (\xx_t)- \xx_0}^2 + \gamma^2 \bar \sigma^2\\
    &\leq \left(1 + \frac{1}{K} \right) \E\norm{\xx_{t} - \xx_0}^2  + 2K \gamma^2 \E\norm{F_0 (\xx_t)}^2_* + \gamma^2  \bar \sigma^2 \\
    &\leq \left(1 + \frac{1}{K} \right) \E\norm{\xx_{t} - \xx_0}^2  + 2K \gamma^2 \E\norm{F_0 (\xx_t) - F(\xx_t) + F(\xx_t)}^2_* +  \gamma^2 \bar \sigma^2 \\
    &\leq \left(1 + \frac{1}{K} \right) \E\norm{\xx_{t} - \xx_0}^2  + 4K \gamma^2 \E\norm{F_0 (\xx_t) - F(\xx_t)} + 4K \gamma^2\E\norm{F(\xx_t)}^2_* +  \gamma^2 \bar \sigma^2 \\
    &\leq \left(1 + \frac{1}{K} \right) \E\norm{\xx_{t} - \xx_0}^2  + 4K L_c^2\gamma^2  \E\norm{\xx_t-\xx_0}^2 + 4K L^2 \gamma^2 \E\norm{\xx_t  - \xx^\star}^2 + \gamma^2  \bar \sigma^2
\end{align*}

With the choice of $\gamma \leq \frac{a}{32K LL_c}$, we get: 
\begin{align*}
    &\E\norm{\xx_{t+1} - \xx_0}^2  \\
    &\leq\left(1 + \frac{1}{K} \right) \E\norm{\xx_{t} - \xx_0}^2 + \frac{a^2}{256K  L^2}\E\norm{\xx_t-\xx_0}^2+ \frac{a^2}{256K  L_c^2} \E\norm{\xx_t - \xx^\star}^2+ \gamma^2 \bar \sigma^2 \\
    &\leq\left(1 + \frac{1}{K} \right) \E\norm{\xx_{t} - \xx_0}^2 + \frac{1}{256K}\E\norm{\xx_t-\xx_0}^2+ \frac{a^2}{256K  L_c^2} \E\norm{\xx_t - \xx^\star}^2+ \gamma^2 \bar \sigma^2 \\
    &\leq\left(1 + \frac{1}{K}  + \frac{1}{256K }\right) \E\norm{\xx_{t} - \xx_0}^2 + \frac{a^2}{256K  L_c^2} \E\norm{\xx_t - \xx^\star}^2+ \gamma^2 \bar \sigma^2 \\
\end{align*}
where in the third line we used the fact that $\frac{a^2}{L^2} \leq 1$ due to the assumption $a < L$. By unrolling the recursion for the last $K$ steps and considering the fact that $\left(1 + \frac{1}{K} + \frac{1}{256K} \right)^K \leq 4$ we get: 
\begin{align*}
    \E\norm{\xx_{t+1} - \xx_0}^2  &\leq \sum_{i=t+1-K}^{t} \frac{a^2}{64K  L_c^2} \E\norm{\xx_i  - \xx^\star}^2+ 4K \gamma^2  \bar \sigma^2
\end{align*}
\end{proof}

Now we are ready to prove the following theorem.

\begin{theorem}[Decoupled SGDA for two-player Games]
For any $R, K$, after running Decoupled SGDA for a total of $T=KR$ iterations on a function $f$, with the stepsize $\gamma \leq \min\left\{\frac{\mu}{L^2}, \frac{\mu}{KL L_c},\frac{\mu}{K L_c^2}\right\}$ in the non weakly coupled regime, we get a rate of:
\[
    \E \bigl[\Norm{\xx_K^R-\xx^\star}^2 \bigr]
    \le
    D^2 \exp\Bigl( - \frac{\gamma \mu}{2} KR \Bigr)
    +
    \frac{2\bar \sigma^2 \gamma}{\mu} , 
\]
where $D = \norm{\xx_0-\xx^\star}$.
\end{theorem}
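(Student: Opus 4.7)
The plan is to carry out a standard descent-type analysis for the iterate $\xx_t$, but with the gradient oracle $G_0$ of Decoupled SGDA rather than the true $G$, and to control the resulting bias via the coupling constant $L_c$ and the consensus error $\|\xx_t - \xx_0\|^2$. First, I would expand
\[
\E\|\xx_{t+1}-\xx^\star\|^2
\le
\|\xx_t-\xx^\star\|^2 - 2\gamma \langle F_0(\xx_t), \xx_t - \xx^\star\rangle + \gamma^2 \|F_0(\xx_t)\|_*^2 + \gamma^2 \bar\sigma^2,
\]
using Assumption~\ref{assumption: noise of sgd} and the unbiasedness of $G_0$. The key decomposition is $F_0(\xx_t) = F(\xx_t) + (F_0(\xx_t) - F(\xx_t))$. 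Strong monotonicity of $F$ together with $F(\xx^\star)=0$ gives $\langle F(\xx_t),\xx_t-\xx^\star\rangle \ge \mu\|\xx_t-\xx^\star\|^2$, while Assumption~\ref{assumption:L_c} and Cauchy--Schwarz yield $\langle F_0(\xx_t)-F(\xx_t),\xx_t-\xx^\star\rangle \ge -L_c\|\xx_t-\xx_0\|\|\xx_t-\xx^\star\|$, which I absorb using Young's inequality into an $\mathcal{O}(\mu)\|\xx_t-\xx^\star\|^2$ term and an $\mathcal{O}(L_c^2/\mu)\|\xx_t-\xx_0\|^2$ term. For the squared norm, I would use $\|F_0(\xx_t)\|_*^2 \le 2L^2\|\xx_t-\xx^\star\|^2 + 2L_c^2\|\xx_t-\xx_0\|^2$, again via the same decomposition together with Assumption~\ref{assumption: L and L bar} and the identity $F(\xx^\star)=0$.

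With $\gamma \le \mu/L^2$, the quadratic term $\mathcal{O}(\gamma^2 L^2)$ is absorbed into the linear contraction, producing
\[
\E\|\xx_{t+1}-\xx^\star\|^2
\le
(1 - c_1 \gamma \mu)\,\E\|\xx_t-\xx^\star\|^2
+ \frac{c_2\,\gamma L_c^2}{\mu}\,\E\|\xx_t-\xx_0\|^2 + \gamma^2 \bar\sigma^2,
\]
for explicit constants $c_1, c_2 > 0$. The second step is to eliminate the consensus term by invoking Lemma~\ref{lemma: consensus error for two player games} with a suitable choice of the free parameter $a$ (e.g., $a = \Theta(\mu)$, which is valid as long as $\mu \le L$ and the stepsize condition $\gamma \le \mu/(KLL_c)$ holds). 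This bounds $\E\|\xx_t-\xx_0\|^2$ by a $\frac{1}{K}\sum_{i=t-K}^{t-1}\E\|\xx_i-\xx^\star\|^2$ term (with a small constant of order $\mu^2/L_c^2$) plus an $\mathcal{O}(K\gamma^2\bar\sigma^2)$ noise term.

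Substituting back into the recursion gives exactly the form required by Lemma~\ref{lemma: Sebastian's lemma}:
\[
r_{t+1} \le (1 - c_1\gamma\mu)\,r_t + \frac{b}{K}\gamma \sum_{i=t-K+1}^{t} r_i + c\,\gamma^2,
\]
where $r_t := \E\|\xx_t-\xx^\star\|^2$ and where, with the $a=\Theta(\mu)$ choice from the consensus lemma, the constant $b$ is of order $\mu$ and satisfies $b \le c_1\mu/4$; the auxiliary stepsize condition $\gamma \le \mu/(K L_c^2)$ ensures that the extra noise term $\mathcal{O}(K L_c^2\gamma^3\bar\sigma^2/\mu)$ arising from the consensus bound is absorbed into the baseline $\gamma^2\bar\sigma^2$, so that $c = \mathcal{O}(\bar\sigma^2)$. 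Finally, applying Lemma~\ref{lemma: Sebastian's lemma} and unrolling over $t = KR$ iterations produces the stated bound $D^2\exp(-\gamma\mu KR/2) + 2\gamma\bar\sigma^2/\mu$, after tuning the numerical constants.

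The main obstacle is bookkeeping: the three stepsize constraints of the theorem correspond to three distinct requirements---(i) $\gamma \le \mu/L^2$ for the one-step contraction, (ii) $\gamma \le \mu/(KLL_c)$ as the hypothesis of Lemma~\ref{lemma: consensus error for two player games}, and (iii) $\gamma \le \mu/(KL_c^2)$ to kill the secondary $\mathcal{O}(K\gamma^3)$ noise term generated by the consensus bound. One must choose the free parameter $a$ in Lemma~\ref{lemma: consensus error for two player games} carefully (proportional to $\mu$, but small enough so that $a\le L$) so that Lemma~\ref{lemma: Sebastian's lemma}'s hypothesis $b \le a'/4$ and $a'\gamma \le 1/K$ are all simultaneously satisfied with the constants derived from the one-step analysis; this is essentially a matter of careful constant tracking rather than a new conceptual difficulty.
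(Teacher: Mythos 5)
Your proposal is correct and follows essentially the same route as the paper's own proof: the same decomposition $F_0 = F + (F_0 - F)$ with the bias controlled by $L_c\norm{\xx_t-\xx_0}$ (the paper applies Young's inequality to the grouped vectors rather than to the expanded cross term, which is an equivalent manipulation), followed by the same invocation of the consensus-error lemma with $a=\mu$ and the same unrolling via Lemma~\ref{lemma: Sebastian's lemma}. Your accounting of which of the three stepsize constraints serves which purpose also matches the paper.
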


\begin{proof}
We start by upper bounding the iterate $\xx$ at time step $t+1$ from the equilibrium. Recall that $F_0(\xx) \equiv F_{\bar \xx = \xx_0}(\xx)$ where $\xx_0$ refers to the parameters of players at the beginning of some round $r$.

\begin{align*}
    &\E\norm{\xx_{t+1} - \xx^\star}^2 \\
    &\leq \E\norm{\xx_t - \gamma \PPP^{-1} G_0(\xx_t,\xi) - \xx^\star}^2 + \gamma^2 \bar \sigma^2 \\
    &\leq \E\norm{\xx_t - \gamma \PPP^{-1} F_0(\xx_t) - \xx^\star}^2 + \gamma^2 \bar \sigma^2\\
    &= \E\norm{\xx_t - \gamma \PPP^{-1} F(\xx_t) - \xx^\star + \gamma \PPP^{-1} F(\xx_t) - \gamma \PPP^{-1} F_0(\xx_t)}^2 + \gamma^2 \bar \sigma^2\\
    &\leq \left(1+\frac{\gamma \mu}{2}\right) \left[\E \norm{\xx_t - \gamma \PPP^{-1} F(\xx_t) - \xx^\star}^2 \right] + \gamma \left(\gamma + \frac{2}{ \mu }\right) \E\norm{ F(\xx_t) -  F_0(\xx_t)}^2_* + \gamma^2  \bar \sigma^2\\
    &= \left(1+\frac{\gamma \mu}{2}\right) \left[\E\norm{\xx_t - \xx^\star}^2 + \gamma^2 \E\norm{F(\xx_t) - F(\xx^\star)}^2_* -2  \gamma \langle F(\xx_t) - F(\xx^\star),\xx_t - \xx^\star \rangle \right] + \\
    &\hspace{1cm}\gamma \left(\gamma + \frac{2}{ \mu }\right)\E \norm{ F(\xx_t) -  F_0(\xx_t)}^2_* + \gamma^2 \bar \sigma^2 \\
    &= \left(1+\frac{\gamma \mu}{2}\right) \left[(1+\gamma^2 L^2 - 2\gamma \mu)\E\norm{\xx_t - \xx^\star}^2 \right] + \gamma \left(\gamma + \frac{2}{ \mu }\right) \E\norm{ F(\xx_t) -  F_0(\xx_t)}^2_*  + \gamma^2 \bar \sigma^2\\
    &\leq \left(1+\frac{\gamma \mu}{2}\right) \left[(1-\gamma \mu)\E\norm{\xx_t - \xx^\star}^2 \right] + \frac{3\gamma }{\mu}\E \norm{ F(\xx_t) -  F_0(\xx_t)}^2_*  + \gamma^2 \bar \sigma^2\\
    &\leq \left(1-\frac{\gamma \mu}{2}\right) \E\norm{\xx_t - \xx^\star}^2 + \frac{3\gamma L_c^2}{\mu} \E\norm{\xx_t - \xx_0}^2 + \gamma^2 \bar \sigma^2
\end{align*}

Where we assumed that $\gamma \leq \frac{\mu}{L^2}$. Now by using the upper bound on consensus error from Lemma \ref{lemma: consensus error for two player games} and setting $a = \mu$ we get: 
\begin{align*}
    &\E\norm{\xx_{t+1} - \xx^\star}^2 \\
    &= \left(1-\frac{\gamma \mu}{2}\right)\E \norm{\xx_t - \xx^\star}^2 + \frac{\gamma \mu}{16K} \sum_{i=\max\{0,t-K\}}^t \E\norm{\xx_i - \xx^\star}^2 + \left(1+\frac{12K\gamma  L_c^2}{\mu}\right)\gamma^2 \bar \sigma^2 \\ 
\end{align*}
With the choice of $\gamma \leq \frac{\mu}{12K L_c^2}$ we have: 
\begin{align*}
    &\E\norm{\xx_{t+1} - \xx^\star}^2 \\
    &\left(1-\frac{\gamma \mu}{2}\right)\E \norm{\xx_t - \xx^\star}^2 + \frac{\gamma \mu}{16K} \sum_{i=\max\{0,t-K\}}^t \E\norm{\xx_i - \xx^\star}^2  + 2\frac{\gamma \bar \sigma^2}{\mu} 
\end{align*}
By unrolling the recursion using Lemma \ref{lemma: Sebastian's lemma} we get: 
\begin{align*}
    \E\norm{\xx^R_K - \xx^\star}^2 \leq D^2\exp\left(-\frac{\mu^2}{L^2}R\right) + 2\frac{\gamma^2 }{\mu}\bar \sigma^2
\end{align*}

\end{proof}

\section{Decoupled SGD for $N$-player games} \label{app:n_player_setting}

In this section, we generalize all previous results on two-player games to $N$-player games. We first introduce the notation that is needed to define $N$-player games and will be used to establish our convergence guarantees. 

\subsection{Setting and Preliminaries}

\paragraph{Notation. }We consider unconstrained $N$-player games where each player $\xx^i$ belongs to the space $\mathcal{X}_i = \R^{d_i}$. The vector $\xx = (\xx^1,\hdots,\xx^N) \in \R^d$ is defined in the space $\mathcal{X} = \mathcal{X}_1 \times \hdots \times \mathcal{X}_N = \R^d$ with $d = \sum_{i=1}^N d_i$. The space $\mathcal{X}_i$ for all $i \in [N]$ is equipped with a certain Euclidean norm, $\Norm{\xx^i}_i := \langle \PPP_i \xx^i, \xx^i\rangle^{1/2}$, where $\PPP_i$ is a positive definite matrix. The norm in the space $\mathcal{X}$ is then defined by $\Norm{\xx} = ( \sum_{i=1}^N \alpha_i\Norm{\xx^i}_i^2 )^{1 / 2}$ where $\alpha_i > 0$; thus, $\Norm{\xx} = \langle \PPP \xx, \xx \rangle^{1 / 2}$, where $\PPP$ is the block-diagonal matrix with blocks $\alpha_i \PPP_i$ ($\PPP = \diag(\alpha_1\PPP_1,\hdots,\alpha_N\PPP_N)$). The dual norms are defined as: $\Norm{\mathbf{g}_i}_{i, *} := \max_{\| \xx^i \|_i = 1} \langle \mathbf{g}_i, \xx^i \rangle = \langle \mathbf{g}_i, \PPP_i^{-1} \mathbf{g}_i \rangle^{1 / 2}$ ($\mathbf{g}_i \in \mathcal{X}_{d_i}$) and $\Norm{\mathbf{g}}_*:= \max_{\| \xx \| = 1} \langle \mathbf{g},\xx \rangle = (\sum_{i=1}^N \frac{1}{\alpha_i} \| \mathbf{g}_i \|_{i, *}^2 )^{1 / 2} = \langle \mathbf{g},\PPP^{-1} \mathbf{g} \rangle^{1 / 2}$ ($\mathbf{g} \equiv (\mathbf{g}_1, \hdots, \mathbf{g}_N) \in \mathcal{X}$).

Similar to the work \cite{nesterov2012efficiency}, we define the following partitioning of the identity matrix: 
\begin{align*}
    \II_d = (\UU_1,\UU_2,\hdots,\UU_N) \in \R^{d\times d},\, d = \sum_{i=1}^N d_i,\, \UU_i \in \R^{d \times d_i}
\end{align*}
Now we can represent the vector $\xx$ as follows: 
\begin{align*}
    \xx = \sum_{i=1}^N \UU_i \xx^i \in \mathcal{X}.
\end{align*}
We can extract the parameters of one player as follows:  
\begin{align*}
   \xx^n = \UU_n^\top \xx \in \mathcal{X}_n
\end{align*}

\paragraph{Problem Formulation. } An $N$-player games is defined as: 
\begin{equation} \tag{$N$-player}\label{eq:  N player games}
\left( \underset{\xx^1}{\min} f_1(\xx), \hdots , \underset{\xx^N}{\min}f_N(\xx) \right)
\end{equation}
Where $f_n \colon \mathcal{X}  \to \R$.

The goal is to find the Nash Equilibrium in $\xx^\star = (\xx^{\star 1},\hdots,\xx^{\star N})$ like in the work \cite{bravo2018bandit}, which has the property that if one player changes their strategy, their payoff function will increase. In other words, there is no incentive to change one strategy alone:
for all $\hh_n \in \mathcal{X}_n$, it holds that
\begin{align}
    f_n(\xx^\star) \leq f_n(\xx^\star + \UU_n \hh_n).
\end{align}

Moreover, we define the operator $F_{\bar \xx}(\xx):\mathcal{X} \to \mathcal{X}$ with respect to the fixed point $\bar \xx \in \mathcal{X}$ which denotes the stack of gradients with respect to each player's parameters as follows:
\begin{align*}
F_{\bar \xx}(\xx) := (\nabla_{1} f_1(\bar \xx + \UU_1 \hh), \ldots, \nabla_{N} f_N(\bar \xx + \UU_N \hh))
\end{align*}
Where $\hh = \xx - \bar \xx$. We can recover the commonly used operator as $F(\xx) \equiv F_{\bar \xx = \xx}(\xx) = (\nabla_1 f_1(\xx),\hdots,\nabla_N f_N(\xx))$. Note that for the equilibrium, it holds that $F(\xx^\star)= 0$ while in general $F_{\bar \xx}(\xx^\star) \neq 0$. We can extract the partial gradient with respect to one player as follows: 
\begin{align*}
    \nabla_n f_n(\xx) = \UU_n^\top  F_{\bar \xx} (\xx).
\end{align*}
We now present the assumptions required for the convergence of our method.

\begin{assumption}[Strong monotonicity] \label{assumption: strong monotonicity n player}
Operators $F_{\bar \xx}$ and $F$ are strongly monotone with parameters $\bar \mu, \mu >0$, i.e., for all \( \xx, \bar \xx, \xx' \in \mathcal{X} \), the following inequalities hold:
\begin{equation}
\begin{aligned}
    &\langle F_{\bar \xx}(\xx) - F_{\bar \xx}(\xx'), \xx - \xx' \rangle \geq \bar \mu \Norm{\xx - \xx'}^2. \\ 
    &\langle F(\xx) - F(\xx'), \xx - \xx' \rangle \geq \mu \Norm{\xx - \xx'}^2.
\end{aligned}  
\end{equation}
\end{assumption}
We can show that $\bar \mu = \min_{1 \leq i \leq N } \{\frac{\mu_i}{\alpha_i}\}$ (Proof in Lemma \ref{lemma: mu bar expression n player}) where $\langle \nabla_n f_n(\xx) - \nabla_n f_n(\xx + \UU_n \dd_n), \xx^n- \xx'^n \rangle \geq \mu_n \Norm{\mathbf{x}'^n - \mathbf{x}^n}^2_n$.

\begin{assumption}[Lipschitz gradients] \label{assumption: smoothness for n player games appendix}
Operators $F_{\bar \xx},\,F: \mathcal{X} \to \mathcal{X}$ are Lipschitz with parameters $\bar L$ and $L$ if for all $\xx,\bar \xx,\xx' \in \mathcal{X}$, the following inequality holds: 
\begin{equation}
\begin{aligned}\label{eq: smoothness for n player games}
    &\Norm{ F_{\bar \xx}(\xx) - F_{\bar \xx}(\xx')}_* \leq \bar L \Norm{\xx - \xx'} \\
    &\Norm{ F(\xx) - F(\xx')}_* \leq L \Norm{\xx - \xx'} 
\end{aligned}
\end{equation}
\end{assumption}

\begin{assumption} \label{assumption: F-F0 for N player}
The norm of the difference between operators $F_{\bar \xx}(\xx)$ and $F(\xx)$ is upper bounded with parameter $ L_c$ for all $\xx,\bar \xx \in \mathcal{X}$ as follows: 
\begin{equation}
\begin{aligned}\label{eq: smoothness for n player games2}
    \Norm{ F_{\bar \xx}(\xx) - F(\xx)}_* \leq  L_c \Norm{\xx - \bar \xx} 
\end{aligned}
\end{equation}
\end{assumption}
It's possible to show $ L_c = (\max_{1 \leq i \leq N} \sum_{j \neq i} \frac{\bar L^2_j }{\alpha_j})^{1/2}$ where $\bar L_n$ is defined as $\Norm{\nabla_n f_n(\xx) - \nabla_n f_n( \xx+ {\textstyle\sum_{i\neq n}} \UU_i \hh_i )}_{n,*}  \leq \bar{L}_n \Norm{{\textstyle\sum_{i\neq n}} \UU_i \hh_i}$ for any $\xx \in \mathcal{X}$ and any $\hh_n \in \mathcal{X}_n$ (Proof in Lemma \ref{lemma: L bar expression for n player}). The parameter $\bar L_n$ corresponds to smoothness parameter of $f_n$ when we take gradient with respect to player $n$ while varying all other parameters (and fixing the parameters of player $n$). If $\bar L_n=0$ for all $n$, it means for any two players $i,j \in [N],\, i \neq j$, they have no interaction.

\begin{assumption}\label{assumption: noise of sgd for n player games}
    There exists finite constants $\bar \sigma^2$ 
    such that for all $\xx,\bar \xx \in \mathcal{X}$:
    \begin{align}
        \E_{\xi} \left[\Norm{G_{\bar \xx}(\xx,\xi) - F_{\bar \xx}(\xx)}^2_*\right] \leq \bar \sigma^2\,.
    \end{align}
    Where $\E[G_{\bar \xx}(\xx,\xi)] = F_{\bar \xx}(\xx)$.
\end{assumption}
As we assumed that the above inequality holds for all $\bar \xx \in \mathcal{X}$, we also cover the common operator $F$ and we denote $G(\xx,\xi) \equiv G_{\bar \xx = \xx} (\xx,\xi)$.

\subsection{Method}
\paragraph{Local update methods. }As discussed in the two-player section, Local update methods proposed to reduce the communication overhead in distributed optimization. In this context, it's reasonable to assume each player has access to their own stochastic oracle $G_i(\xx,\xi)$ for all $i \in [N]$ with the property $\E[G_i(\xx,\xi)] = F(\xx)$ and following bound on the variance of the noise:
\begin{align*}
    &\E_{\xi} [\Norm{[G_i(\xx,\xi)]_i - [F(\xx)]_i}^2_{i,*}] \leq \sigma^2_{ii}, \\
    &\E_{\xi} [\Norm{[G_i(\xx,\xi)]_j - [F(\xx)]_j}^2_{j,*}] \leq \sigma^2_{ij}\quad \text{for } i \neq j
\end{align*}
However, as we discussed we are considering a setting where the players may not have access to other players' strategies or gradients, and only assume that the private components of the gradients have bounded variance ($\bar \sigma^2 \leq \sum_{i=1}^N\sigma^2_{ii}$). On the other hand, local update methods require the variance $\sigma^2_{ij}$ to be bounded as well while we allow that to be arbitrarily large. We introduce the oracle $G_0(\xx,\xi) \equiv G_{\bar \xx = \xx^r_0}(\xx,\xi)$ for $i=\{u,v\}$ where $\xx_0=(\uu_0,\vv_0)$ refers to the parameters of each player at the beginning of the round. Our operator only uses the reliable information which is $[G_i(\xx,\xi)]_i$ for player $i$. Now we can write the update rule of our method as:
\begin{align}\label{eq: operator based update rule of decoupled sgd for n player}
\xx^r_{t + 1} = \xx^r_t - \gamma \PPP^{-1} G_0(\xx_t^r,\xi_{t}),
\end{align}
where
\[
G_0(\xx,\xi) \equiv \bigl( \nabla_i f_i(\xx_0 + \UU_i \UU_i^\top (\xx - \xx_0));\xi)\bigr)_{1 \leq i \leq N}.
\]
Here, the index $t$ denotes the local update step in the current local update phase on player $i$, and the superscript $r$ indexes the local phases. One communication round is needed for exchanging the updated parameters $\xx_{K}^r$ when passing to the next round. Note that $\xx^{r,i}_t \in \mathcal{X}_{i}$ and $\xx^{r}_t \in \mathcal{X}$.

\begin{algorithm*}
\caption{Decoupled SGD for $N$-player games}
\label{alg:N-player-games} 
\begin{algorithmic}[1]
\State \textbf{Input:} step size $\gamma$, initialization $\xx_0 = (\xx^1_0,\hdots,\xx^N_0), R, K$   
\For{$r\in\{1, \ldots, R\}$}
    \For{$t\in\{1, \ldots, K\}$}
        \For{$n\in\{1, \ldots, N\}$ \textbf{in parallel}}
            \State Update local model $\xx^{n,r}_{t+1} \gets \xx^{n,r}_{t} -  \gamma \alpha_n^{-1} \nabla_n f_n(\xx_0 + \UU_n (\xx^{r}_t - \xx_0);\xi_t)$ 
        \EndFor
    \EndFor
    \State \textbf{Communicate} $\left[\xx^{1,r}_K,\dots, \xx^{N,r}_K\right]^\top$ to all players
\EndFor
\State \textbf{Output:} $\xx^R_K = (\xx^{1,R}_K,\dots, \xx^{N,R}_K)$ 
\end{algorithmic}   
\end{algorithm*}

\subsection{Convergence Guarantee} \label{section: convergence guarantee for N player games}
Now we out to a change in the definition of weakly coupled games in $N$-player setting. 

\begin{definition}[Weakly Coupled and Fully Decoupled Games]
    Given an $N$-player game in the form of \ref{eq:  N player games}. We define the \textbf{coupling degree} parameter $\kappa_c$ for this game as follows: 
    \setlength{\fboxrule}{.4mm}  
    \begin{align}
        \boxed{\kappa_c := \frac{ L_c}{\bar \mu}}
    \end{align}
    
    This variable measures the level of interaction in the game. A smaller value of $\kappa_c$ indicates less interaction. We say the game is \textbf{Weakly Coupled} if the following inequality holds: 
    \begin{align}
          \kappa_c \leq \frac{1}{4}
    \end{align}
     We say the game is \textbf{Fully Decoupled} if we have $\kappa_c = 0$ which implies each player is minimizing their own pay-off function independently.
\end{definition}

\begin{theorem}\label{theorem: decoupled sgda for n player games}

For any $R \geq 1$ and any $K \geq  \frac{1}{\gamma \mu} \log \left(\frac{4}{\kappa_c}\right)$, after running Decoupled SGDA for a total of $T=KR$ iterations on a function $f$, with the stepsize $\gamma \leq \frac{\bar \mu}{\bar L^2}$ in the weakly coupled regime ($4 \kappa_c \leq 1$), we get a rate of:

\begin{align*}
    \E \bigl[\Norm{\xx_K^R-\xx^\star}^2 \bigr] \leq D^2 \exp\Bigl( - (1-4\kappa_c)R \Bigr) + \frac{\bar \sigma^2 \gamma}{\mu}\cdot\frac{8 \kappa_c}{1-4\kappa_c}.
\end{align*}

Moreover, For any $R,K\geq 1$, after running Decoupled SGDA for a total of $T=KR$ iterations on a function $f$, with the stepsize $\gamma \leq \min\left\{\frac{\mu}{L^2}, \frac{\mu}{KL L_c}\right\}$ in the non-weakly coupled regime, we get a rate of:
\[
    \E \bigl[\Norm{\xx_K^R-\xx^\star}^2 \bigr]
    \le
    D^2 \exp\Bigl( - \frac{\gamma \mu}{2} KR \Bigr) +\frac{2\bar \sigma^2 \gamma}{\mu}.
\]
where $D = \norm{\xx_0-\xx^\star}$.

\end{theorem}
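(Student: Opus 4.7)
The plan is to mirror the two-player proof essentially verbatim, exploiting the fact that the entire argument in Sections~\ref{sec: weakly coupled proof} and \ref{sec: non weakly coupled proof} was phrased in terms of abstract properties of the operators $F$ and $F_{\bar\xx}$ (strong monotonicity with constant $\bar\mu$, Lipschitzness with constants $\bar L, L$, the coupling bound $L_c$, and the noise bound $\bar\sigma^2$). Assumptions~\ref{assumption: strong monotonicity n player}--\ref{assumption: noise of sgd for n player games} give us exactly the same abstract structure in the $N$-player setting, and the update rule \eqref{eq: operator based update rule of decoupled sgd for n player} is identical in form to the two-player update. Thus the main task is to verify that each auxiliary lemma used in the two-player proof has a clean $N$-player analog, and then string these analogs together.

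For the weakly coupled case, the first step is to introduce, at the start of each round $r$, the point $\xx^\star_0 \in \mathcal{X}$ satisfying $F_{\bar\xx = \xx_0^r}(\xx^\star_0) = 0$; existence and uniqueness follow from Assumption~\ref{assumption: strong monotonicity n player}. I would then prove the $N$-player version of Lemma~\ref{lemma: upper bounding x-x'}, namely
\begin{equation*}
    \E \norm{\xx_K - \xx^\star_0}^2 \leq (1-\gamma\bar\mu)^K \norm{\xx_0 - \xx^\star_0}^2 + \frac{\gamma \bar\sigma^2}{\bar\mu},
\end{equation*}
for $\gamma \leq \bar\mu/\bar L^2$, by the same one-step contraction calculation using strong monotonicity and $\bar L$-smoothness of $F_{\bar\xx}$. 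Next, Lemma~\ref{lemma: bound on x'-x*} already applies unchanged and yields $\norm{\xx^\star_0 - \xx^\star} \leq \kappa_c \norm{\xx_0 - \xx^\star}$. Splitting $\norm{\xx_{t+1} - \xx^\star}^2 \le 2\norm{\xx_{t+1}-\xx^\star_0}^2 + 2\norm{\xx^\star_0-\xx^\star}^2$, choosing $K \geq \frac{1}{\gamma\bar\mu}\log(4/\kappa_c)$ so that $4\exp(-\gamma\bar\mu K) \leq \kappa_c$, and combining the two bounds gives the between-rounds recursion
\begin{equation*}
    \E\norm{\xx^{r+1}-\xx^\star}^2 \leq 4\kappa_c \, \E\norm{\xx^r - \xx^\star}^2 + \frac{2\gamma\bar\sigma^2}{\bar\mu},
\end{equation*}
which unrolls via Lemma~\ref{lemma: recursion} and the geometric series (using $4\kappa_c \leq 1$) to the stated bound.

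For the non-weakly-coupled case, I would follow the structure of Section~\ref{sec: non weakly coupled proof}. The key ingredient is an $N$-player consensus error lemma analogous to Lemma~\ref{lemma: consensus error for two player games}: using the decomposition $G_0 = (G_0 - F) + F$, Assumption~\ref{assumption: F-F0 for N player} on $\norm{F_0(\xx_t)-F(\xx_t)}_*^2 \leq L_c^2 \norm{\xx_t - \xx_0}^2$, and $L$-Lipschitzness of $F$, I obtain a recursion of the form $\E\norm{\xx_{t+1}-\xx_0}^2 \leq (1+1/K + o(1/K))\E\norm{\xx_t-\xx_0}^2 + \tfrac{\mu^2}{64 K L_c^2}\E\norm{\xx_t-\xx^\star}^2 + O(K\gamma^2\bar\sigma^2)$, provided $\gamma \leq \mu/(32 K L L_c)$. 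After that, a one-step descent bound on $\norm{\xx_{t+1}-\xx^\star}^2$, obtained by adding and subtracting $F(\xx_t)$ inside the update, yields a recursion of the form required by Lemma~\ref{lemma: Sebastian's lemma} with $a = \mu$, $b = \mu/16$, and $c \lesssim \bar\sigma^2$. Applying that lemma gives the claimed $\exp(-\gamma\mu K R / 2)$ contraction plus the $2\bar\sigma^2\gamma/\mu$ noise floor.

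The only real obstacle is bookkeeping: verifying that the two-player scalar constants $\bar\mu$, $\bar L$, $L$, $L_c$, $\bar\sigma^2$ are replaced consistently by their $N$-player counterparts from Lemmas~\ref{lemma: mu bar expression n player} and \ref{lemma: L bar expression for n player}, and checking that the block-diagonal preconditioning $\PPP = \diag(\alpha_1 \PPP_1,\ldots,\alpha_N \PPP_N)$ interacts correctly with $\PPP^{-1} G_0(\xx_t,\xi)$ in the one-step inequalities. Since the proofs in Section~\ref{sec: weakly coupled proof} and \ref{sec: non weakly coupled proof} were already written with general $\PPP$ and the dual-norm identities $\norm{\mathbf{g}}_*^2 = \sum_i \alpha_i^{-1}\norm{\mathbf{g}_i}_{i,*}^2$, this reduces to straightforward verification rather than a new idea; no additional structural lemma beyond those already established in the appendix is needed.
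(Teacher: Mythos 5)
Your proposal is correct and matches the paper's own approach exactly: the paper proves the $N$-player analogs of the structural lemmas (the expressions for $\bar\mu$, $L_c$, and $\kappa_c$ in Lemmas~\ref{lemma: mu bar expression n player}--\ref{lemma: kappa bar expressed n player}) and then states that the two-player arguments of Sections~\ref{sec: weakly coupled proof} and~\ref{sec: non weakly coupled proof}, being written purely in terms of the abstract operators $F$ and $F_{\bar\xx}$, carry over verbatim. Your write-up in fact spells out the round-wise recursion and the consensus-error step in more detail than the paper does for this theorem.
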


\begin{corollary}
    With the choice of $\gamma = \frac{\bar \mu}{RL^2}$ if the game is weakly coupled we get:

    \[
        \E \bigl[\Norm{\xx_K^R-\xx^\star}^2\bigr]
        \le
        D^2 \exp\Bigl( - (1-4\kappa_c)R \Bigr)+ \bar \sigma^2 \frac{8\bar \mu\kappa_c }{R\mu L^2(1-4\kappa_c)}. 
    \]
Consequently, to reach $\E [\Norm{\xx_K^R-\xx^\star}^2] \leq \epsilon$, it suffices to perform $R = \max\{\frac{1}{4-\kappa_c} \log (\frac{2D^2}{\epsilon}),\frac{16 \bar \mu \kappa_c \bar \sigma^2}{\mu L^2(1-4\kappa_c)\epsilon}\}$ rounds with $K = \frac{L^2}{\mu \bar \mu}\log(\frac{4}{\kappa_c})$. Moreover, with the choice of $\gamma = \min\bigl\{\frac{\mu}{32KL^2},\frac{1}{\mu KR} \log(\max\{2, \frac{\mu^2 D^2}{\bar \sigma^2} K R\})\bigr\}$ if the game is not weakly coupled we get: 
    \begin{align*}
        \E \bigl[\Norm{\xx_K^R-\xx^\star}^2\bigr] \leq D^2 \exp\Bigl( - \frac{\mu^2}{2L^2} R \Bigr) +  \frac{\bar \sigma^2}{\mu^2KR}.
    \end{align*}
  
    Consequently, to reach $\E [\Norm{\xx_K^R-\xx^\star}^2] \leq \epsilon$, it suffices to perform $R = \frac{2L^2}{\mu^2} \log(\frac{D^2}{\epsilon})$ with $K = \frac{2 \bar \sigma^2}{\mu^2 \epsilon}$. 
\end{corollary}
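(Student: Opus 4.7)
My plan is to mirror the two-player argument (Sections \ref{sec: weakly coupled proof} and \ref{sec: non weakly coupled proof}) essentially verbatim, relying on the fact that every structural lemma used there has already been lifted to the $N$-player setting: Assumption \ref{assumption: strong monotonicity n player} gives $\bar\mu$-strong monotonicity of $F_{\bar\xx}$, Assumption \ref{assumption: smoothness for n player games appendix} gives $\bar L$-Lipschitzness of $F_{\bar\xx}$, Assumption \ref{assumption: F-F0 for N player} gives the coupling bound $\|F_{\bar\xx}(\xx)-F(\xx)\|_* \leq L_c\|\xx-\bar\xx\|$, and the stochastic oracle satisfies Assumption \ref{assumption: noise of sgd for n player games}. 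These are exactly the four ingredients used in the two-player proofs, so once they are in place the argument is operator-theoretic and does not see the number of players.

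For the weakly coupled case, I would first prove the $N$-player analogue of Lemma \ref{lemma: upper bounding x-x'}: let $\xx_0^\star$ denote the root of $F_0 := F_{\bar\xx=\xx_0}$ (which exists and is unique by $\bar\mu$-strong monotonicity). Expanding $\|\xx_{t+1}-\xx_0^\star\|^2$ via the update rule \eqref{eq: operator based update rule of decoupled sgd for n player}, taking conditional expectation, and using $\bar\mu$-strong monotonicity together with $\bar L$-Lipschitzness yields $\E\|\xx_{t+1}-\xx_0^\star\|^2 \leq (1-\gamma\bar\mu)\E\|\xx_t-\xx_0^\star\|^2 + \gamma^2\bar\sigma^2$ for $\gamma \leq \bar\mu/\bar L^2$, which unrolls over $K$ local steps to $(1-\gamma\bar\mu)^K \E\|\xx_0-\xx_0^\star\|^2 + \gamma\bar\sigma^2/\bar\mu$ via Lemma \ref{lemma: recursion}. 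Next I would establish the $N$-player version of Lemma \ref{lemma: bound on x'-x*}: since $F(\xx^\star)=F_0(\xx_0^\star)=0$ and $F_0$ is $\bar\mu$-strongly monotone, $\|\xx_0^\star-\xx^\star\| \leq \frac{1}{\bar\mu}\|F_0(\xx^\star)-F(\xx^\star)\|_* \leq \frac{L_c}{\bar\mu}\|\xx_0-\xx^\star\| = \kappa_c\|\xx_0-\xx^\star\|$. Combining these via the triangle inequality $\|\xx_K-\xx^\star\|^2 \leq 2\|\xx_K-\xx_0^\star\|^2 + 2\|\xx_0^\star-\xx^\star\|^2$ and choosing $K$ large enough so that $4\exp(-\gamma\bar\mu K) \leq \kappa_c$ gives a per-round contraction $\E\|\xx^{r+1}-\xx^\star\|^2 \leq 4\kappa_c \E\|\xx^r-\xx^\star\|^2 + 2\gamma\bar\sigma^2/\bar\mu$, which unrolls over $R$ rounds under $4\kappa_c\leq 1$ to the stated rate.

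For the non-weakly coupled case, I would prove an $N$-player consensus-error lemma analogous to Lemma \ref{lemma: consensus error for two player games}: expanding $\|\xx_{t+1}-\xx_0\|^2$, splitting $F_0(\xx_t) = F(\xx_t) + (F_0(\xx_t)-F(\xx_t))$, applying $L$-Lipschitzness of $F$ and $L_c$-coupling, then using the elementary inequality $(1+1/K)^K \leq e$ and the step-size choice $\gamma \leq \mu/(KLL_c)$, yields a bound of the form $\E\|\xx_{t+1}-\xx_0\|^2 \leq \tfrac{a^2}{64KL_c^2}\sum_{i=t+1-K}^{t}\E\|\xx_i-\xx^\star\|^2 + 4K\gamma^2\bar\sigma^2$ with $a=\mu$. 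Then, expanding $\|\xx_{t+1}-\xx^\star\|^2$, using strong monotonicity and smoothness of $F$, the Young-style bound with factor $(1+\gamma\mu/2)$, and the coupling bound, produces a recursion of the form $\E\|\xx_{t+1}-\xx^\star\|^2 \leq (1-\gamma\mu/2)\E\|\xx_t-\xx^\star\|^2 + \tfrac{3\gamma L_c^2}{\mu}\E\|\xx_t-\xx_0\|^2 + \gamma^2\bar\sigma^2$. Plugging in the consensus bound and invoking the generalized recursion Lemma \ref{lemma: Sebastian's lemma} with $a=\mu$, $b=\mu/16$, $c=2\bar\sigma^2/\mu$ finishes the argument.

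The main obstacle I anticipate is purely notational: the two-player derivations are written with explicit $\uu,\vv$ blocks, and to transplant them one must repeatedly invoke the block structure $\xx=\sum_i \UU_i\xx^i$ with $N$ summands and verify that constants such as $\bar\mu$, $\bar L$, $L_c$ given by Lemmas \ref{lemma: mu bar expression n player} and \ref{lemma: L bar expression for n player} absorb the resulting cross-terms correctly. Because the two-player proofs never use $N=2$ beyond the Cauchy--Schwarz splitting of a sum, which generalizes to $N$ terms without change, I expect no genuinely new technical difficulty, and the stated bounds should fall out with identical constants.
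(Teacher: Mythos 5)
Your plan for lifting the two-player analysis to $N$ players is sound and coincides with what the paper intends: the arguments in Sections~\ref{sec: weakly coupled proof} and~\ref{sec: non weakly coupled proof} are phrased entirely in terms of the operators $F$, $F_{\bar\xx}$ and the constants $\bar\mu,\bar L, L, L_c,\bar\sigma$, so once Assumptions~\ref{assumption: strong monotonicity n player}--\ref{assumption: noise of sgd for n player games} and Lemmas~\ref{lemma: mu bar expression n player}--\ref{lemma: kappa bar expressed n player} are in place they do transfer essentially verbatim, and the paper itself offers nothing beyond the remark that ``one can easily extend the proof.'' As an argument for Theorem~\ref{theorem: decoupled sgda for n player games}, your two recursions (the per-round contraction $4\kappa_c$ via Lemmas~\ref{lemma: upper bounding x-x'} and~\ref{lemma: bound on x'-x*}, and the consensus-error recursion closed by Lemma~\ref{lemma: Sebastian's lemma}) are exactly the paper's.

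The gap is that the statement you were asked to prove is the \emph{corollary}, and your proposal never performs the step that constitutes its entire content: substituting the prescribed step sizes into the theorem and solving for $R$ and $K$. Concretely, in the weakly coupled case one must check that $\gamma=\bar\mu/(RL^2)$ is admissible (it is, since $\bar L\le L$ and $R\ge1$ give $\gamma\le\bar\mu/\bar L^2$), substitute it into the noise term to obtain the $O(1/R)$ bias, and then require each of the two resulting terms to be at most $\epsilon/2$; this produces the stated $R$ (note the first entry of the $\max$ should read $\tfrac{1}{1-4\kappa_c}\log(2D^2/\epsilon)$ --- the $\tfrac{1}{4-\kappa_c}$ in the paper appears to be a typo, and the stated $K$ omits a factor of $R$ relative to the condition $K\ge\tfrac{1}{\gamma\bar\mu}\log(4/\kappa_c)$, discrepancies a careful derivation would surface). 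In the non-weakly-coupled case the two branches of the $\min$ defining $\gamma$ must be treated separately, in the standard step-size-tuning fashion, to extract the exponential term and the $\bar\sigma^2/(\mu^2KR)$ term respectively. None of this is difficult, but it is the only thing the corollary asserts beyond the theorem, and your write-up stops before it.
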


\subsection{Missing Proofs for Section \ref{section: convergence guarantee for N player games}}

Before establishing the convergence results, we first need a couple of auxiliary lemmas for $N$-player games.

\begin{lemma}\label{lemma: mu bar expression n player}
for all $\xx, \bar \xx,\xx' \in \mathcal{X}$, the operator $F_{\bar \xx}$ is $\bar \mu$-strongly monotone where $\bar \mu$ can be expressed as:
, we have
\begin{align}
    \bar \mu = \min_{1 \leq i \leq N } \Bigl\{\frac{\mu_i}{\alpha_i}\Bigr\}.
\end{align}
\end{lemma}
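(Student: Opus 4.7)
The plan is to mirror the proof of the two-player analogue (Lemma~\ref{lemma: definition of mu bar}), but indexed over all $N$ players instead of just $\uu$ and $\vv$. The first step is to unpack the inner product $\langle F_{\bar\xx}(\xx) - F_{\bar\xx}(\xx'), \xx - \xx' \rangle$ block-by-block using the partition of identity $\II_d = (\UU_1, \ldots, \UU_N)$. Since the $n$-th block of $F_{\bar\xx}$ depends only on the $n$-th coordinate of its argument (all other coordinates are frozen to $\bar\xx$), the cross terms vanish and we obtain a clean sum
\begin{align*}
\langle F_{\bar\xx}(\xx) - F_{\bar\xx}(\xx'), \xx - \xx' \rangle
= \sum_{n=1}^N \bigl\langle \nabla_n f_n(\bar\xx + \UU_n \UU_n^\top(\xx-\bar\xx)) - \nabla_n f_n(\bar\xx + \UU_n \UU_n^\top(\xx'-\bar\xx)),\ \xx^n - \xx'^n \bigr\rangle.
\end{align*}

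The second step is to invoke the assumption that $f_n$ is $\mu_n$-strongly convex in its $n$-th argument (as stated just below Assumption~\ref{assumption: strong monotonicity n player}, using the inequality $\langle \nabla_n f_n(\xx) - \nabla_n f_n(\xx + \UU_n \dd_n), \xx^n - \xx'^n \rangle \geq \mu_n \Norm{\xx'^n - \xx^n}_n^2$ applied with the two arguments differing only in the $n$-th block). This yields
\begin{align*}
\langle F_{\bar\xx}(\xx) - F_{\bar\xx}(\xx'), \xx - \xx' \rangle \;\geq\; \sum_{n=1}^N \mu_n \Norm{\xx^n - \xx'^n}_n^2.
\end{align*}

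The last step is the normalization trick: write $\mu_n \Norm{\xx^n - \xx'^n}_n^2 = (\mu_n/\alpha_n) \cdot \alpha_n \Norm{\xx^n - \xx'^n}_n^2$ and factor out the minimum to get
\begin{align*}
\sum_{n=1}^N \mu_n \Norm{\xx^n - \xx'^n}_n^2 \;\geq\; \min_{1 \leq i \leq N} \frac{\mu_i}{\alpha_i} \cdot \sum_{n=1}^N \alpha_n \Norm{\xx^n - \xx'^n}_n^2 \;=\; \bar\mu \, \Norm{\xx - \xx'}^2,
\end{align*}
where the final equality uses the definition of the weighted norm on $\mathcal{X}$.

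I do not expect any real obstacle here: the only subtlety is being careful about the abuse of notation in $F_{\bar\xx}(\xx) = (\nabla_n f_n(\bar\xx + \UU_n \hh))_n$, where $\UU_n \hh$ must be interpreted as $\UU_n \UU_n^\top (\xx - \bar\xx)$ (i.e.\ only the $n$-th block of $\xx$ is retained while the other blocks are pinned to $\bar\xx$). Once this convention is fixed, the cross-blocks decouple automatically and the argument is essentially identical to the two-player case.
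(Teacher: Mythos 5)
Your proposal is correct and follows essentially the same route as the paper's own proof: decompose the inner product into per-player blocks, apply the blockwise $\mu_n$-strong convexity, and pull out $\min_i \mu_i/\alpha_i$ after rescaling by $\alpha_n$. The only difference is that you are somewhat more careful than the paper about anchoring the arguments of $\nabla_n f_n$ at $\bar\xx$, which is a harmless clarification rather than a change of approach.
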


\begin{proof}
First recall that that each $f_n$ is $\mu_n$-strongly convex in $\xx^n \in \mathcal{X}_n$ meaning that:
\begin{align*}
     \langle \nabla_n f_n(\xx) - \nabla_n f_n(\xx + \UU_n \dd_n), \xx^n- \xx'^n \rangle &\geq \mu_n \Norm{\mathbf{x}'^n - \mathbf{x}^n}^2_n, 
\end{align*}

where $\dd_n := \xx'^n-\xx^n$. Next we have: 
\begin{align*}
   \hspace{2em}&\hspace{-2em}
   \langle F_{\bar \xx}(\xx) - F_{\bar \xx}(\xx'), \xx - \xx'\rangle  \\
   &= \sum_{i=1}^N \langle \nabla_i f(\xx) - \nabla_i f(\xx + \UU_i^\top \dd_i),\xx^i-\xx'^i \rangle  \\
   &\geq  \sum_{i=1}^N \mu_i \Norm{\xx-\xx'}^2_i
   = \sum_{i=1}^N\frac{\mu_i}{\alpha_i} \alpha_i \Norm{\xx-\xx'^i}^2_i  \\
   &\geq \min_{1 \leq i \leq N } \Bigl\{\frac{\mu_i}{\alpha_i}\Bigr\} \Norm{\xx-\xx'}^2.
   \qedhere
\end{align*}
where $\dd_i := \xx'^i-\xx^i$.
\end{proof}

\begin{lemma}[$N$-player] \label{lemma: L bar expression for n player}
    For all $\xx,\bar \xx \in \mathcal{X}$, parameter $ L_c$ can be expressed as: 
    \begin{align}
        \bar L = (\max_{1 \leq i \leq N} \sum_{j \neq i} \frac{\bar L^2_j }{\alpha_j})^{1/2}.
    \end{align}
\end{lemma}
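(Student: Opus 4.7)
The plan is to directly compute a bound on $\Norm{F(\xx) - F_{\bar\xx}(\xx)}_*^2$ by exploiting the block structure of $F_{\bar\xx}$ together with the per-player smoothness constants $\bar L_n$. First I would expand the dual norm as a sum over players:
\[
\Norm{F(\xx) - F_{\bar\xx}(\xx)}_*^2 = \sum_{n=1}^N \frac{1}{\alpha_n} \Norm{\nabla_n f_n(\xx) - \nabla_n f_n\bigl(\bar\xx + \UU_n(\xx^n - \bar\xx^n)\bigr)}_{n,*}^2,
\]
and then observe that the two arguments of $\nabla_n f_n$ agree in the $n$-th block and differ in all other blocks by $\xx^i - \bar\xx^i$. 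Concretely, with $\hh_i := \xx^i - \bar\xx^i$, we have $\xx = \bar\xx + \UU_n(\xx^n - \bar\xx^n) + \sum_{i \neq n} \UU_i \hh_i$, which places us in exactly the situation covered by the defining inequality of $\bar L_n$.

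The second step is to apply that inequality to obtain
\[
\Norm{\nabla_n f_n(\xx) - \nabla_n f_n\bigl(\bar\xx + \UU_n(\xx^n - \bar\xx^n)\bigr)}_{n,*} \leq \bar L_n \Norm{{\textstyle\sum_{i \neq n}} \UU_i \hh_i},
\]
and then use the block-diagonal structure of $\PPP$ to evaluate
\[
\Norm{{\textstyle\sum_{i \neq n}} \UU_i \hh_i}^2 = \sum_{i \neq n} \alpha_i \Norm{\hh_i}_i^2,
\]
since the $\UU_i$ isolate disjoint coordinate blocks on which $\PPP$ is block-diagonal. Substituting back gives
\[
\Norm{F(\xx) - F_{\bar\xx}(\xx)}_*^2 \leq \sum_n \frac{\bar L_n^2}{\alpha_n} \sum_{i \neq n} \alpha_i \Norm{\hh_i}_i^2.
\]

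The final step is a routine swap of the order of summation, which produces
\[
\sum_i \alpha_i \Norm{\hh_i}_i^2 \sum_{n \neq i} \frac{\bar L_n^2}{\alpha_n} \leq \Bigl(\max_i \sum_{n \neq i} \frac{\bar L_n^2}{\alpha_n}\Bigr) \sum_i \alpha_i \Norm{\hh_i}_i^2 = \Bigl(\max_i \sum_{n \neq i} \frac{\bar L_n^2}{\alpha_n}\Bigr) \Norm{\xx - \bar\xx}^2.
\]
Taking the square root identifies the claimed expression for $L_c$ as the smallest constant for which Assumption~\ref{assumption: F-F0 for N player} holds with this decomposition. I do not anticipate a genuine obstacle here; the only delicate point is being careful about which block of $\xx$ versus $\bar\xx$ enters the $n$-th component of $F_{\bar\xx}(\xx)$ so that the smoothness assumption $\bar L_n$ is applied with the correct perturbation vector $\sum_{i \neq n} \UU_i \hh_i$. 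Once that bookkeeping is done, the estimate reduces to a sum swap and a worst-case bound over the outer index.
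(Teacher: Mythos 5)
Your proposal is correct and follows essentially the same route as the paper's proof: expand the dual norm blockwise, apply the per-player constant $\bar L_n$ to the perturbation $\sum_{i\neq n}\UU_i\hh_i$, evaluate its norm via the block-diagonal structure, and finish with a swap of summation and a worst-case bound over the outer index. Your explicit bookkeeping of which block of $\xx$ versus $\bar\xx$ enters the $n$-th component is a slightly more careful rendering of the same argument (and correctly identifies that the constant in the statement should be read as $L_c$, not $\bar L$).
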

\begin{proof}
Recall that for each $n \in [N]$, there exist constants $\hat{L}_n, \bar{L}_n \geq 0$ such that, for any $\xx \in \mathcal{X}$, any $\hh_1 \in \mathcal{X}_1, \ldots, \hh_N \in \mathcal{X}_N$ and any $n \in [N]$, it holds that: 
\begin{align*}
\norm{\nabla_n f_n(\xx) - \nabla_n f_n(\xx+\UU_n \hh_n)}_{n,*} &\leq \hat{L}_n \Norm{\hh_n}_n, \\
    \Norm{\nabla_n f_n(\xx) - \nabla_n f_n( \xx+ {\textstyle\sum_{i\neq n}} \UU_i \hh_i )}_{n,*} & \leq \bar{L}_n \Norm{{\textstyle\sum_{i\neq n}} \UU_i \hh_i}.
\end{align*}
Next we define $\hh_i := \bar \xx^i - \xx^i$ and $\vv := \sum_{j \neq i} \UU_j \hh_j$ and we have: 
    \begin{align*}
        \norm{F_{\bar \xx}(\xx) - F(\xx)}^2_*
        &= \sum_{i=1}^N \frac{1}{\alpha_i} \norm{\nabla_{i}f_i(\xx) - \nabla_{i}f_i(\xx + \vv)}^2_{i,*}
        \leq \sum_{i=1}^N \frac{\bar L^2_i }{\alpha_i} \norm{\vv}^2 \\
        &= \sum_{i=1}^N \frac{\bar L^2_i }{\alpha_i} \sum_{j \neq i} \alpha_j \norm{\hh_j}_j^2
        = \sum_{i=1}^N \beta_i \alpha_i \norm{\hh_i}^2_i,
    \end{align*}
    where $\beta_i = \sum_{j \neq i} \frac{\bar L^2_j }{\alpha_j}$.
    Defining now $ L_c^2 = \max_{1 \leq i \leq N} \beta_i$, we get
    $\sum_{i=1}^N \beta_i \alpha_i \norm{\hh_i}^2_i \leq  L_c^2 \| \hh \|^2$.
\end{proof}

\begin{lemma}[$N$-player]\label{lemma: kappa bar expressed n player}
    Let $\bar \xx, \xx',\xx^\star \in \mathcal{X}$ be such that $F_{\bar \xx}(\xx') = 0$ and $F(\xx^\star) = 0$. Then $\kappa_c$ can be expressed as:
    \begin{align}
        \kappa_c = \max_{1 \leq i \leq N} (\sum_{j \neq i} \frac{\alpha_i \bar L_i^2}{\mu_i^2})^{1/2}.
    \end{align}
\end{lemma}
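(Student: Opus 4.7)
The plan is to mimic the proof of Lemma~\ref{lemma: upper bound on x'-x* for two player games} from the two-player case, using strong convexity of each $f_i$ in its own block together with the off-diagonal smoothness constants $\bar L_i$. The key observation is that the hypotheses $F_{\bar\xx}(\xx')=0$ and $F(\xx^\star)=0$ give vanishing gradients at specific (and convenient) arguments of each $f_i$, which lets us rewrite the block-wise distance $\|\xx'^i - \xx^{\star i}\|_i$ as a gradient difference of the \emph{same} $\nabla_i f_i$ evaluated at two points that differ only in the off-diagonal components.

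First, I would unpack the hypotheses: $F_{\bar\xx}(\xx')=0$ means that for every $i$, the partial gradient $\nabla_i f_i$ vanishes at the point $\tilde\xx'_i := \bar\xx + \UU_i(\xx'^i - \bar\xx^i)$ (i.e., $\bar\xx$ with its $i$-th block replaced by $\xx'^i$), and $F(\xx^\star)=0$ means $\nabla_i f_i(\xx^\star)=0$. Introducing the auxiliary point $\tilde\xx^\star_i := \bar\xx + \UU_i(\xx^{\star i} - \bar\xx^i)$, strong convexity of $f_i$ with respect to its $i$-th argument (Assumption~\ref{assumption: strong monotonicity n player} applied block-wise, via Lemma~\ref{lemma: mu bar expression n player}) yields
\begin{equation*}
    \mu_i \|\xx'^i - \xx^{\star i}\|_i^2 \leq \langle \nabla_i f_i(\tilde\xx'_i) - \nabla_i f_i(\tilde\xx^\star_i),\, \xx'^i - \xx^{\star i}\rangle,
\end{equation*}
so Cauchy--Schwarz gives $\|\xx'^i-\xx^{\star i}\|_i \leq \tfrac{1}{\mu_i}\|\nabla_i f_i(\tilde\xx^\star_i) - \nabla_i f_i(\xx^\star)\|_{i,*}$ after substituting $\nabla_i f_i(\tilde\xx'_i)=0=\nabla_i f_i(\xx^\star)$.

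Next I would observe that $\xx^\star$ and $\tilde\xx^\star_i$ agree in the $i$-th block and differ by $\sum_{j\neq i} \UU_j(\xx^{\star j} - \bar\xx^j)$ in the remaining blocks; hence the off-diagonal smoothness constant $\bar L_i$ (used in Lemma~\ref{lemma: L bar expression for n player}) gives
\begin{equation*}
    \|\nabla_i f_i(\tilde\xx^\star_i) - \nabla_i f_i(\xx^\star)\|_{i,*}^2 \leq \bar L_i^2 \sum_{j\neq i} \alpha_j \|\xx^{\star j}-\bar\xx^j\|_j^2.
\end{equation*}
Assembling $\|\xx'-\xx^\star\|^2 = \sum_i \alpha_i \|\xx'^i-\xx^{\star i}\|_i^2$ and swapping the order of summation converts the double sum $\sum_i \tfrac{\alpha_i \bar L_i^2}{\mu_i^2}\sum_{j\neq i}\alpha_j\|\xx^{\star j}-\bar\xx^j\|_j^2$ into $\sum_j \alpha_j\|\xx^{\star j}-\bar\xx^j\|_j^2 \cdot \sum_{i\neq j}\tfrac{\alpha_i \bar L_i^2}{\mu_i^2}$, which is bounded by $\max_j \bigl(\sum_{i\neq j}\tfrac{\alpha_i \bar L_i^2}{\mu_i^2}\bigr)\cdot\|\xx^\star-\bar\xx\|^2$. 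This identifies $\kappa_c$ as the square root of this maximum, matching (up to a harmless relabeling of dummy indices with the stated formula) the claim.

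The main obstacle I anticipate is purely bookkeeping: making sure the gradient-vanishing substitution produces precisely two arguments of $\nabla_i f_i$ that differ only in the non-$i$ blocks (so that $\bar L_i$ is the correct smoothness constant to invoke rather than the full $L$), and handling the index swap at the end cleanly so that the outer maximum is taken over the right variable. The derivation itself is otherwise a direct translation of the two-player argument to the general $N$-player notation.
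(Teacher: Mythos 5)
Your proof follows essentially the same route as the paper's: use the vanishing of $\nabla_i f_i$ at the point with $i$-th block $\xx'^i$ and off-diagonal blocks $\bar\xx^j$ (from $F_{\bar\xx}(\xx')=0$) together with $\nabla_i f_i(\xx^\star)=0$, apply block-wise strong convexity to bound $\|\xx'^i-\xx^{\star i}\|_i$ by a gradient difference at two points differing only in the non-$i$ blocks, invoke the off-diagonal smoothness constant $\bar L_i$, and sum. Your explicit swap of the summation order at the end is in fact cleaner than the paper's write-up, whose final displayed constant carries a dummy-index typo that your relabeling correctly resolves.
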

\begin{proof}
 Let's define $\hh_i := \xx^i_0-\xx^i,\, \dd_i := \xx'^i-\xx^i,\, \rr_i := \xx^{\star i} -\xx^i,\, \mathbf{s}_i := \xx^i_0 - \xx^{\star i}$. We first introduce the point $\xx' \in \R^d$ as follows: 
\begin{align*}
    \xx' = (\xx'^1,\hdots,\xx'^N), \quad \xx'^i =  \argmin_{\xx^i \in \R^{d_i}} f_i(\xx + \sum_{j \neq i}\UU_j\hh_j)
\end{align*}
\begin{align*}
    &\norm{\xx'-\xx^\star}^2 \\
    &= \sum_{i=1}^N \alpha_i \norm{\xx'^i - \xx^\star}^2_i  \\
    &\leq \sum_{i=1}^N\frac{\alpha_i}{\mu_i^2} \norm{\nabla_i f_i\left(\xx + \UU_i \dd_i + \sum_{j \neq i}\UU_j\hh_j \right) - \nabla_i f_i\left(\xx + \UU_i \rr_i + \sum_{j \neq i}\UU_j\hh_j\right)}^2_{i,*}  \\
    &= \sum_{i=1}^N\frac{\alpha_i}{\mu_i^2} \norm{\nabla_i f_i(\xx^\star) - \nabla_i f_i\left(\xx + \UU_i \rr_i + \sum_{j \neq i}\UU_j\hh_j\right)}^2_{i,*} \leq \sum_{i=1}^N\frac{\alpha_i \bar L_i^2}{\mu_i^2} \norm{\sum_{j \neq i } \UU_j\mathbf{s}_j}^2 \\
    &\leq \sum_{i=1}^N\frac{\alpha_i \bar L_i^2}{\mu_i^2} \sum_{j \neq i } \alpha_j \norm{\mathbf{s}_j}^2_j  \leq \sum_{i=1}^N \beta_i \alpha_i \norm{\mathbf{s}_i}^2_i \leq \kappa_c^2 \norm{\mathbf{s}}^2
\end{align*}
where $\beta_i = \sum_{j \neq i} \frac{\alpha_i \bar L_i^2}{\mu_i^2}$ and $\kappa_c := \max_{1 \leq i \leq N} \sqrt{\beta_i}$.
\end{proof}

\paragraph{More details on smoothness parameters for $N$-player games.}For $N$-player games, we can define the following matrix for the better understanding of the smoothness parameters: 
\begin{align}\label{mat: L matrix}
\mathbf{L} = \scalebox{0.7}{
$
\begin{pmatrix}
\hat L_1 &  &  &  &  \\
 & \hat L_2 &  & \tikz[baseline]{\node[draw,circle,inner sep=1pt] (a) {\scalebox{1.5}{$\bar L_n$}};} &  \\
 &  & \ddots &  &  \\
 & \tikz[baseline]{\node[draw,circle,inner sep=1pt] (a) {\scalebox{1.5}{$\bar L_n$}};} &  & \hat L_{N-1} &  \\
 &  &  &  & \hat L_N
\end{pmatrix}
$
}
\end{align}
In the above matrix, the row number corresponds to the player with respect to whom we are taking the derivative, while the column number corresponds to the player that is fixed, with all other parameters changing. All the elements \(L_{ii}\) on the main diagonal of the matrix measure the strength of each individual player, while the off-diagonal elements \(L_{ij}\) for \(i \neq j\) measure the interaction between players \(i\) and \(j\). We assume that all the diagonal elements are upper bounded by \(\hat L_n\) and all off-diagonal elements are upper bounded by \(\bar{L}_n\). Here $n$ is the player which is being fixed. The parameter $\bar L_n$ measures the interaction of the $n$th player with all other players.

With the use of these Lemmas, one can easily extend the proof of two player game to the general $N$-player games.

\section{Decoupled GDA for Quadratic Games}\label{sec: decoupled gda for quadratic games}

To provide an extra insight for the results we showed so far in Section \ref{sec: convergence} and support them with a separate analysis, we additionally consider analysing our method for quadratic games with bi-linear coupling between the players which are a sub-class of SCSC functions.

\paragraph{Problem formulation and Notation.} Recall that we consider unconstrained two-player games denoted by $\uu$ and $\vv$ in the spaces $\mathcal{X}_u = \R^{d_u}$ and $\mathcal{X}_v = \R^{d_v}$, respectively. The corresponding product space $\mathcal{X} = \mathcal{X}_u \times \mathcal{X}_v = \R^d$ (with $d = d_u + d_v$) consists of vectors $\xx = (\uu, \vv) \in \R^d$, where $\uu \in \mathcal{X}_u$ and $\vv \in \mathcal{X}_v$. We assume that the spaces $\mathcal{X}_u$ and $\mathcal{X}_v$ are equipped with Euclidean norms, denoted by $\Norm{\uu}_u$, $\Norm{\vv}_v$.
The norm in the space $\mathcal{X}$ is then defined by $\Norm{\xx} = (\alpha \Norm{\uu}_u^2 + \beta \Norm{\vv}_v^2)^{1 / 2}$ where $\alpha, \beta > 0$.

We aim to find the saddle point of the following function:  
\begin{align} \label{eq: quadratic games}\tag{QG}
    \min_{\uu \in \mathcal{X}_u} \max_{\vv \in \mathcal{X}_v} \left[f(\uu,\vv) = \frac{1}{2}  \langle  \uu, \AAA\uu \rangle - \frac 1 2 \langle  \vv, \BB\vv \rangle  + \langle  \uu, \CC\vv \rangle \right] \,,
\end{align}
where $\AAA \in \mathbb{S}^{d_u}_{++}$ and $ \BB \in \mathbb{S}^{d_v}_{++}$ and $\CC \in \R^{d_u\times d_v}$. Recall that we defined a general two player game as $f(\uu,\vv) = g(\uu) - h(\vv) + r(\uu,\vv)$. For the class of quadratic games, we can be more specific as functions $g(\cdot)$ and $h(\cdot)$ are quadratic functions and $r(\cdot)$ is just a linear term. Moreover, we can be more precise about the smoothness and strong convexity parameters as they are correspond to the maximum and minimum singular values of the matrices $\AAA, \BB$ and $\CC$. The matrix $\CC$ can be seen as the interaction between two players as it's the only term which involves both $\uu$ and $\vv$. It's also easy to verify that for quadratic games in the form of \eqref{eq: quadratic games}, the saddle point is at $\xx^\star = (0,0)$. For clarity and ease of comparison other related works, in this section we assume $\alpha = \beta = 1$.
\begin{definition}
    Consider a function $f\colon\mathcal{X} \to \R$ in the form of \eqref{eq: quadratic games} for some $\uu \in \mathcal{X}_u, \vv \in \mathcal{X}_v$. The Lipschitzness and strong convexity / concavity parameters can be defined as:
    \begin{align*}
    \mu_u \preceq \AAA \preceq  L_u \quad \text{and} \quad \mu_v \preceq \BB \preceq L_v
    \end{align*}
    Moreover, we use the norm of matrix $\CC$ to measure the strength of the interactive part of the game. 
    \begin{align*}
        L_{uv} = L_{vu} := \Norm{\CC}
    \end{align*}
\end{definition}

Note that we assume $L_{uv} = L_{vu}$ which always holds for twice differentiable functions. Now we give an explicit formula for the iterates generated by our method on quadratic games.

\begin{lemma} \label{lemma: matrix format of the explicit iterates of decoupled GDA for quadratics}
   Given a two-player quadratic game in the form of \eqref{eq: quadratic games}. At some round $r$ after $K$ local steps with a stepsize of $\gamma \leq \max\{\frac{1}{L_u},\frac{1}{L_v}\}$, the exact iterate generated by Decoupled GDA is given as follows: 
\begin{equation}\label{eq: matrix format of iterates for quadratics}
\begin{aligned}
    &\hspace{2.5cm}\xx^r_K = \left [\textcolor{black}{\mathbf{Q}}^K + \textcolor{black}{\mathbf{E}} \right] \xx^r_0 \\
    & \quad \\ 
    & \mathbf{Q} :=     \textcolor{black}{\begin{pmatrix} (\II - \gamma \AAA) & \mathbf{0}\\\mathbf{0} & (\II - \gamma  \BB) \end{pmatrix}}, \quad 
    \mathbf{E} := \textcolor{black}{\begin{pmatrix}\mathbf{0} & -\mathbf{E}_u \\ \mathbf{E}_v & \mathbf{0}\end{pmatrix}} \\ 
    & \quad \\ 
    &\mathbf{E}_u := \left[\II-(\II - \gamma   \AAA)^K\right] \AAA^{-1} \CC, \quad \mathbf{E}_v := \left[\II-(\II - \gamma   \BB)^K\right] \BB^{-1} \CC^\top 
\end{aligned}
\end{equation}
After taking the norm of both sides we have: 
\begin{equation}
\begin{aligned}
    &\norm{\xx^r_K} \leq \max\left\{(1-\gamma  \mu_u)^K,(1-\gamma \mu_v)^K\right\} + \norm{\CC}\cdot \max\left\{\delta(\AAA) ,\delta(\BB) \right\}^{1/2}  \\[2.5ex]
    & \hspace{1.5cm}\delta(\AAA) := \frac{(1-(1-\gamma L_u)^K)^2}{\mu_u^2}, \quad \delta(\BB) := \frac{(1-(1-\gamma L_v)^K)^2}{\mu_v^2}
\end{aligned}
\end{equation}

\end{lemma}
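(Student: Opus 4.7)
The plan is to exploit the fact that for the quadratic payoff \eqref{eq: quadratic games} the Decoupled GDA updates become affine recurrences with \emph{frozen} coupling terms, which I can solve in closed form. First I would substitute the explicit partial gradients $\nabla_u f(\uu,\vv) = \AAA \uu + \CC \vv$ and $\nabla_v f(\uu,\vv) = -\BB\vv + \CC^\top \uu$ into Algorithm~\ref{alg: two player games} to obtain
\[
\uu^r_{t+1} = (\II - \gamma\AAA)\uu^r_t - \gamma\CC\vv^r_0, \qquad \vv^r_{t+1} = (\II - \gamma\BB)\vv^r_t + \gamma\CC^\top\uu^r_0.
\]
The key point is that the inhomogeneous terms do not depend on $t$ (they are evaluated at the round-start iterate $\xx^r_0$), so each recurrence is a scalar-like linear system driven by a constant forcing.

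Next I would unroll each recurrence for $K$ steps. The homogeneous parts yield $(\II-\gamma\AAA)^K \uu^r_0$ and $(\II-\gamma\BB)^K\vv^r_0$, which, assembled as a block-diagonal operator acting on $\xx^r_0$, give precisely $\mathbf{Q}^K \xx^r_0$. The inhomogeneous part for $\uu$ contributes $-\gamma\bigl(\sum_{i=0}^{K-1}(\II-\gamma\AAA)^i\bigr)\CC\vv^r_0$; applying the matrix geometric identity $\gamma\AAA\sum_{i=0}^{K-1}(\II-\gamma\AAA)^i = \II - (\II-\gamma\AAA)^K$ (valid by the telescoping $(\II-(\II-\gamma\AAA))\sum_i(\II-\gamma\AAA)^i$ and permissible because $\AAA$ is invertible) rewrites it as $-\mathbf{E}_u\vv^r_0$. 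The analogous computation for $\vv$ produces $+\mathbf{E}_v\uu^r_0$. Reassembling, the off-diagonal block structure is exactly $\mathbf{E}$, and the claimed identity $\xx^r_K = (\mathbf{Q}^K + \mathbf{E})\xx^r_0$ follows.

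For the norm estimate I would apply the triangle inequality $\|\xx^r_K\| \le \|\mathbf{Q}^K\|_{op}\|\xx^r_0\| + \|\mathbf{E}\|_{op}\|\xx^r_0\|$ and bound each piece by spectral arguments. Since $\mathbf{Q}^K$ is block-diagonal, $\|\mathbf{Q}^K\|_{op} = \max\{\|(\II-\gamma\AAA)^K\|, \|(\II-\gamma\BB)^K\|\}$; provided the stepsize makes $\II-\gamma\AAA, \II-\gamma\BB \succeq 0$ (so one actually needs $\gamma \le \min\{1/L_u, 1/L_v\}$, which I suspect is the intended condition rather than the $\max$ printed in the statement), this equals $\max\{(1-\gamma\mu_u)^K, (1-\gamma\mu_v)^K\}$. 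For $\mathbf{E}$, the anti-block structure yields $\|\mathbf{E} \xx\|^2 = \|\mathbf{E}_u\vv\|^2 + \|\mathbf{E}_v\uu\|^2 \le \max\{\|\mathbf{E}_u\|^2,\|\mathbf{E}_v\|^2\}\|\xx\|^2$, and each block is bounded via $\|\mathbf{E}_u\| \le \|\II-(\II-\gamma\AAA)^K\|\cdot\|\AAA^{-1}\|\cdot\|\CC\| \le \tfrac{1-(1-\gamma L_u)^K}{\mu_u}\|\CC\| = \|\CC\|\sqrt{\delta(\AAA)}$, and symmetrically for $\mathbf{E}_v$.

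The main obstacle is essentially bookkeeping: verifying that the matrix geometric identity can be commuted through $\AAA^{-1}$ cleanly (everything commutes because the sum is a polynomial in $\AAA$), and handling the mild looseness in the bound on $\|[\II-(\II-\gamma\AAA)^K]\AAA^{-1}\|$, where I pair the worst-case numerator (at $\lambda=L_u$) with the worst-case denominator (at $\lambda=\mu_u$) even though these extremes are not attained at the same eigenvalue. I would also flag that the stated final inequality appears to be missing a multiplicative $\|\xx^r_0\|$ on the right-hand side; under that interpretation (or after normalizing $\|\xx^r_0\|=1$), the bound matches the claim.
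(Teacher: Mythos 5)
Your proof is correct and takes essentially the same route as the paper: the paper establishes the closed-form iterates (its auxiliary lemma on explicit iterates for bilinear games) by induction rather than by summing the matrix geometric series $\gamma\AAA\sum_{i=0}^{K-1}(\II-\gamma\AAA)^i = \II-(\II-\gamma\AAA)^K$ directly, but the resulting expression, the $\mathbf{Q}^K+\mathbf{E}$ block decomposition, and the spectral bounds on $\norm{\mathbf{E}_u}$ and $\norm{\mathbf{E}_v}$ (pairing the worst-case numerator at $L_u$ with the worst-case denominator at $\mu_u$) are identical to yours. Your two side remarks are also accurate: the stepsize condition should read $\min\{1/L_u,1/L_v\}$ for the homogeneous block to contract as claimed, and the final display omits the factor $\norm{\xx^r_0}$, which does appear in the paper's own derivation.
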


\begin{remark}
    For a quadratic game in the form of \eqref{eq: quadratic games}, the saddle point is $\xx^\star = (0,0)$. We expect our method to shrink the norm of $\xx^r_t$ in each round by a factor less than $1$ so that we converge to the saddle point. 
\end{remark}

Lemma \ref{lemma: matrix format of the explicit iterates of decoupled GDA for quadratics} shows the dynamics of Decoupled GDA for quadratic functions. We can decompose the exact iterates and write it as the sum of two matrices $\mathbf{Q}$ and $\mathbf{E}$. As $\mathbf{Q}$ is a diagonal matrix to the power of $K$ and we have that $\gamma \leq \max\{\frac{1}{L_u},\frac{1}{L_v}\}$, we know that when $K \to \infty$ then $\mathbf{Q} \to \mathbf{0}$. The second matrix $\mathbf{E}$ can be seen as an error matrix which is caused by the interactive part of the game. It is clear that if the game is fully decoupled which implies $\CC = \mathbf{0}$, we get the trivial result that we converge only with local steps \textbf{without} the need for communicating. However, for the case that we have this interactive term and the game is weakly coupled, we have to upper bound the norm of this error matrix to derive the convergence rate. We first re-state the notion of weakly coupled games for quadratic games and then provide the convergence rate of Decoupled GDA for quadratic games.

\begin{definition}[Weakly Coupled and Fully Decoupled Games]
    Given a quadratic game in the form of \eqref{eq: quadratic games}. We define the \textbf{coupling degree} parameter $\kappa_c$ for this game as follows: 
    \setlength{\fboxrule}{.4mm}  
    \begin{align}
        \label{eq:DefinitionOfTheta_quadratics}
        \boxed{
            \kappa_c
            :=\Norm{\CC}\cdot
            \max\biggl\{
                 \frac{1}{\mu_u},
                 \frac{1}{\mu_v}
            \biggr\}.
        }
    \end{align}
    
    This variable measures the level of interaction in the game. A smaller value of $\kappa_c$ indicates less interaction. For any quadratic game, we say the game is \textbf{Weakly Coupled} if the following inequality holds: 
    \begin{align}
         \kappa_c \leq \frac{1}{2}
    \end{align}
 We say the game is \textbf{Fully Decoupled} if we have $\kappa_c = 0$ which implies $r(\uu,\vv) = 0$.
\end{definition}

\begin{theorem} \label{theorem: decoupled gda for quadratics}
For any $R$ and $K \geq 1$ with a stepsize of $\gamma \leq \max\{\frac{1}{L_u},\frac{1}{L_v}\}$ which ensures $\delta(\AAA) \leq 1$ and $\delta(\BB) \leq 1$, after running Decoupled GDA for a total of $T=KR$ iterations on a quadratic game in the form of \eqref{eq: quadratic games} assuming the game is weakly coupled, we get a rate of: 
\begin{equation}
        \norm{\xx^R-\xx^\star}\le D \left( \exp \left(-\frac{\mu_{\min}}{L_{\max}} K \right) + \frac{L_{uv}}{\mu_{\min}}\right)^R
\end{equation}
Where $L_{\max} := \max\{L_u,L_v\}$ and $\mu_{\min} := \min\{\mu_u,\mu_v\}$.
\end{theorem}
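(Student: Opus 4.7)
The plan is to leverage the exact dynamics furnished by Lemma~\ref{lemma: matrix format of the explicit iterates of decoupled GDA for quadratics} to derive a per-round contraction factor $C$ and then compose it over $R$ synchronization rounds. Since the saddle point is $\xx^\star = (\mathbf{0},\mathbf{0})$ and $\xx^{r+1}_0 = \xx^r_K$ at each communication step, iterating an inequality of the form $\norm{\xx^r_K} \leq C \norm{\xx^r_0}$ directly yields $\norm{\xx^R - \xx^\star} \leq C^R D$, which already has the structure of the desired bound; the whole argument therefore reduces to producing a sufficiently sharp explicit estimate of $C$.

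Reading the norm inequality of the lemma as a one-round recursion, I would take
$C := \max\{(1-\gamma \mu_u)^K, (1-\gamma \mu_v)^K\} + \norm{\CC}\cdot \max\{\delta(\AAA), \delta(\BB)\}^{1/2}$
and bound its two summands separately. For the diagonal part, with a stepsize $\gamma \leq 1/L_{\max}$ (which is allowed by the hypothesis of the theorem) the elementary inequality $(1-x)^K \leq \exp(-xK)$ gives $(1-\gamma \mu_u)^K,\, (1-\gamma \mu_v)^K \leq \exp(-\gamma \mu_{\min} K) \leq \exp(-\mu_{\min} K / L_{\max})$. For the coupling part, the same stepsize bound ensures $\gamma L_u, \gamma L_v \in [0,1]$, hence $1 - (1-\gamma L_u)^K \leq 1$ and $1 - (1-\gamma L_v)^K \leq 1$, so $\delta(\AAA)^{1/2} \leq 1/\mu_u$ and $\delta(\BB)^{1/2} \leq 1/\mu_v$, both dominated by $1/\mu_{\min}$. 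Using the assumption $\norm{\CC} = L_{uv} = L_{vu}$, the interactive contribution is at most $L_{uv}/\mu_{\min}$.

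Combining the two estimates gives $C \leq \exp(-\mu_{\min} K / L_{\max}) + L_{uv}/\mu_{\min}$, and composing $R$ rounds yields the claimed bound $\norm{\xx^R - \xx^\star} \leq D \bigl(\exp(-\mu_{\min} K/L_{\max}) + L_{uv}/\mu_{\min}\bigr)^R$. The main obstacle, the explicit block-matrix representation and the norm estimate for a single round, has already been dispatched by Lemma~\ref{lemma: matrix format of the explicit iterates of decoupled GDA for quadratics}; what remains are standard scalar bounds and a geometric iteration. The weakly-coupled assumption $\kappa_c \leq 1/2$ keeps the interaction contribution $L_{uv}/\mu_{\min}$ strictly below $1$, and taking $K$ large enough forces the exponential term to be smaller than $1/2 - L_{uv}/\mu_{\min}$, so that $C < 1$ and the bound describes genuine linear convergence across rounds.
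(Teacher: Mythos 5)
Your proposal is correct and follows essentially the same route as the paper: both read the norm inequality of Lemma~\ref{lemma: matrix format of the explicit iterates of decoupled GDA for quadratics} as a per-round contraction, bound $\delta(\AAA)^{1/2}\le 1/\mu_u$ and $\delta(\BB)^{1/2}\le 1/\mu_v$ via the stepsize condition, identify $\norm{\CC}=L_{uv}$, and unroll geometrically over $R$ rounds. The only caveat is your final step $\exp(-\gamma\mu_{\min}K)\le\exp(-\mu_{\min}K/L_{\max})$, which actually requires $\gamma\ge 1/L_{\max}$ rather than $\le$ (i.e., the exponential form presumes the choice $\gamma=1/L_{\max}$), but this same looseness is present in the paper's own passage from $(1-\gamma\mu_{\min})^K$ to the stated bound.
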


Theorem \ref{theorem: decoupled gda for quadratics} clearly shows the effect of local steps and communication rounds which gives more insights about our method compared to the SCSC case. We can see that the first term in the rate goes to zero with taking more local steps while there is another term that is not affected by local steps. It's indeed intuitive as we don't expect our method to converge with only local steps in general. The remaining error is do to the interactive part. All the previous results discussed for SCSC case can be applied to the quadratic setting as well. 

\subsection{Missing Proofs from Section \ref{sec: decoupled gda for quadratic games}}
We first introduce some auxiliary lemmas that are needed for proofs. 
\begin{lemma}
    Let $\AAA$ be a positive definite matrix and $\gamma \geq 0$. Then matrices $\AAA^{-1}$ and $(\II - \gamma \AAA)$ are commutative meaning that: 
    \begin{align}
        \AAA^{-1} (\II - \gamma \AAA) = (\II - \gamma \AAA) \AAA^{-1}
    \end{align}
\end{lemma}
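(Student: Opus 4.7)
The plan is to verify the equality by direct expansion of both sides using the definition of matrix inverse and distributivity of matrix multiplication over addition. There is no need to invoke spectral decomposition or any deeper structural property of positive definiteness; the positive-definiteness hypothesis is used only to guarantee that $\AAA^{-1}$ exists. Concretely, I would distribute $\AAA^{-1}$ across the sum $\II - \gamma\AAA$ on the left, distribute $\AAA^{-1}$ across the sum on the right, and observe that both expressions reduce to $\AAA^{-1} - \gamma \II$, using only the identities $\AAA^{-1}\AAA = \AAA\AAA^{-1} = \II$ and the fact that $\II$ commutes with every matrix.

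A slightly more conceptual phrasing, which I would mention for completeness, is that any matrix commutes with its own inverse and with the identity, hence $\AAA^{-1}$ commutes with every polynomial in $\AAA$; the matrix $\II - \gamma \AAA$ is one such (degree-one) polynomial, so the claim is immediate. Both arguments lead to the same two-line computation.

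There is no real obstacle here: the entire proof is a one-line computation once the distributive law is invoked. The only minor subtlety worth noting is that positive definiteness is strictly stronger than what is needed; any invertible matrix suffices, since commutativity of $\AAA^{-1}$ with $\AAA$ is the sole algebraic fact being used. I would therefore keep the proof to a single displayed equation of the form
\[
\AAA^{-1}(\II - \gamma \AAA) \;=\; \AAA^{-1} - \gamma \AAA^{-1}\AAA \;=\; \AAA^{-1} - \gamma \II \;=\; \AAA^{-1} - \gamma \AAA\AAA^{-1} \;=\; (\II - \gamma \AAA)\AAA^{-1},
\]
and conclude.
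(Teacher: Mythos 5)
Your proposal is correct and matches the paper's own proof exactly: both sides are expanded by distributivity and reduced to $\AAA^{-1} - \gamma \II$ using $\AAA^{-1}\AAA = \AAA\AAA^{-1} = \II$. Your added remark that invertibility alone suffices (positive definiteness being stronger than needed) is accurate but does not change the argument.
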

\begin{proof}
    \begin{align*}
        &\AAA^{-1} (\II - \gamma \AAA) \\
        &=\AAA^{-1} - \gamma \II \\
        &= (\II - \gamma \AAA)\AAA^{-1}
    \end{align*}
\end{proof}

\begin{lemma}
    Let $\AAA$ be a positive definite matrix and $\gamma \geq 0$. Then matrices $\AAA^{-1}$ and $(\II - \gamma \AAA)^K$ are commutative meaning that: 
    \begin{align}
        \AAA^{-1} (\II - \gamma \AAA)^K = (\II - \gamma \AAA)^K \AAA^{-1}
    \end{align}  
\end{lemma}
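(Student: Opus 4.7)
The plan is to reduce this to the preceding lemma by induction on $K$, since that lemma establishes precisely the $K=1$ case, namely $\AAA^{-1}(\II - \gamma \AAA) = (\II - \gamma \AAA)\AAA^{-1}$. The base case $K=0$ is trivial because $(\II - \gamma \AAA)^0 = \II$ commutes with everything, and the case $K=1$ is exactly the content of the preceding lemma.

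For the inductive step, I would assume the claim holds for some $K \geq 1$ and then write
\[
\AAA^{-1} (\II - \gamma \AAA)^{K+1}
= \AAA^{-1} (\II - \gamma \AAA)^{K} (\II - \gamma \AAA)
= (\II - \gamma \AAA)^{K} \AAA^{-1} (\II - \gamma \AAA),
\]
using the inductive hypothesis to move $\AAA^{-1}$ across $(\II - \gamma \AAA)^{K}$. Then applying the base-case lemma one more time gives $\AAA^{-1} (\II - \gamma \AAA) = (\II - \gamma \AAA)\AAA^{-1}$, so the right-hand side equals $(\II - \gamma \AAA)^{K+1} \AAA^{-1}$, closing the induction.

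A cleaner alternative, which I would mention as a remark rather than as the main proof, is to observe that $(\II - \gamma \AAA)^{K}$ is a polynomial in $\AAA$ (with real coefficients given by the binomial expansion), and $\AAA^{-1}$ trivially commutes with every power of $\AAA$ and hence with any polynomial in $\AAA$. Since $\AAA$ is positive definite, $\AAA^{-1}$ is well defined, so there is no issue with existence. There is essentially no obstacle here — the only thing one has to be careful about is using commutativity only across one factor at a time, which is why the inductive phrasing is the cleanest way to present it.
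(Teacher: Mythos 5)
Your proposal is correct and matches the paper's own argument: both proceed by induction on $K$, using the preceding lemma for the $K=1$ case and to move $\AAA^{-1}$ across a single factor $(\II - \gamma\AAA)$ in the inductive step (the paper peels the factor off on the left and applies the base lemma before the inductive hypothesis, but this ordering is immaterial). Your polynomial-in-$\AAA$ remark is a valid and cleaner observation, though the paper does not use it.
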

\begin{proof}
    By induction we assume that this statement holds for $K$ which means $\AAA^{-1}(\II - \gamma \AAA)^K = (\II - \gamma \AAA)^K \AAA^{-1}$. Now we show that this statement holds for $K+1$.
        \begin{align*}
            &\AAA^{-1}(\II - \gamma \AAA)^{K+1} \\
            &=\AAA^{-1}(\II - \gamma \AAA) (\II - \gamma \AAA)^K  \\
            &= (\II - \gamma \AAA)\AAA^{-1}(\II - \gamma \AAA)^K \\
            &= (\II - \gamma \AAA)(\II - \gamma \AAA)^K\AAA^{-1}\\
            &= (\II - \gamma \AAA)^{K+1}\AAA^{-1}
        \end{align*}
        For the case of $K=1$ we use the previous Lemma. 
\end{proof}

\begin{lemma}
    Let $\AAA$ be a positive definite matrix and $\gamma \geq 0$. Then we have that: 
    \begin{align}
        \AAA^{-1} \left((\II - \gamma \AAA)^K - \II\right) = \left((\II - \gamma \AAA)^K - \II\right)\AAA^{-1}
    \end{align}
\end{lemma}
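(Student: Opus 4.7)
The plan is a direct distribution argument using the previous lemma as the key input. First I would expand the left-hand side by linearity of matrix multiplication:
\[
\AAA^{-1}\bigl((\II - \gamma \AAA)^K - \II\bigr) = \AAA^{-1}(\II - \gamma \AAA)^K - \AAA^{-1}\II = \AAA^{-1}(\II - \gamma \AAA)^K - \AAA^{-1}.
\]

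Next I would apply the preceding lemma, which gives $\AAA^{-1}(\II - \gamma \AAA)^K = (\II - \gamma \AAA)^K \AAA^{-1}$, to rewrite the first term. After substitution the expression becomes $(\II - \gamma \AAA)^K \AAA^{-1} - \AAA^{-1}$, and factoring $\AAA^{-1}$ on the right yields $\bigl((\II - \gamma \AAA)^K - \II\bigr)\AAA^{-1}$, which is the right-hand side.

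There is essentially no obstacle here: the statement follows from the commutativity of $\AAA^{-1}$ with $\II$ (trivially) and with $(\II - \gamma \AAA)^K$ (the previous lemma), combined with distributivity of matrix multiplication over addition. The only thing to be careful about is making sure the sign and ordering of the two terms match after the substitution, which they do.
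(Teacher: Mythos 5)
Your proposal is correct and follows exactly the same route as the paper: expand by distributivity, commute $\AAA^{-1}$ past $(\II - \gamma \AAA)^K$ using the preceding lemma, and refactor. Nothing to add.
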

\begin{proof}
    \begin{align*}
        &\AAA^{-1} \left((\II - \gamma \AAA)^K - \II\right) \\
        &= \AAA^{-1}(\II - \gamma \AAA)^K - \AAA^{-1} \\
        &= (\II - \gamma \AAA)^K \AAA^{-1} - \AAA^{-1} \\
        &= \left((\II - \gamma \AAA)^K - \II \right) \AAA^{-1}
    \end{align*}
\end{proof}

\subsection{Explicit Iterates Generated by Decoupled GDA}
\begin{lemma} \label{lemma: explicit form of parameters for bilinear games}
Given a general quadratic game in the form of \eqref{eq: quadratic games}. After $k$ steps of Decoupled GDA at some round $r$ we can compute the explicit form of iterates as follows: 
\begin{align*}
    \uu_k^r &= - \mathbf{A}^{-1} \mathbf{C}\vv_0^r +  \mathbf{A}^{-1}\left(\mathbf{I}-\gamma \alpha^{-1}  \mathbf{A}\right)^k\left( \mathbf{A}\uu_0^r+ \mathbf{C}\vv_0^r\right)
    \\
    \vv_k^r &=  \mathbf{B}^{-1} \mathbf{C^\top}\uu_0^r +  \mathbf{B}^{-1}\left(\mathbf{I}-\gamma \beta^{-1}  \mathbf{B}\right)^k\left(\mathbf{B}\vv_0^r- \mathbf{C^\top}\uu_0^r\right)
\end{align*}
\end{lemma}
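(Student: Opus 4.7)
The plan is to exploit the fact that, because the game is quadratic, the partial gradients are affine in each player's variable, and moreover the opponent's variable is \emph{frozen} at $\uu_0^r$ (resp.\ $\vv_0^r$) throughout a local phase by the very definition of Decoupled GDA. Hence each player's trajectory within a round satisfies a linear recursion of the form $\xx_{t+1} = \MM \xx_t + \cc$ with constant matrix $\MM$ and constant forcing term $\cc$, which admits a well-known closed-form solution.

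First I would write out the local update for the $\uu$-player explicitly:
\[
\uu_{t+1}^r = \uu_t^r - \gamma \alpha^{-1}\bigl(\AAA \uu_t^r + \CC \vv_0^r\bigr)
= \MM_u\, \uu_t^r - \gamma \alpha^{-1} \CC \vv_0^r,
\qquad
\MM_u := \II - \gamma \alpha^{-1} \AAA,
\]
so that $\cc_u := -\gamma \alpha^{-1} \CC \vv_0^r$ is independent of $t$. Iterating gives
\[
\uu_k^r = \MM_u^{\,k} \uu_0^r + \Bigl(\sum_{i=0}^{k-1} \MM_u^{\,i}\Bigr) \cc_u.
\]
Since $\II - \MM_u = \gamma \alpha^{-1} \AAA$ is invertible (positive definite), the geometric-series identity yields $\sum_{i=0}^{k-1} \MM_u^{\,i} = \gamma^{-1} \alpha\, \AAA^{-1}(\II - \MM_u^{\,k})$. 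Substituting and using $\cc_u = -\gamma \alpha^{-1} \CC \vv_0^r$ collapses the $\gamma$ and $\alpha$ factors, giving
\[
\uu_k^r = \MM_u^{\,k} \uu_0^r - \AAA^{-1}(\II - \MM_u^{\,k}) \CC \vv_0^r.
\]

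Next I would invoke the three auxiliary commutation lemmas stated just above: since $\MM_u$ is a polynomial in $\AAA$, $\AAA^{-1}$ commutes with $\MM_u^{\,k}$, so $\MM_u^{\,k} \uu_0^r = \AAA^{-1} \MM_u^{\,k} \AAA \uu_0^r$. Substituting this into the previous display and factoring $\AAA^{-1} \MM_u^{\,k}$ out gives exactly the claimed formula
\[
\uu_k^r = -\AAA^{-1} \CC \vv_0^r + \AAA^{-1}\, \MM_u^{\,k}\bigl(\AAA \uu_0^r + \CC \vv_0^r\bigr).
\]
The derivation for $\vv_k^r$ is completely symmetric: the $\vv$-player performs gradient \emph{ascent}, so the recursion reads $\vv_{t+1}^r = \MM_v\, \vv_t^r + \gamma \beta^{-1} \CC^\top \uu_0^r$ with $\MM_v := \II - \gamma \beta^{-1} \BB$, and the same three steps (geometric sum, cancellation of $\gamma,\beta$, commutation of $\BB^{-1}$ and $\MM_v^{\,k}$) produce the stated expression with the opposite sign in front of $\BB^{-1} \CC^\top \uu_0^r$.

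There is no real obstacle here beyond bookkeeping: the only things to watch are (i) the sign flip between the descent step on $\uu$ and the ascent step on $\vv$, which is responsible for the $-\AAA^{-1}\CC\vv_0^r$ versus $+\BB^{-1}\CC^\top \uu_0^r$ asymmetry, and (ii) making sure the commutativity is justified, which is precisely what the three preceding auxiliary lemmas are for. No spectral bound or stepsize condition is needed for the identity itself; the stepsize hypothesis $\gamma \leq \max\{1/L_u, 1/L_v\}$ only enters later when one takes norms.
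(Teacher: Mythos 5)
Your proof is correct, but it takes a genuinely different route from the paper's. The paper proves the lemma by induction on $k$: it posits the closed form, substitutes it into the one-step update, and verifies through a fairly long chain of algebra that the expression with exponent $k$ maps to the expression with exponent $k+1$ (plus a base-case check at $k=0$). You instead \emph{derive} the formula by recognizing the local phase as an affine recursion $\uu_{t+1}^r = \MM_u \uu_t^r + \cc_u$ with constant $\MM_u$ and $\cc_u$ (constant precisely because the opponent is frozen at $\vv_0^r$), solving it with the polynomial identity $\sum_{i=0}^{k-1}\MM_u^i = (\II-\MM_u)^{-1}(\II-\MM_u^k)$, and then using the commutation of $\AAA^{-1}$ with powers of $\II - \gamma\alpha^{-1}\AAA$ to reassemble the result into the stated form. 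Your argument is shorter, explains where the formula comes from rather than merely certifying it, and correctly observes that no stepsize or spectral condition is needed for the identity itself (the geometric-sum identity is purely algebraic and only requires $\II-\MM_u$ invertible, which follows from positive definiteness of $\AAA$ and $\gamma>0$); the induction in the paper has the modest advantage of not needing the series identity at all. Both proofs rely on the same commutation facts, which the paper isolates as the three auxiliary lemmas preceding the statement, and your sign bookkeeping for the ascent step on $\vv$ matches the paper's.
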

  \begin{proof}
        We use induction for the proof of this lemma. By using the update rule of Decoupled GDA we have:
        \begin{align*}
            \uu_{k+1}^r &= \uu_k - \gamma  \nabla_u f(\uu_k^r,\vv_0^r)\\
            &= \uu_k - \gamma  \left( \mathbf{A}\uu_k^r +  \mathbf{C}\vv_0^r\right)\\
            &= - \mathbf{A}^{-1} \mathbf{C}\vv_0^r +  \mathbf{A}^{-1}\left(\mathbf{I}-\gamma   \mathbf{A}\right)^k\left( \mathbf{A}\uu_0^r+ \mathbf{C}\vv_0^r\right)
            \\& \quad\quad\quad\quad
            - 
            \gamma  \left( \mathbf{A}\left[- \mathbf{A}^{-1} \mathbf{C}\vv_0^r +  \mathbf{A}^{-1}\left(\mathbf{I}-\gamma   \mathbf{A}\right)^k\left( \mathbf{A}\uu_0^r+ \mathbf{C}\vv_0\right)\right] +  \mathbf{C}\vv_0^r\right)
            \\
            \\
            &= 
            - \mathbf{A}^{-1} \mathbf{C}\vv_0^r +  \mathbf{A}^{-1}\left(\mathbf{I}-\gamma  \mathbf{A}\right)^k\left( \mathbf{A}\uu_0^r+ \mathbf{C}\vv_0^r\right) 
            \\& \quad\quad\quad\quad
            -
            \gamma  \left(- \mathbf{C}\vv_0^r + \left(\mathbf{I}-\gamma  \mathbf{A}\right)^k\left( \mathbf{A}\uu_0^r+ \mathbf{C}\vv_0^r\right) +  \mathbf{C}\vv_0^r\right)
            \\
            \\
            &=
            - \mathbf{A}^{-1} \mathbf{C}\vv_0^r +  \mathbf{A}^{-1}\left(\mathbf{I}-\gamma   \mathbf{A}\right)^k\left( \mathbf{A}\uu_0^r+ \mathbf{C}\vv_0^r\right) - \gamma     \left(\mathbf{I}-\gamma \mathbf{A}\right)^k\left( \mathbf{A}\uu_0^r+ \mathbf{C}\vv_0^r\right) 
            \\
            &=
            - \mathbf{A}^{-1} \mathbf{C}\vv_0^r + \left( \mathbf{A}^{-1}-\gamma\mathbf{I}\right)\left[\left(\mathbf{I}-\gamma  \mathbf{A}\right)^k\left( \mathbf{A}\uu_0^r+ \mathbf{C}\vv_0^r\right) \right] \\
            &=
            - \mathbf{A}^{-1} \mathbf{C}\vv_0^r +  \mathbf{A}^{-1}\left(\mathbf{I}-\gamma  \mathbf{A}\right)\left[\left(\mathbf{I}-\gamma \mathbf{A}\right)^k\left( \mathbf{A}\uu_0^r+ \mathbf{C}\vv_0^r\right) \right] \\
            &=
            - \mathbf{A}^{-1} \mathbf{C}\vv_0^r +  \mathbf{A}^{-1}\left(\mathbf{I}-\gamma \mathbf{A}\right)^{k+1}\left( \mathbf{A}\uu_0^r+ \mathbf{C}\vv_0^r\right)  \\
        \end{align*}
        Now we only need to show that our claim also works for $k=0$,
        \begin{align*}
            \uu_0^r &= - \mathbf{A}^{-1} \mathbf{C}\vv_0^r +  \mathbf{A}^{-1}\left(\mathbf{I}-\gamma   \mathbf{A}\right)^0\left( \mathbf{A}\uu_0^r+ \mathbf{C}\vv_0^r\right)\\
            &= - \mathbf{A}^{-1} \mathbf{C}\vv_0^r + \uu_0^r+ \mathbf{A}^{-1} \mathbf{C}\vv_0^r\\
            &= \uu_0^r
        \end{align*}
         Also, we do the computation with respect to $\vv$:
        \begin{align*}
            \vv_k^r =  \mathbf{B}^{-1} \mathbf{C}^\top\uu_0^r +  \mathbf{B}^{-1}\left(\mathbf{I}-\gamma   \mathbf{B}\right)^k\left(\mathbf{B}\vv_0^r- \mathbf{C}^\top\uu_0^r\right)
        \end{align*}
        By using the update rule of Decoupled GDA we get: 
        \begin{align*}
            \vv_{k+1}^r &= \vv_k - \gamma   \nabla_\vv f(\uu_0^r,\uu_k^r)\\
            &= \vv_k + \gamma   \left(- \mathbf{B}\vv_k^r +  \mathbf{C}^\top\uu_0^r\right)\\
            &= \vv_k - \gamma    \left( \mathbf{B}\vv_k^r -  \mathbf{C}^\top\uu_0^r\right)
            \\
            &=
             \mathbf{B}^{-1} \mathbf{C}^\top\uu_0^r +  \mathbf{B}^{-1}\left(\mathbf{I}-\gamma   \mathbf{B}\right)^k\left(\mathbf{B}\vv_0^r- \mathbf{C}^\top\uu_0\right) 
            \\& \quad\quad\quad\quad
            - \gamma  \left(\mathbf{B}\left[ \mathbf{B}^{-1} \mathbf{C}^\top\uu_0^r +  \mathbf{B}^{-1}\left(\mathbf{I}-\gamma  \mathbf{B}\right)^k\left(\mathbf{B}\vv_0^r- \mathbf{C}^\top\uu_0^r\right)\right] -  \mathbf{C}^\top\uu_0^r\right)
            \\
            \\
            &=  \mathbf{B}^{-1} \mathbf{C}^\top\uu_0^r +  \mathbf{B}^{-1}\left(\mathbf{I}-\gamma   \mathbf{B}\right)^k\left(\mathbf{B}\vv_0^r- \mathbf{C}^\top\uu_0\right) 
            \\& \quad\quad\quad\quad
            -
            \gamma \left( \mathbf{C}^\top\uu_0^r + \left(\mathbf{I}-\gamma   \mathbf{B}\right)^k\left(\mathbf{B}\vv_0^r- \mathbf{C}^\top\uu_0^r\right) -  \mathbf{C}^\top\uu_0^r\right)
            \\
            \\
            &=
             \mathbf{B}^{-1} \mathbf{C}^\top\uu_0^r +  \mathbf{B}^{-1}\left(\mathbf{I}-\gamma \mathbf{B}\right)^k\left(\mathbf{B}\vv_0^r- \mathbf{C}^\top\uu_0\right) - \gamma  \left(\mathbf{I}-\gamma   \mathbf{B}\right)^k\left(\mathbf{B}\vv_0^r- \mathbf{C}^\top\uu_0^r\right) \\
            &=  \mathbf{B}^{-1} \mathbf{C}^\top\uu_0^r + \left( \mathbf{B}^{-1}-\gamma  \mathbf{I}\right)\left[\left(\mathbf{I}-\gamma   \mathbf{B}\right)^k\left(\mathbf{B}\vv_0^r- \mathbf{C}^\top\uu_0^r\right) \right] \\
            &=  \mathbf{B}^{-1} \mathbf{C}^\top\uu_0^r +  \mathbf{B}^{-1}\left(\mathbf{I}-\gamma   \mathbf{B}\right)\left[\left(\mathbf{I}-\gamma   \mathbf{B}\right)^k\left(\mathbf{B}\vv_0^r- \mathbf{C}^\top\uu_0^r\right) \right] \\
            &=  \mathbf{B}^{-1} \mathbf{C}^\top\uu_0^r +  \mathbf{B}^{-1}\left(\mathbf{I}-\gamma  \mathbf{B}\right)^{k+1}\left(\mathbf{B}\vv_0^r- \mathbf{C}^\top\uu_0^r\right)  \\
        \end{align*}
        Now we only need to show this our claim also works for $k=0$,
        \begin{align*}
            \vv_0^r &=  \mathbf{B}^{-1} \mathbf{C}^\top\uu_0^r +  \mathbf{B}^{-1}\left(\mathbf{I}-\gamma   \mathbf{B}\right)^0\left(\mathbf{B}\vv_0^r- \mathbf{C}^\top\uu_0^r\right)\\
            &=  \mathbf{B}^{-1} \mathbf{C}^\top\uu_0^r + \vv_0^r-\mathbf{B}^{-1}\mathbf{C}^\top\uu_0^r\\
            &= \vv_0^r
        \end{align*}
    \end{proof}

\subsection{Proof of Lemma \ref{lemma: matrix format of the explicit iterates of decoupled GDA for quadratics}}
Given a two-player quadratic game in the form of \eqref{eq: quadratic games}. At some round $r$ after $K$ local steps with a stepsize of $\gamma \leq \max\{\frac{1}{L_u},\frac{1}{L_v}\}$, the exact iterate generated by Decoupled GDA is given as follows: 
\begin{equation}\label{eq: matrix format of iterates for quadratics2}
\begin{aligned}
    &\hspace{2.5cm}\xx^r_K = \left [\textcolor{black}{\mathbf{Q}}^K + \textcolor{black}{\mathbf{E}} \right] \xx^r_0 \\
    & \quad \\ 
    & \mathbf{Q} :=     \textcolor{black}{\begin{pmatrix} (\II - \gamma  \AAA) & \mathbf{0}\\\mathbf{0} & (\II - \gamma  \BB) \end{pmatrix}}, \quad 
    \mathbf{E} := \textcolor{black}{\begin{pmatrix}\mathbf{0} & -\mathbf{E}_u \\ \mathbf{E}_v & \mathbf{0}\end{pmatrix}} \\ 
    & \quad \\ 
    &\mathbf{E}_u := \left[\II-(\II - \gamma   \AAA)^K\right] \AAA^{-1} \CC, \quad \mathbf{E}_v := \left[\II-(\II - \gamma   \BB)^K\right] \BB^{-1} \CC^\top 
\end{aligned}
\end{equation}
After taking the norm of both sides we have: 
\begin{equation}
\begin{aligned}
    &\norm{\xx^r_K} \leq \max\left\{(1-\gamma   \mu_u)^K,(1-\gamma   \mu_v)^K\right\} + \norm{\CC}\cdot \max\left\{\delta(\AAA) ,\delta(\BB) \right\}^{1/2}  \\[2.5ex]
    & \hspace{1.5cm}\delta(\AAA) := \frac{(1-(1-\gamma  L_u)^K)^2}{\mu_u^2}, \quad \delta(\BB) := \frac{(1-(1-\gamma  L_v)^K)^2}{\mu_v^2}
\end{aligned}
\end{equation}

\newcommand{\matq}{\begin{pmatrix} (\II - \gamma \AAA) & \mathbf{0}\\\mathbf{0} & (\II - \gamma  \BB) \end{pmatrix}}
\newcommand{\mate}{\begin{pmatrix}\mathbf{0} & -\mathbf{E}_u\\ \mathbf{E}_v & \mathbf{0}\end{pmatrix}}

\begin{proof}
    From Lemma \ref{lemma: explicit form of parameters for bilinear games} we can write the explicit iterates for the variable $\xx$:
    \begin{align*}
        \norm{\xx^r_k} &= \norm{\matq^k + \mate}\cdot \norm{\xx^r_0 } \\
        &\leq \norm{\matq^k} + \norm{\mate} \cdot \norm{\xx^r_0 }\\
        &\leq \max\left\{(1-\gamma  \mu_u)^k,(1-\gamma  \mu_v)^k\right\}\cdot \norm{\xx^r_0 }   + \norm{\mate}\cdot \norm{\xx^r_0 }
    \end{align*}
    For computing the norm of the error matrix we need to compute $\sqrt{\lambda_{\max} (\mathbf{E}^\top \mathbf{E})}$. We first form $\mathbf{E}^\top \mathbf{E}$: 
    \begin{align*}
        \mathbf{E}^\top \mathbf{E} = \begin{pmatrix}\mathbf{E}_v^\top \mathbf{E}_v & \mathbf{0} \\ \mathbf{0} & \mathbf{E}_u^\top \mathbf{E}_u\end{pmatrix}
    \end{align*}
    So we have: 
    \begin{align*}
        \lambda_{\max}(\mathbf{E}^\top \mathbf{E}) = \max\left\{ \lambda_{\max}(\mathbf{E}^\top_u \mathbf{E}_u), \lambda_{\max}(\mathbf{E}^\top_v \mathbf{E}_v) \right\}
    \end{align*}
    For computing the $\lambda_{\max}(\mathbf{E}^\top_u \mathbf{E}_u)$ we have: 
    \begin{align*}
        \lambda_{\max}\left(\mathbf{E}^\top_u \mathbf{E}_u \right) &=  \lambda_{\max}\left(\CC^\top \AAA^{-\top}  \left[\II-(\II - \gamma   \AAA)^k\right]^\top  \left[\II-(\II - \gamma   \AAA)^k\right] \AAA^{-1} \CC\right) \\
        &\leq \norm{\CC}^2 \lambda_{\max}\left( \AAA^{-\top}  \left[\II-(\II - \gamma  \AAA)^k\right]^\top  \left[\II-(\II - \gamma  \AAA)^k\right] \AAA^{-1} \right) \\
        &\leq \norm{\CC}^2 \lambda_{\max}\left( \AAA^{-\top}\right)  \lambda_{\max}\left( \left[\II-(\II - \gamma   \AAA)^k\right]^\top\right) \lambda_{\max}\left(\left[\II-(\II - \gamma  \AAA)^k\right] \right) \lambda_{\max}\left(\AAA^{-1}\right) \\
        &\leq \norm{\CC}^2 \frac{(1-(1-\gamma  \lambda_{\max} (\AAA))^k)^2}{\lambda^2_{{\min}} (\AAA)} \\
        &\leq \frac{\norm{\CC}^2}{\mu_u^2}
    \end{align*}
    We have the same computation with respect to player $\vv$ as well which gives us:
    \begin{align*}
        \lambda_{\max}\left(\mathbf{E}^\top_v \mathbf{E}_v \right) &=\norm{\CC}^2 \frac{(1-(1-\gamma \lambda_{\max} (\BB))^k)^2}{\lambda^2_{{\min}} (\BB)} \\
        &\leq \frac{\norm{\CC}^2}{\mu_v^2}
    \end{align*}
    Note that using the assumption $\gamma \leq \max\{\frac{1}{L_u},\frac{1}{L_v}\}$ we make sure that $\delta(\AAA) \leq 1$ and $\delta(\BB) \leq 1$.
\end{proof}

\subsection{Proof of Theorem \ref{theorem: decoupled gda for quadratics}}
For any $R$ and $K \geq 1$ with a stepsize of $\gamma \leq \max\{\frac{1}{L_u},\frac{1}{L_v}\}$ which ensures $\delta(\AAA) \leq 1$ and $\delta(\BB) \leq 1$, after running Decoupled GDA for a total of $T=KR$ iterations on a quadratic game in the form of \eqref{eq: quadratic games} assuming the game is weakly coupled with $c = 1$, we get a rate of: 
\begin{equation}
        \norm{\xx^R-\xx^\star}\le D \left( \exp \left(-\frac{\mu_{\min}}{L_{\max}} K \right) + \frac{L_{uv}}{\mu_{\min}}\right)^R
\end{equation}
Where $L_{\max} := \max\{L_u,L_v\}$ and $\mu_{\min} := \min\{\mu_u,\mu_v\}$.

\begin{proof}
Using previous Lemmas we have: 
    \begin{align*}
        &\norm{\xx^r_k} \\
        &\leq \max\left\{(1-\gamma  \mu_u)^K,(1-\gamma  \mu_v)^K\right\}\cdot \norm{\xx^r_0 }   + \norm{\mate}\cdot \norm{\xx^r_0 } \\
        &\leq \max\left\{(1-\gamma  \mu_u)^K,(1-\gamma \mu_v)^K\right\}\cdot \norm{\xx^r_0 } + \norm{\CC} \max\left\{  \frac{1}{\lambda_{{\min}} (\AAA)},\frac{1}{\lambda_{{\min}} (\BB)}\right\}\cdot \norm{\xx^r_0 } \\
        &\leq \left( (1-\gamma \mu_{\min})^K  + \frac{\norm{\CC}}{\mu_{\min}}  \right) \cdot\norm{\xx^r_0 }
    \end{align*}
    After unrolling the above recursion for $R$ rounds we get: 
    \begin{align*}
        \norm{\xx^r_k}\leq B \left( (1-\gamma \mu_{\min})^K  + \frac{\norm{\CC}}{\mu_{\min}}  \right)^R
    \end{align*}
\end{proof}

\section{Additional Related Works \& Discussion}\label{app:related_works}

\subsection{Decentralized optimization}

The key difference between decentralized and distributed minimax approaches is the presence of a central server. In the former, there is no central server, and nodes communicate directly with their neighbors, whereas in the latter, a central server aggregates the parameters. Our method belongs to the category of distributed methods. However, we will discuss later on that our approach is completely different from the general idea of distributed / federated optimization.

Decentralized optimization is widely studied for the case of minimization \citep{xiao2004fast,tsitsiklis1984problems} with the goal of not relying on a central node or server. This idea is also applied to the case of minimax optimization problems. The paper \cite{liu2020decentralized} is the first who studied non-convex-non-concave decentralized minimax. They also used the idea of optimistic gradient descent and achieved a rate of $\mathcal{O}(\epsilon^{-12})$. In \cite{xian2021faster}, authors proposed an algorithm called DM-HSGD for non-convex decentralized minimax by utilizing variance reduction and achieved a rate of $\mathcal{O}(\kappa^3 \epsilon^{-3})$. Recently, authors in \cite{liu2023precision} proposed an algorithm named Precision for the non-convex-strongly-concave objectives which has a two-stage local updates and gives a rate of $\mathcal{O}(\frac{1}{T})$.

\subsection{Comparison Between Decoupled SGDA and Federated Minimax (Local SGDA)}

In this section, we aim to highlight the key differences between our method and existing distributed or decentralized methods in the literature. As mentioned earlier, our method can be classified as distributed, though it has a major difference from others. In fact, this difference lies in the problem formulation.
\paragraph{Decentralized / Distributed minimax formulation.}  In these settings, we aim to solve the following finite-sum optimization problem over $M$ clients:
\begin{align}\label{eq: federated minimax formulation}
    f(\uu,\vv) = \frac{1}{M} \sum_{m=1}^M f_m(\uu,\vv)
\end{align}
In the above formulation, it is assumed that each client has a different data distribution $\mathcal{D}_m$ and tries to solve the game based on this data. It means that each client keeps updating \textbf{both} $\uu$ and $\vv$ at the same time for several steps. Then the server aggregates the parameters and sends them back to clients. The ultimate goal is to find the saddle point $\xx^\star = (\uu^\star,\vv^\star)$ of the global function $f$, as if the entire dataset $\mathcal{D} = \mathcal{D}_1 \cup \cdots \cup \mathcal{D}_M$ were on a single machine running GDA on it. In this setting, each client is allowed to update both players meaning that it has access to the gradient of $f_m$ with respect to $\uu$ and $\vv$. However in our approach, instead of splitting the data over clients, we split the parameter space. It means one machine is responsible for \textbf{only} updating $\uu$ and another for $\vv$. Our method also allows to have several machines for $\uu$ and several machines for $\vv$. An important point to consider is that the notions of \emph{client} and \emph{player} should not be intermixed. When the number of players is fixed, the distributed minimax approach essentially runs several instances $(f_m)$ of the main game $(f)$ in parallel to ultimately find the saddle point of $f$. In contrast, our method directly finds the saddle point of $f$ by splitting the parameter space across different machines. Figure \ref{fig: decoupled sgd vs. federated minimax} illustrates the difference between these two methods.

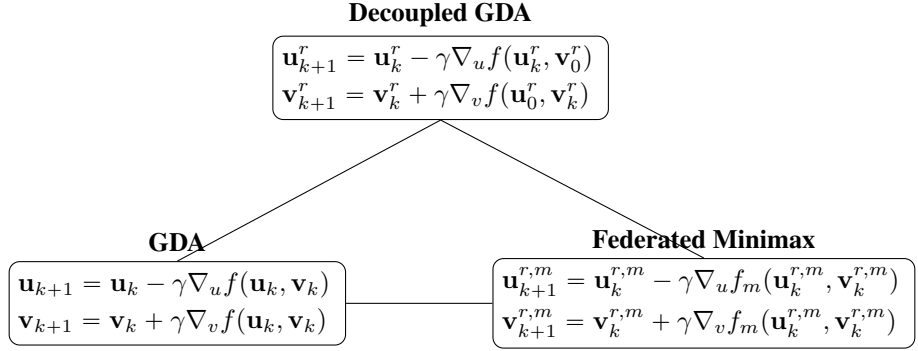
\begin{figure}[H]
\centering
\[
\begin{tikzpicture}[baseline=(current bounding box.center)]
\tikzset{rounded box/.style={
    draw,
    fill=gray!20, 
    rounded corners=5pt, 
    minimum width=2.0cm, 
    minimum height=1.5cm, 
    inner sep=6pt 
}}

\node (A) at (0, 0.5) [
    draw, 
    rounded corners, 
    align=center, 
    label=above:{\textbf{Decoupled GDA}} 
] {
    \(
    \begin{aligned}
        &\uu^{r}_{k+1} = \uu^{r}_{k} - \gamma \nabla_u f(\uu^{r}_{k}, \vv^{r}_{0}) \\
        &\vv^{r}_{k+1} = \vv^{r}_{k} + \gamma \nabla_v f(\uu^{r}_{0}, \vv^{r}_{k}) 
    \end{aligned}
    \)
};

\node (B) at (-3.5, -2.5) [
    draw,
    rounded corners,
    align=center,
    label=above:{\textbf{GDA}}
] {
    \(
    \begin{aligned}
        &\uu_{k+1} = \uu_k - \gamma \nabla_u f(\uu_k, \vv_k) \\
        &\vv_{k+1} = \vv_k + \gamma \nabla_v f(\uu_k, \vv_k)
    \end{aligned}
    \)
};

\node (C) at (3.5, -2.5) [
    draw,
    rounded corners,
    align=center,
    label=above:{\textbf{Federated Minimax}}
] {
    \(
    \begin{aligned}
        &\uu^{r,m}_{k+1} = \uu^{r,m}_{k} - \gamma \nabla_u f_m(\uu^{r,m}_{k}, \vv^{r,m}_{k}) \\
        &\vv^{r,m}_{k+1} = \vv^{r,m}_{k} + \gamma \nabla_v f_m(\uu^{r,m}_{k}, \vv^{r,m}_{k})
    \end{aligned}
    \)
};

\draw[-] (A.south) -- (B.north);
\draw[-] (A.south) -- (C.north);
\draw[-] (B.east) -- (C.west); 

\end{tikzpicture}
\]
\caption{
Comparison of different gradient descent ascent (GDA) approaches: Decoupled GDA, standard GDA, and Federated Minimax. The top box represents Decoupled GDA, where $\uu$ and $\vv$ gradients are separated, while the bottom left and right boxes represent the standard GDA and Federated Minimax approaches, respectively.}
\label{fig:GDA_comparison}
\end{figure}

\begin{figure}[h] 
    \centering
    \includegraphics[width=1\linewidth]{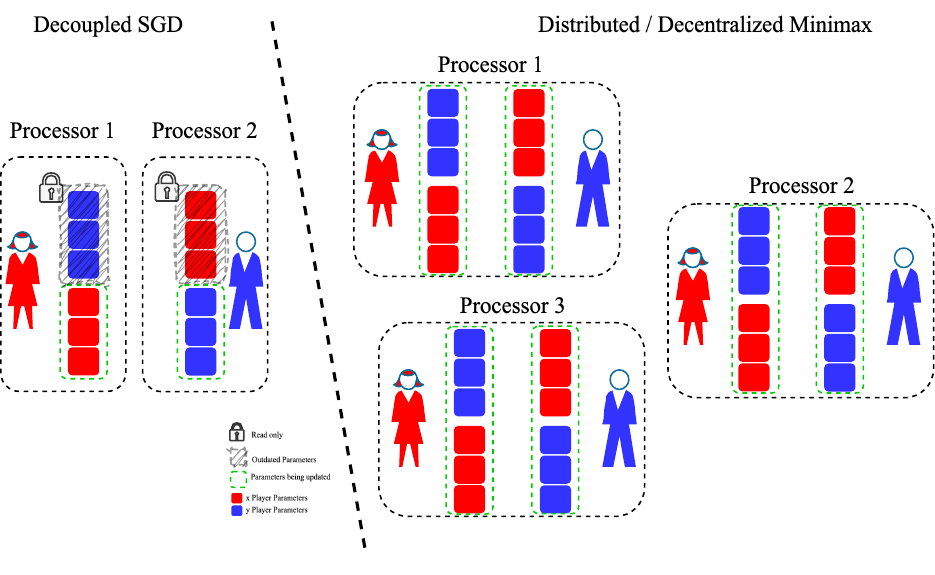}
    \caption{
    Comparison of our method with the federated minimax formulation: Our method splits the parameter space, while the federated formulation splits the data. Moreover, our method only allows each player to access the gradient with respect to their own parameters, whereas in federated minimax, each player can compute the gradient with respect to both their own parameters and the other player's parameters.
}
    \label{fig: decoupled sgd vs. federated minimax}
\end{figure}

\section{Federated Decoupled SGDA}\label{sec: federated decoupled sgda}
\subsection{Comparing Decoupled SGDA with Federated Learning for Minimax Optimization}
\label{Comparing Decoupled SGDA with Federated Learning}

Federated learning (FL) builds on the foundational work in distributed minimization, exploring various settings. In the context of minimax optimization, methods like Local SGD have been extended to achieve convergence rates for different classes of functions in both heterogeneous and homogeneous regimes. FL methods for games differ from the setting considered in this work. In FL, multiple copies of all strategies (parameters) are trained locally on different machines and datasets and periodically aggregated. FL is suited for scenarios where a single local machine runs a multi-player algorithm and has access to all players' loss functions, with "collaboration" built into the design. In contrast, our method suits competitive distributed players (local machines) where each player has noisy or outdated strategies of the remaining players. For further discussion, see Appendix~\ref{app:related_works}. Additionally, federated learning assumes balanced noise across players, which is not required in our setting; revisited in \S~\ref{sec: algorithm explanation} and \S~\ref{sec:experiments}. Finally, in \S~\ref{sec: convergence}, we identify a class of games where our approach leads to faster convergence, even if fully centralized training is possible, which class similarly arises in non-convex settings--\S~\ref{sec:experiments}.In the rest of this section, we study Federated Decoupled SGDA, which is a combination of Federated Minimax and Decoupled SGDA algorithms, and can benefit from the advantages of both approaches. In the next section we propose this method with details.

\subsection{Federated Decoupled SGDA method}

In this section, we study an extension of our method in the context of federated minimax optimization aligned with the works \cite{deng2021local,sharma2022federated}. This line of work is closely related to finite-sum minimization, a well-studied topic. Local SGD \cite{stich2018local} is the most popular method used to solve finite-sum minimization problems in a distributed fashion. As an extension of this method to finite-sum minimax problems, researchers have studied Local SGDA, which is a straightforward extension of Local SGD, incorporating gradient ascent steps in addition to gradient descent. We extend our method to this setting for the sake of completeness and provide a convergence rate that matches the state-of-the-art results for Local SGD while also improving the conditioning of the existing rates for Local SGDA.

\paragraph{Notation.} We consider unconstrained two-player games denoted by $\uu$ and $\vv$ in the spaces $\mathcal{X}_u = \R^{d_u}$ and $\mathcal{X}_v = \R^{d_v}$, respectively. The corresponding product space $\mathcal{X} = \mathcal{X}_u \times \mathcal{X}_v = \R^d$ (with $d = d_u + d_v$) consists of vectors $\xx = (\uu, \vv) \in \R^d$, where $\uu \in \mathcal{X}_u$ and $\vv \in \mathcal{X}_v$. For this section, we consider the common $\ell_2$ norm $\Norm{\xx}_2 = \sqrt{\langle \xx,\xx \rangle}$.

\paragraph{Problem formulation} In distributed minimax optimization, we aim to solve the following problem: 
\begin{align}\label{eq:fedminmax_formulation}
    \min_{\uu \in \mathcal{X}_u} \max_{\vv \in \mathcal{X}_v} \left[f(\uu,\vv) = \frac{1}{M} \sum_{m=1}^M f_m(\uu,\vv) = \frac{1}{M} \sum_{m=1}^M \E_{\xi_m \sim \mathcal{D}_m} f_m(\uu,\vv,\xi_m) \right]
\end{align}
In this setting, we assume that each player's data is distributed across $M$ clients / processors. So each processor has access to a function $f_m:\mathcal{X} \to \R$ on which it can perform stochastic gradient steps. We denote $\xmrk$ and $\ymrk$ as the parameters of players $\uu$ and $\vv$ on client $m$ in some round $r$ after $k$ local steps. We also use the notation $\bar \uu^r_k = \frac{1}{M} \sum_{m=1}^M \xmrk$ and $\bar \vv^r_k = \frac{1}{M} \sum_{m=1}^M \ymrk$ to denote the average of parameters over clients at some round $r$ after $k$ local steps. Data distribution across processors can be either homogeneous or heterogeneous. In the heterogeneous regime, which is the case of study in this paper, each processor holds a different payoff function. To measure the heterogeneity of the problem, it's common to use the following assumption: 
\begin{assumption} \label{assumption: zeta star}
    There exists a constant $\zeta_\star > 0$ satisfying the following inequality in distributed minimax games: 
    \begin{equation}
        \max \left\{\sup_{m} \norm{\nabla_{u} f_m(\xx^\star)}^2 , \sup_{m} \norm{\nabla_v f_m(\xx^\star)}^2 \right\} \leq \zeta^2_\star
    \end{equation}
\end{assumption}
Assumption \ref{assumption: zeta star} is very common in federated learning and it has been used in many works \cite{koloskova2020unified,deng2021local,khaled2020tighter}. Another common assumption in the literature \cite{woodworth2020minibatch,patel2024limits,zindari2023convergence} is gradient similarity $\zeta$ for every point $\xx \in \mathcal{X}$ which is a stronger assumption and cannot be satisfied for quadratic functions. In this work, we use Assumption \ref{assumption: zeta star} to provide our convergence guarantee for our method.

\begin{assumption}
    We assume that each local function $f_m$ is $L$-smooth meaning that for all $\uu,\uu' \in \mathcal{X}_u$ and $\vv,\vv' \in \mathcal{X}_v$ it holds that: 
    \begin{equation}
    \begin{aligned}
       \norm{\nabla_u f_m(\uu,\vv) - \nabla_u f_m(\uu',\vv')} &\leq L \left[ \norm{\uu-\uu'} + \norm{\vv-\vv'} \right] \\
       \norm{\nabla_v f_m(\uu,\vv) - \nabla_v f_m(\uu',\vv')} &\leq L \left[ \norm{\uu-\uu'} + \norm{\vv-\vv'} \right]
    \end{aligned}
    \end{equation}
\end{assumption}

\begin{assumption}
    We assume that the global function $f$ is $\mu$ strongly convex in $\uu$ and $\mu$ strongly concave in $\vv$ if for all $\uu,\uu' \in \mathcal{X}_u$ and $\vv,\vv' \in \mathcal{X}_v$ it holds that: 
    \begin{equation}
    \begin{aligned}
       \langle \uu - \uu' , \nabla_u f(\uu,\vv) - \nabla_u f(\uu',\vv) \rangle &\geq \mu \norm{\uu-\uu'}^2 \\
       \langle \vv - \vv' , \nabla_v f(\uu,\vv') - \nabla_v f(\uu,\vv) \rangle &\geq \mu \norm{\vv-\vv'}^2 
    \end{aligned}
    \end{equation}
\end{assumption}

\begin{assumption}
    The variance of the noise of stochastic gradients on each client is uniformly upper bounded by $\sigma^2$.
    \begin{equation}
    \begin{aligned}
       \E_{\xi_m} \left[ \norm{\nabla_u f_m(\uu,\vv;\xi_m) - \nabla_u f_m(\uu,\vv)}^2 \right] \leq \sigma^2 \\
       \E_{\xi_m} \left[ \norm{\nabla_v f_m(\uu,\vv;\xi_m) - \nabla_v f_m(\uu,\vv)}^2 \right] \leq \sigma^2 
    \end{aligned}
    \end{equation}
\end{assumption}

\paragraph{Method}

\begin{algorithm*}[htb] 
\caption{Decoupled SGDA for two-player federated minimax games}
\begin{algorithmic}[1]
\State \textbf{Input:} step size $\gamma$, initialization $\uu_0,\vv_0$  
\State \textbf{Initialize:} $\forall m \in [M]\,, \uu^{r,0}_0 \gets \uu_0, \quad \yy^{r,0}_0 \gets \vv_0$
\For{$r\in\{1, \ldots, R\}$}
\State $\forall m \in [M]\,,\uu^{m,r}_0 \gets \bar \uu^{r}_0, \quad \yy^{m,r}_0 \gets \bar \vv^{r}_0 $
\For{$k\in\{0, \ldots, K-1\}$}
\For{$m\in\{1, \ldots, M\}$ \textbf{in parallel}}
\State Update local model $\uu^{m,r}_{k+1} \gets \xmrk -  \gamma \nabla_u f_m(\xmrk,\vv^{m,r}_0;\xi_m)$ 
\State Update local model $\vv^{m,r}_{k+1} \gets \ymrk +  \gamma \nabla_v f_m(\uu^{m,r}_0,\ymrk;\xi_m)$ 
\EndFor
\EndFor
\State $\bar \uu^{r+1}_0 \gets \frac{1}{M} \sum_{m=1}^M \uu^{m,r}_K, \quad \bar \vv^{r+1}_0 \gets \frac{1}{M} \sum_{m=1}^M \vv^{m,r}_K$
\State \textbf{Communicate} $\bar \uu^{r}_K$ to all processors with $\vv$ player and $\bar \vv^{r}_K$ to all processors with $\uu$ player
\EndFor
\State \textbf{Output:} $\bar \uu^R_K, \bar \vv^R_K$ 
\end{algorithmic}   
\label{alg: local gda}
\end{algorithm*}

Note that in this section we drop the superscript $r$ for convenience. It's expected to first define operators $F(\xx), F_m(\xx)$ as follows: 
\begin{align} G_m(\xx;\xi_m) := \begin{pmatrix} \nabla_{u} f_m(\uu,\vv;\xi_m) \\ -\nabla_v f_m(\uu,\vv;\xi_m)  \end{pmatrix}, \quad G(\xx;\xi_m) := \begin{pmatrix} \frac{1}{M}\sum_{m=1}^M \nabla_{u} f_m(\uu,\vv;\xi_m) \\ - \frac{1}{M}\sum_{m=1}^M \nabla_v f_m(\uu,\vv;\xi_m)   \end{pmatrix} \end{align}
Where we assume $\E_{\xi_m} [G_m(\xx;\xi_m)] = F_m(\xx)$ and $\E_{\xi_m} [G(\xx;\xi_m)] = F(\xx)$.
\begin{remark}
    Note that $F_m(\xx^\star) \neq 0$ and $F(\xx^\star) = 0$.
\end{remark}
The conventional method for solving \eqref{eq:fedminmax_formulation} is Local SGDA which performs $K$ gradient descent and ascent local steps on each client followed by an averaging on parameters $\uu$ and $\vv$ over all clients which is done by a central server. 
\begin{align}
    \uu^{m}_{K} = \uu^{m}_0 - \gamma \sum_{i=0}^{K-1} \nabla_u f_m(\uu^{m}_i,\vv^{m}_i;\xi_m), \quad \vv^{m}_{K} = \vv^{m}_0 + \gamma \sum_{i=0}^{K-1} \nabla_v f_m(\uu^{m}_i,\vv^{m}_i;\xi_m)
\end{align}
 Then the server computes the average of parameters $\bar \uu_K = \frac{1}{M} \sum_{m=1}^M \uu^m_K$ and $\bar \vv_K = \frac{1}{M} \sum_{m=1}^M \vv^m_K$ and sends them back to all clients to start from these points. On the other hand, our method uses a different operator which contains the outdated gradients. 

 \begin{align} G_m^0(\xx;\xi_m) := \begin{pmatrix} \nabla_{u} f_m(\uu,\vv_0;\xi_m) \\ -\nabla_v f_m(\uu_0,\vv;\xi_m)  \end{pmatrix}, \quad G^0(\xx;\xi_m) := \begin{pmatrix} \frac{1}{M}\sum_{m=1}^M \nabla_{u} f_m(\uu,\vv_0;\xi_m) \\ - \frac{1}{M}\sum_{m=1}^M \nabla_v f_m(\uu_0,\vv;\xi_m)   \end{pmatrix} \end{align}
Where we assume $\E_{\xi_m} [G_m^0(\xx;\xi_m)] = F_m^0(\xx)$ and $\E_{\xi_m} [G^0(\xx;\xi_m)] = F^0(\xx)$. The update rule of our in some round $r$ can be written as: 
\begin{align}
    \uu^{m}_{K} = \uu^{m}_0 - \gamma \sum_{i=0}^{K-1} \nabla_u f_m(\uu^{m}_i,\vv^{m}_0;\xi_m), \quad \vv^{m}_{K} = \vv^{m}_0 + \gamma \sum_{i=0}^{K-1} \nabla_v f_m(\uu^{m}_0,\vv^{m}_i;\xi_m)
\end{align}
Assuming that all clients started with the parameters $\uu_0 = \uu^m_0 $ and $\vv_0 = \vv^m_0$ at the beginning of the round.

 In Algorithm \ref{alg: local gda}, we discuss our method, where two players $\uu$ and $\vv$ have their data distributed across $M$ processors each. At every round, each set of processors update their local models while having access to an \textbf{outdated} version of the other opponent parameters which was received at the beginning of the round. By the end of the round, both set of $\uu$ and $\vv$ processors send the their parameters to a central server which will compute the average of the parameters and send them back to all processors.

\subsection{Convergence Guarantees}

\begin{theorem}\label{theorem: decoupled gda in distributed setting}
For any $K,R,L>0,\mu>0$ after running Decoupled SGDA for a total of $T=KR$ iterations on the problems in the form of \eqref{eq:fedminmax_formulation} in a distributed setting with $2M$ clients using a stepsize of $\gamma \leq \frac{\mu}{32L^2K}$, assuming that $\norm{\xx_0 - \xx^\star}^2 \leq D^2$, we have the following convergence rate:
    \begin{align}
     \E \left[\norm{\bar\xx_{K}^R - \xx^\star}^2\right] &\leq D^2 \exp \left(- \frac{\gamma \mu KR}{2} \right) + \frac{96K^2 L^2 \gamma^2 \zeta^2_\star}{\mu^2} + \frac{6KL^2 \gamma^2 \sigma^2}{\mu^2} + \frac{2\gamma \sigma^2}{M\mu}
    \end{align}
\end{theorem}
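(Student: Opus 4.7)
My plan follows the established template for analyzing Local SGD-type methods in federated learning, combined with the decoupled-strategy machinery developed earlier in the paper. The central quantity to track is $\E\bigl[\norm{\bar\xx^r_k - \xx^\star}^2\bigr]$ where $\bar\xx^r_k = (\bar\uu^r_k, \bar\vv^r_k)$ is the client-averaged iterate. I would decompose the error into three sources: a contraction term obtained by pretending we ran an idealized centralized step with $F$ at $\bar\xx^r_k$, a deterministic drift/staleness term measuring how far the effective averaged operator deviates from $F(\bar\xx^r_k)$, and a stochastic noise term that benefits from averaging across the $M$ clients (yielding the $\sigma^2/M$ factor in the final bound).

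First, I would derive a one-step recursion for the averaged iterate. Writing $\bar\xx^r_{k+1} - \xx^\star = \bar\xx^r_k - \xx^\star - \gamma \bar G^0_k$ with $\bar G^0_k$ the client-averaged stochastic decoupled operator, expanding the square, and using $\mu$-strong monotonicity of $F$ together with Young's inequality (analogous to the proof of the non-weakly-coupled two-player bound in Section~\ref{sec: non weakly coupled proof}), I expect to obtain
\begin{equation*}
\E\bigl[\norm{\bar\xx^r_{k+1} - \xx^\star}^2\bigr] \le \Bigl(1 - \tfrac{\gamma\mu}{2}\Bigr)\E\bigl[\norm{\bar\xx^r_k - \xx^\star}^2\bigr] + \tfrac{3\gamma}{\mu}\E\bigl[\norm{\bar F^0_k - F(\bar\xx^r_k)}^2\bigr] + \tfrac{\gamma^2 \sigma^2}{M},
\end{equation*}
where the $1/M$ factor on the noise arises because the $M$ per-player client noises are independent.

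The main obstacle is bounding the mismatch $\E\bigl[\norm{\bar F^0_k - F(\bar\xx^r_k)}^2\bigr]$, which mixes two distinct sources of discrepancy: the federated client drift $\norm{\xx^{m,r}_k - \bar\xx^r_k}^2$ and the staleness coming from the decoupled updates, namely that $\nabla_u f_m$ is evaluated at the outdated $\vv^{m,r}_0$ rather than $\vv^{m,r}_k$. Both discrepancies are produced by local SGDA steps whose gradients are only approximately equal to $\nabla f$ at the optimum, so both are controlled by the same consensus-style argument as in Lemma~\ref{lemma: consensus error for two player games}, but now with Assumption~\ref{assumption: zeta star} used to upper bound $\norm{\nabla f_m(\xx^\star)}^2$ by $\zeta_\star^2$. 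I would derive a uniform drift bound of the schematic form $\tfrac{1}{M}\sum_m \E\bigl[\norm{\xx^{m,r}_k - \bar\xx^r_0}^2\bigr] \lesssim K\gamma^2 \sigma^2 + K^2\gamma^2\zeta_\star^2 + K\gamma^2 L^2 \sum_{i<k}\E\bigl[\norm{\bar\xx^r_i - \xx^\star}^2\bigr]$, by applying $L$-smoothness of each $f_m$ together with the identity $\nabla f_m(\xx) = \nabla f_m(\xx^\star) + (\nabla f_m(\xx) - \nabla f_m(\xx^\star))$. The stepsize restriction $\gamma \le \mu/(32 L^2 K)$ is exactly what is needed so that the self-referential coefficient in this drift bound can be absorbed.

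Substituting the drift bound into the one-step recursion yields an inequality in the form handled by Lemma~\ref{lemma: Sebastian's lemma}, namely $r_{k+1} \le (1 - a\gamma) r_k + (b\gamma/K)\sum_{i=k-K+1}^{k} r_i + c\gamma^2$ with $a = \mu/2$, $b$ made sufficiently small by the stepsize condition, and $c$ collecting the $\sigma^2/M$, $K L^2 \sigma^2/\mu$, and $K^2 L^2 \zeta_\star^2/\mu$ contributions. Unrolling across all $T = KR$ iterations then gives the three additive error terms together with the contraction factor $\exp(-\gamma\mu KR/2)$ exactly as stated. The hardest part is the careful bookkeeping of constants, in particular tracking the $K^2$ factor on the heterogeneity term and the $K$ factor on the drift-induced noise term; the overall structure mirrors the non-weakly-coupled two-player proof, with $\zeta_\star^2$ playing the role of an extra heterogeneity-induced variance that did not appear in the single-machine setting.
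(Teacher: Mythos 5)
Your proposal is correct and follows essentially the same route as the paper's proof: a one-step recursion for the averaged iterate that separates an idealized centralized GDA step from a drift-plus-staleness mismatch and a $\sigma^2/M$ noise term, a consensus-error bound (the paper's $\Psi+\Phi$ decomposition in Lemma~\ref{lemma: consensus error for m client two player in heterogeneous setting}) controlled via Assumption~\ref{assumption: zeta star} and absorbed using $\gamma \le \mu/(32L^2K)$, and unrolling via Lemma~\ref{lemma: Sebastian's lemma}. The only cosmetic difference is that you bundle the client-drift and staleness errors into a single quantity measured from $\bar\xx^r_0$, whereas the paper tracks them separately per coordinate before summing.
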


\begin{corollary} 
    After choosing a stepsize of $\gamma = \min \left\{\frac{\mu}{32KL^2},\frac{\ln(\max\{2,\mu^2 B^2KR/\sigma^2\})}{\mu KR}\right\}$, we get a rate of:
    \begin{equation}
        \E \left[\norm{\bar\xx^R_{K} - \xx^\star}^2 \right] =  \tilde{\mathcal{O}}\left(D^2\exp{\left(-\frac{\mu^2 }{L^2}R\right)} 
         +
             \frac{ L^2 \zeta^2_\star}{\mu^4 R^2} + \frac{L^2  \sigma^2}{\mu^4 KR^2}
             + \frac{ \sigma^2}{\mu^2MKR}\right)
    \end{equation}
\end{corollary}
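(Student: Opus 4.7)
The plan is to substitute the prescribed stepsize into the four-term bound of Theorem~\ref{theorem: decoupled gda in distributed setting} and then case-split on which factor in the $\min$ defining $\gamma$ is active, verifying term-by-term that the resulting estimate reduces to the claimed $\tilde{\mathcal{O}}(\cdot)$ expression. Denote $\hat\gamma := \frac{\mu}{32KL^2}$ and $\tilde\gamma := \frac{\rho}{\mu KR}$ where $\rho := \ln\bigl(\max\{2,\mu^2 D^2 KR/\sigma^2\}\bigr)$, so $\gamma = \min\{\hat\gamma,\tilde\gamma\}$; since $\tilde{\mathcal{O}}$ hides $\log$-factors, $\rho$ is a $\tilde{\mathcal{O}}(1)$ quantity throughout. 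The argument is the standard stepsize-tuning lemma used in Local-SGD-style analyses (see e.g.\ \citet{stich2018local,koloskova2020unified,khaled2020tighter}).

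In Case (A), $\hat\gamma \leq \tilde\gamma$, equivalently $R \leq \tilde{\mathcal{O}}(L^2/\mu^2)$, and I set $\gamma = \hat\gamma$. The exponential becomes $D^2 e^{-\mu^2 R/(64L^2)}$, matching the first term of the claim up to a constant in the exponent. The remaining three terms evaluate to $\frac{96K^2L^2\hat\gamma^2\zeta_\star^2}{\mu^2} = \mathcal{O}(\zeta_\star^2/L^2)$, $\frac{6KL^2\hat\gamma^2\sigma^2}{\mu^2} = \mathcal{O}(\sigma^2/(KL^2))$, and $\frac{2\hat\gamma\sigma^2}{M\mu} = \mathcal{O}(\sigma^2/(MKL^2))$; invoking the Case-(A) upper bound on $R$, each is in turn $\tilde{\mathcal{O}}$-dominated by $\frac{L^2\zeta_\star^2}{\mu^4R^2}$, $\frac{L^2\sigma^2}{\mu^4 KR^2}$, and $\frac{\sigma^2}{\mu^2 MKR}$ respectively, which is precisely what the claim requires.

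In Case (B), $\gamma = \tilde\gamma = \tilde{\mathcal{O}}(1/(\mu KR))$, and direct polynomial substitution produces $\frac{96K^2L^2\gamma^2\zeta_\star^2}{\mu^2} = \tilde{\mathcal{O}}(L^2\zeta_\star^2/(\mu^4 R^2))$, $\frac{6KL^2\gamma^2\sigma^2}{\mu^2} = \tilde{\mathcal{O}}(L^2\sigma^2/(\mu^4 K R^2))$, and $\frac{2\gamma\sigma^2}{M\mu} = \tilde{\mathcal{O}}(\sigma^2/(\mu^2 MKR))$, matching the last three terms exactly. The exponential contribution $D^2 e^{-\gamma\mu KR/2} = D^2 \bigl(\max\{2,\mu^2 D^2 KR/\sigma^2\}\bigr)^{-1/2}$ is absorbed (up to a $\tilde{\mathcal{O}}$-factor) into the noise bucket, at the cost of a mild log-slack in the constant of the $\sigma^2/(\mu^2 KR)$-type term.

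The main obstacle is the careful bookkeeping of the logarithms and constants hidden in $\tilde{\mathcal{O}}$ across the two cases, in particular the verification in Case (A) that the $R$-free residuals truly collapse into the $1/R^2$ and $1/R$ terms of the claim once the case's upper bound on $R$ is invoked, and a mild $M$-factor slack in Case (B) coming from the fact that $M$ is not placed inside the logarithm of the stepsize. Beyond these details, the calculation is elementary algebra, and the corresponding tuning technique is textbook-standard in the federated learning literature.
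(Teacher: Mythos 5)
Your proposal is correct and takes essentially the same route as the paper: the standard two-case stepsize tuning of \cite{stich2019unified}, substituting each branch of the $\min$ into the four-term bound of Theorem~\ref{theorem: decoupled gda in distributed setting} and checking term-by-term domination (indeed you verify considerably more than the paper, whose own proof of this corollary is a one-line case split with no computation). The one wrinkle, which you flag but slightly understate as a ``mild log-slack,'' is that in Case~(B) the exponential term evaluates to $D^2 e^{-\rho/2} = D\sigma/(\mu\sqrt{KR})$ rather than $\sigma^2/(\mu^2 KR)$ because of the factor $1/2$ in the exponent of the theorem's bound --- a square-root (not merely logarithmic) slack that is inherited from the paper itself, whose in-proof stepsize places $\mu^4 D^2 T^2/\sigma^2$ inside the logarithm (inconsistent with the corollary's statement) and which is properly repaired by doubling the logarithm in the stepsize.
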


\captionsetup[table]{font=scriptsize}

\renewcommand{\arraystretch}{1.5} 
\setlength{\tabcolsep}{2pt} 

\begin{table}[h]
\centering
\resizebox{\linewidth}{!}{ 
\begin{tabular}{c c c} 
\hline
\noalign{\global\arrayrulewidth=1pt}\hline 
Method & Heterogeneous & Homogeneous \\
\noalign{\global\arrayrulewidth=1pt}\hline 
\noalign{\global\arrayrulewidth=0.4pt}\hline 

\begin{tabular}[c]{@{}c@{}} Local SGDA \\ \scriptsize{\cite{deng2021local}} \end{tabular} & 
\raisebox{0.25em}{$\mathcal{O}\left( \frac{L^6}{\mu^6 R^3} + \frac{\sigma^2}{\mu^2 MKR} + \frac{L^2 \zeta^2_\star}{\mu^3 MKR} + \frac{L^2 \sigma^2}{\mu^3 MKR}\right)$} & 
\raisebox{0.25em}{$\tilde{O}\left(\frac{1}{K^2R^2}+\frac{\sigma^2}{\mu^2 M KR}+\frac{L^2 \sigma^2}{\mu^4 MKR}+\frac{L^2 \sigma^2}{\mu^4 M K^2R^2}\right)$} \\

\hline 

\begin{tabular}[c]{@{}c@{}} Local SGD \\ \scriptsize{\cite{koloskova2020unified}} \\ \scriptsize{\cite{yuan2020federated}} \end{tabular} & 
\raisebox{0.15em}{$\mathcal{O}\left(L D^2\exp{\left(-\frac{\mu }{L}R\right)} + \frac{\sigma^2}{\mu MKR} + \frac{L \zeta^2_\star}{\mu^2 R^2} + \frac{L\sigma^2}{\mu^2 KR^2}\right)$} & 
\raisebox{0.15em}{$\mathcal{O}\left(LD^2\exp{\left(-\frac{\mu }{L}KR\right)} + \frac{\sigma^2}{\mu KMR} + \frac{Q^2 \sigma^4}{\mu^5 K^2 R^4}\right)$} \\

\hline 

\begin{tabular}[c]{@{}c@{}} \textbf{Ours} \\ \scriptsize{Fed. Decoupled SGDA} \end{tabular} & 
\raisebox{0.25em}{$\tilde{\mathcal{O}}\left(D^2\exp{\left(-\frac{\mu^2 }{L^2}R\right)} + \frac{ \sigma^2}{\mu^2MKR} +\frac{ L^2 \zeta^2_\star}{\mu^4 R^2} + \frac{L^2  \sigma^2}{\mu^4 KR^2} \right)$} & 
- \\

\noalign{\global\arrayrulewidth=1pt}\hline 
\noalign{\global\arrayrulewidth=0.4pt} 
\end{tabular}
}
\caption{Comparison of Methods in Heterogeneous and Homogeneous Settings}
\label{table: fed minimax comparison}
\end{table}

Table \ref{table: fed minimax comparison} compares state-of-the-art rates for Local SGD, Local SGDA with Federated Decoupled SGDA. The first term in our rate enjoys an exponential decrease which matches the rate of Local SGD. However, the rate for Local SGDA has a rate of $\mathcal{O} \left(\frac{L^6}{\mu^6 R^3}\right)$ which is worse. In addition, in this term we have a better conditioning of $\kappa^2$ compared to $\kappa^6$ in Local SGDA. Note that the condition number $\kappa^2$ in the first term of our rate matches the baseline SGDA which considering the fact that SGD has a conditioning of $\kappa$ which also appears in Local SGD due to the fact that the problem is minimization (not minimax). Our method also matches the rate of Local SGD for the noise terms and the term with heterogeneity. However, it seems that for this term the rate in \cite{deng2021local} contracts the existing lower bound proposed for Local SGD in \cite{patel2024limits} which is: 
\begin{align}
    \E[f(\xx^R) - f(\xx^\star)] \succeq \frac{LD^2}{R} + \frac{(L\sigma^2 D^4)^{1/3}}{K^{1/3}R^{2/3}} + \frac{\sigma D}{\sqrt{MKR}} + \frac{(L \zeta^2_\star D^4)^{1/3}}{R^{2/3}}
\end{align}
As it's clear from the lower bound, the term with heterogeneity cannot be improved using local steps (there is no $K$ in the denominator) while in the rate of \cite{deng2021local} this term can be decreased if $K \to \infty$, which contracts the lower bound.

\subsection{Missing Proofs for Section \ref{sec: federated decoupled sgda}}

\begin{lemma}\label{Lemma: base triangle inequality}
For a set of $M$ vectors $\aa_1, \aa_2, \dots, \aa_M \in \mathbb{R}^d$ we have:
\begin{equation}
    \norm{\sum_{m=1} ^{M} \aa_m} \leq \sum_{m=1} ^{M} \norm{\aa_m}  \,.
\end{equation}
\end{lemma}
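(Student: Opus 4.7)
This is a straightforward generalization of the triangle inequality from two vectors to $M$ vectors, so the plan is to proceed by induction on $M$. The base case $M=1$ is trivial since $\norm{\aa_1} \leq \norm{\aa_1}$, and the case $M=2$ is exactly the standard triangle inequality $\norm{\aa_1 + \aa_2} \leq \norm{\aa_1} + \norm{\aa_2}$, which is a defining property of the norm $\norm{\cdot}$.

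For the inductive step, I would assume the inequality holds for $M-1$ vectors, i.e.\ $\norm{\sum_{m=1}^{M-1} \aa_m} \leq \sum_{m=1}^{M-1} \norm{\aa_m}$, and then write
\[
    \Norm{\sum_{m=1}^{M} \aa_m} = \Norm{\Bigl( \sum_{m=1}^{M-1} \aa_m \Bigr) + \aa_M} \leq \Norm{\sum_{m=1}^{M-1} \aa_m} + \norm{\aa_M} \leq \sum_{m=1}^{M-1} \norm{\aa_m} + \norm{\aa_M} = \sum_{m=1}^{M} \norm{\aa_m},
\]
where the first inequality is the standard two-vector triangle inequality applied to the pair $\bigl( \sum_{m=1}^{M-1} \aa_m , \aa_M \bigr)$ and the second is the inductive hypothesis.

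There is no real obstacle here: the only thing to verify is that the norm $\norm{\cdot}$ used in the paper genuinely satisfies the two-vector triangle inequality, which it does since it is a Euclidean norm (in the decoupled SGDA setting, $\norm{\xx} = \langle \PPP \xx, \xx \rangle^{1/2}$ for a positive definite matrix $\PPP$, and any inner-product norm is subadditive by Cauchy--Schwarz). Hence no additional assumption beyond what the paper already imposes is needed, and the lemma follows directly by induction.
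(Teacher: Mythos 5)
Your proof is correct and complete: the induction on $M$ using the two-vector triangle inequality is the standard argument, and your remark that the paper's norm is an inner-product norm (hence subadditive) covers the only point that needs checking. For what it is worth, the paper states this lemma without any proof at all, treating it as a standard fact, so your write-up simply supplies the routine argument the authors chose to omit.
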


\begin{lemma}\label{Lemma: triangle inequality}
For a set of $M$ vectors $\aa_1, \aa_2, \dots, \aa_M \in \mathbb{R}^d$ we have:
\begin{equation}
    \norm{\sum_{m=1} ^{M} \aa_m}^2 \leq M \sum_{m=1} ^{M} \norm{\aa_m}^2  \,.
\end{equation}
\end{lemma}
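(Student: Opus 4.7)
The plan is to derive this as an immediate consequence of Cauchy--Schwarz (or equivalently Jensen's inequality applied to the convex function $\|\cdot\|^2$), combined with the basic triangle inequality already stated in the preceding Lemma.

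Concretely, first I would apply the triangle inequality (Lemma~\ref{Lemma: base triangle inequality}) to bound $\Norm{\sum_{m=1}^M \aa_m} \leq \sum_{m=1}^M \Norm{\aa_m}$. Squaring both sides then yields
\[
    \Norm{\sum_{m=1}^M \aa_m}^2 \leq \Bigl(\sum_{m=1}^M \Norm{\aa_m}\Bigr)^2 = \Bigl(\sum_{m=1}^M 1 \cdot \Norm{\aa_m}\Bigr)^2.
\]
The second step is to apply the Cauchy--Schwarz inequality in $\R^M$ to the vectors $(1,\dots,1)$ and $(\Norm{\aa_1},\dots,\Norm{\aa_M})$, obtaining
\[
    \Bigl(\sum_{m=1}^M 1 \cdot \Norm{\aa_m}\Bigr)^2 \leq \Bigl(\sum_{m=1}^M 1^2\Bigr)\Bigl(\sum_{m=1}^M \Norm{\aa_m}^2\Bigr) = M \sum_{m=1}^M \Norm{\aa_m}^2,
\]
which completes the proof upon chaining the two inequalities.

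Alternatively, a one-line argument via Jensen's inequality: convexity of $t \mapsto \|t\|^2$ on $\R^d$ gives $\Norm{\tfrac{1}{M}\sum_{m=1}^M \aa_m}^2 \leq \tfrac{1}{M}\sum_{m=1}^M \Norm{\aa_m}^2$, and multiplying both sides by $M^2$ yields the claim. There is no real obstacle here; the statement is a standard textbook inequality, and the only ``choice'' in the write-up is which of the two equivalent routes (Cauchy--Schwarz after triangle vs.\ direct Jensen) to present.
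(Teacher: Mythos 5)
Your proof is correct; both routes (triangle inequality followed by Cauchy--Schwarz, or Jensen applied to the convex map $t \mapsto \Norm{t}^2$) are the standard arguments for this elementary inequality, and the paper itself states the lemma without proof, so there is nothing to diverge from.
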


\begin{lemma} \label{Lemma: generalized triangle inequality}
    For two arbitrary vectors $\aa,\bb \in \mathbb{R}^d$ and $\forall \gamma > 0$ we have:
    \begin{equation}
        \norm{\aa+\bb}^2 \leq (1+ \gamma ) \norm{\aa}^2 + (1+ \gamma^{-1} ) \norm{\bb}^2 \,.
    \end{equation} 
\end{lemma}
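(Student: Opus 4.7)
The plan is to reduce the inequality to a scaled application of Young's (or AM--GM) inequality after expanding the squared norm of the sum. First, I would expand the left-hand side using bilinearity of the Euclidean inner product:
\[
\norm{\aa+\bb}^2 = \norm{\aa}^2 + 2\langle \aa,\bb\rangle + \norm{\bb}^2.
\]
This reduces the task to bounding the single cross term $2\langle \aa,\bb\rangle$ in terms of $\norm{\aa}^2$ and $\norm{\bb}^2$ with the target coefficients $\gamma$ and $\gamma^{-1}$.

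Next, for the cross term I would invoke the elementary scaled Young inequality: for any $\gamma > 0$,
\[
2\langle \aa,\bb\rangle \leq \gamma \norm{\aa}^2 + \gamma^{-1}\norm{\bb}^2.
\]
This in turn follows immediately by expanding the trivially non-negative quantity $\norm{\sqrt{\gamma}\,\aa - \gamma^{-1/2}\bb}^2 \geq 0$ and rearranging. Substituting this bound into the expansion above and collecting the coefficients of $\norm{\aa}^2$ and $\norm{\bb}^2$ yields $(1+\gamma)\norm{\aa}^2 + (1+\gamma^{-1})\norm{\bb}^2$, which is exactly the claim.

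There is essentially no obstacle in the argument; it is a one-line calculation once the inner-product expansion is written down. The only point worth flagging is the hypothesis $\gamma > 0$, which is used twice: to guarantee that $\sqrt{\gamma}$ and $\gamma^{-1/2}$ are well-defined real scalars when forming the auxiliary vector $\sqrt{\gamma}\,\aa - \gamma^{-1/2}\bb$, and to ensure that both coefficients $1+\gamma$ and $1+\gamma^{-1}$ on the right-hand side are positive (so the bound is not vacuous). No additional structural assumptions on $\aa$ or $\bb$ are needed beyond their belonging to a real inner-product space.
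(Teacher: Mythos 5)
Your proof is correct: the expansion of $\norm{\aa+\bb}^2$ followed by the scaled Young inequality $2\langle \aa,\bb\rangle \leq \gamma \norm{\aa}^2 + \gamma^{-1}\norm{\bb}^2$ (obtained from $\norm{\sqrt{\gamma}\,\aa - \gamma^{-1/2}\bb}^2 \geq 0$) gives exactly the claimed bound. The paper states this lemma without proof as a standard auxiliary fact, and your argument is the canonical derivation, so there is nothing to compare against or correct.
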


\begin{lemma} \label{lemma: noise of gradient differences}
    Let Assumption \ref{assumption: noise of sgd} holds. Then we have:
    \begin{align}
        \mathbb{E}_{\xi_m} \norm{\frac 1 M \sum_{m=1} ^{M} \nabla_u f_m(\uu,\vv,\xi_m) - \frac 1 M \sum_{m=1} ^{M} \nabla_u f_m(\uu,\vv)}^2 \leq \frac{\sigma^2}{M} \,.
    \end{align}
    The same argument holds for gradient with respect to $\vv$.
\end{lemma}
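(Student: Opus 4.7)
The plan is to reduce this to the classical variance-of-an-average computation for independent mean-zero random vectors. First I would define, for each client $m \in [M]$, the noise vector
\begin{equation*}
    \mathbf{X}_m := \nabla_u f_m(\uu,\vv,\xi_m) - \nabla_u f_m(\uu,\vv).
\end{equation*}
By the unbiasedness of the stochastic oracle, $\E_{\xi_m}[\mathbf{X}_m] = \mathbf{0}$, and by Assumption~\ref{assumption: noise of sgd} (applied coordinate-wise, or equivalently using the client-wise bound invoked in this section) we have $\E_{\xi_m} \norm{\mathbf{X}_m}^2 \le \sigma^2$. The quantity to bound is then $\frac{1}{M^2} \E \norm{\sum_{m=1}^M \mathbf{X}_m}^2$.

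Next I would expand the squared norm of the sum in the standard way,
\begin{equation*}
    \E \Bigl\lVert \sum_{m=1}^M \mathbf{X}_m \Bigr\rVert^2 = \sum_{m=1}^M \E \norm{\mathbf{X}_m}^2 + \sum_{m \neq n} \E \langle \mathbf{X}_m, \mathbf{X}_n \rangle,
\end{equation*}
and then argue that each cross term vanishes. The key observation is that the random samples $\xi_1, \ldots, \xi_M$ drawn at the different clients are mutually independent; combining this with $\E[\mathbf{X}_m] = 0$ for each $m$ yields $\E \langle \mathbf{X}_m, \mathbf{X}_n \rangle = \langle \E \mathbf{X}_m, \E \mathbf{X}_n \rangle = 0$ whenever $m \neq n$.

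With the cross terms eliminated, only the diagonal survives, and the bound on the per-client variance gives
\begin{equation*}
    \frac{1}{M^2} \sum_{m=1}^M \E \norm{\mathbf{X}_m}^2 \;\le\; \frac{1}{M^2} \cdot M \sigma^2 \;=\; \frac{\sigma^2}{M},
\end{equation*}
which is exactly the claim. The same argument transfers verbatim to the gradient with respect to $\vv$, as noted in the statement. The only point that requires care, and is really the sole non-routine aspect of the proof, is making the independence of the $\xi_m$ across clients explicit (this is the standard assumption underlying federated stochastic gradient methods); once that is in place, the calculation is purely mechanical.
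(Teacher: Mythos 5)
Your proof is correct. The paper actually states this lemma without proof, treating it as a standard fact, so there is no argument to compare against; your derivation --- centering each client's noise vector, using independence of the $\xi_m$ across clients together with unbiasedness to kill the cross terms, and summing the $M$ per-client variances bounded by $\sigma^2$ --- is exactly the standard calculation that justifies it. You are also right that the mutual independence of the client samples is the one hypothesis being used implicitly; it is nowhere stated formally in the paper but is the conventional assumption in this federated setting, and flagging it is appropriate.
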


\begin{lemma}[Consensus Error]\label{lemma: consensus error for m client two player in heterogeneous setting}
After running Decoupled Local SGDA for $k$ local steps at some round $r$ with a step-size of $\gamma \leq \frac{\mu}{32L^2K}$, the error $\Psi(\zmrk) + \Phi(\zbrk) $ can be upper bounded as follows:
\begin{equation}
    \E[\Psi(\zmrk) + \Phi(\zbrk)]  \leq \sum_{i=1}^K \frac{\mu^2}{8KL^2} \norm{\bar \xx^r_i - \xx^\star}^2 + 32K^2 \gamma^2 \zeta^2_\star + \frac{2K\gamma^2 \sigma^2}{M} + 2K\gamma^2 \sigma^2
\end{equation} 
\end{lemma}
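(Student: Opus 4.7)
The plan is to expand the consensus errors $\Psi(\zmrk)$ and $\Phi(\zbrk)$ (the client-wise deviation of the local iterates $\uu^{m,r}_k$ and $\vv^{m,r}_k$ from their averages $\bar\uu^r_k,\bar\vv^r_k$, or equivalently from the round-initial point $\uu^{r}_0, \vv^{r}_0$ since both coincide at step $k=0$) by plugging in the local update rule. Because the stochastic gradient oracle $\nabla_u f_m(\uu^{m,r}_k,\vv^{m,r}_0;\xi_m)$ uses the \emph{outdated} opponent iterate $\vv^{m,r}_0$, the drift of $\uu^{m,r}_k$ around $\uu^{r}_0$ is driven only by gradients with respect to $\uu$ at fixed $\vv^{m,r}_0$, which decouples the two players' consensus analyses and lets me handle them symmetrically.

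First, I will apply the generalized triangle inequality (Lemma \ref{Lemma: generalized triangle inequality}) with parameter $1/K$ to peel off the stochastic noise and the expected update. After taking expectation over $\xi_m$ and using Lemma \ref{lemma: noise of gradient differences}-style bounds, the noise contribution per step is $\gamma^2\sigma^2$ on a single client (plus $\gamma^2\sigma^2/M$ on the average), matching the two noise terms in the statement once summed over $K$ steps. The expected-update part is $\gamma \nabla_u f_m(\uu^{m,r}_k,\vv^{m,r}_0)$; I will then add and subtract $\nabla_u f_m(\xx^\star)$ and $\nabla_u f_m(\bar\xx^r_k)$ so that the decomposition reads gradient-at-saddle (bounded by $\zeta_\star^2$ via Assumption \ref{assumption: zeta star}) plus a Lipschitz piece (bounded via $L$-smoothness by $L^2\|\xx^{m,r}_k-\xx^\star\|^2$, which I then split into client-drift plus $\|\bar\xx^r_k-\xx^\star\|^2$).

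Next, I set up the one-step recursion. Collecting the contributions, the consensus error at step $k+1$ obeys something of the form
\begin{equation*}
\E[\Psi_{k+1}+\Phi_{k+1}] \leq \Bigl(1+\tfrac{1}{K}+cL^2\gamma^2\Bigr)\E[\Psi_k+\Phi_k] + c_1 \gamma^2 \zeta_\star^2 + c_2 L^2\gamma^2 \|\bar\xx^r_k-\xx^\star\|^2 + \gamma^2\sigma^2 + \tfrac{\gamma^2\sigma^2}{M}.
\end{equation*}
With the stepsize choice $\gamma\leq \frac{\mu}{32L^2K}$ I ensure $cL^2\gamma^2\leq \frac{1}{K}$, so the effective contraction factor is at most $(1+2/K)$, and the coefficient $c_2 L^2 \gamma^2$ becomes exactly $\frac{\mu^2}{8KL^2}$ after absorbing the stepsize. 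Unrolling the recursion over $k$ steps and using $(1+2/K)^K\leq e^2\leq 8$ turns the one-step constants into the aggregated constants $32K^2\gamma^2\zeta_\star^2$, $2K\gamma^2\sigma^2$, $2K\gamma^2\sigma^2/M$, and the sum $\sum_{i=1}^K \frac{\mu^2}{8KL^2}\|\bar\xx^r_i-\xx^\star\|^2$.

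The main obstacle is the bookkeeping to land exactly on the stated constants: tracking how the $(1+1/K)^K\leq e$ blow-up interacts with the three disjoint error sources (heterogeneity, local noise, averaged noise), and choosing the weights in Lemma \ref{Lemma: generalized triangle inequality} so the $L$-smoothness term contributes with coefficient $\mu^2/(8KL^2)$ rather than a worse $\kappa^2$-dependent coefficient. The step size cap $\gamma\leq \mu/(32L^2K)$ is exactly what makes this book-keeping close; I expect no conceptual difficulty beyond routine federated-learning-style drift computation, but care is needed since each player's drift sees only a partial gradient, so Assumption \ref{assumption: zeta star} must be applied component-wise and then combined.
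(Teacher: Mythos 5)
Your proposal follows essentially the same route as the paper's proof: expand the update rule, peel off the stochastic noise via unbiasedness to get the $\gamma^2\sigma^2$ and $\gamma^2\sigma^2/M$ contributions, apply the generalized triangle inequality with weight $1/K$, decompose the partial gradient at $(\uu^{m,r}_k,\vv^{m,r}_0)$ into a drift piece (contributing $\Psi$ and the opponent's $\Phi$), a distance-to-saddle piece, and a heterogeneity piece bounded by $\zeta_\star^2$, then close the one-step recursion with the stepsize cap and unroll using $(1+O(1/K))^K = O(1)$. The only imprecision is bookkeeping: the per-step coefficient on $\|\bar\xx^r_i-\xx^\star\|^2$ is $\mu^2/(32KL^2)$ and only becomes $\mu^2/(8KL^2)$ after the unrolling factor is absorbed, but this does not affect the validity of the argument.
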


In this setting, we have two different errors related to the use of outdated gradients and deviation from the average iterates. Total error is the sum of both errors. We define the consensus error in this setting as follows: 
\begin{align*}
    &\Psi(\xmrk) := \frac{1}{M} \sum_{m=1}^M\norm{\xmrk - \xbrk}^2, \quad \Psi(\ymrk) := \frac{1}{M} \sum_{m=1}^M\norm{\ymrk - \ybrk}^2 \\
    &\Phi(\xbrk) := \norm{\bar \uu^r_0 - \xbrk}^2, \quad \Phi(\ybrk) := \norm{\bar \vv^r_0 - \ybrk}^2 \\ \\
    &\Psi(\zmrk) = \Psi(\xmrk) + \Psi(\ymrk), \quad \Phi(\zbrk) = \Phi(\xbrk) + \Phi(\ybrk)
\end{align*}
The total consensus error can be computed by summing both errors with respect to $\uu$ and $\vv$: 
\begin{align*}
    &\text{Consensus error} := 
    \fbox{\parbox{3cm}{\centering $\Psi(\uu_k) + \Psi(\vv_k)$ \\ \vspace{2mm} \footnotesize{error caused by deviation from average}}}
    + \fbox{\parbox{3cm}{\centering $\Phi(\uu_k) + \Phi(\vv_k)$ \\ \vspace{2mm} \footnotesize{error caused by outdated gradients}}}
\end{align*}

In the following, the upper bound for consensus error in different settings will be discussed. Note that in the case of multi client, we get different upper bounds based on the assumption on data heterogeneity.

\begin{proof}
    \begin{align*}
        &\E[\Psi(\uu^{m,r}_{k+1}) + \Phi(\bar \uu^{r}_{k+1})] \\
        &= \frac{1}{M} \sum_{m=1}^M \E\norm{\xmrk - \gamma \nabla_u f_m(\xmrk,\vv^{m,r}_0;\xi_m) - \xbrk + \frac{\gamma}{M} \sum_{m=1}^M \nabla_u f_m(\xmrk,\vv^{m,r}_0;\xi_m)}^2 + \\
        &\hspace{2cm} \E\norm{\bar \uu^r_0 - \xbrk + \frac{\gamma}{M} \sum_{m=1}^M \nabla_u f_m(\xmrk,\vv^{m,r}_0;\xi_m)}^2 \\
        &= \frac{1}{M} \sum_{m=1}^M \E\norm{\xmrk - \gamma \nabla_u f_m(\xmrk,\vv^{m,r}_0) - \xbrk + \frac{\gamma}{M} \sum_{m=1}^M \nabla_u f_m(\xmrk,\vv^{m,r}_0)}^2 + \\
        &\hspace{2cm}\E\norm{\bar \uu^r_0 - \xbrk + \frac{\gamma}{M} \sum_{m=1}^M \nabla_u f_m(\xmrk,\vv^{m,r}_0)}^2 + \frac{\gamma^2 \sigma^2}{M} + \gamma^2 \sigma^2\\
        &\leq \left(1 + \frac{1}{K}\right) \E[ \Psi(\xmrk) + \Phi(\xbrk)] + \frac{2K \gamma^2}{M} \sum_{m=1}^M \E\norm{\nabla_u f_m(\xmrk,\vv^{m,r}_0) - \frac{1}{M} \sum_{m=1}^M \nabla_u f_m(\xmrk,\vv^{m,r}_0)}^2 + \\
        &\hspace{2cm}\frac{2K\gamma^2}{M} \sum_{m=1}^M \E \norm{\nabla_u f_m(\xmrk,\vv^{m,r}_0)}^2+ \frac{\gamma^2 \sigma^2}{M} + \gamma^2 \sigma^2\\
        &\leq \left(1 + \frac{1}{K}\right) \E[ \Psi(\xmrk) + \Phi(\xbrk)] + \frac{4K \gamma^2}{M} \sum_{m=1}^M \E \norm{\nabla_u f_m(\xmrk,\vv^{m,r}_0)}^2+ \frac{\gamma^2 \sigma^2}{M} + \gamma^2 \sigma^2\\
        &= \left(1 + \frac{1}{K}\right) \E[ \Psi(\xmrk) + \Phi(\xbrk)] + \\
        &\hspace{2cm}\frac{4K \gamma^2}{M} \sum_{m=1}^M \E \norm{\nabla_u f_m(\xmrk,\vv^{m,r}_0) - \nabla_u f_m(\xbrk,\ybrk) + \nabla_u f_m(\xbrk,\ybrk)}^2+ \frac{\gamma^2 \sigma^2}{M} + \gamma^2 \sigma^2 \\
        &\leq \left(1 + \frac{1}{K}\right) \E[ \Psi(\xmrk) + \Phi(\xbrk)] + \frac{8K \gamma^2}{M} \sum_{m=1}^M \E \norm{\nabla_u f_m(\xmrk,\vv^{m,r}_0) - \nabla_u f_m(\xbrk,\ybrk)}^2 + \\
        &\hspace{2cm}\frac{8K \gamma^2}{M} \sum_{m=1}^M \E\norm{\nabla_u f_m(\xbrk,\ybrk)}^2 + \frac{\gamma^2 \sigma^2}{M} + \gamma^2 \sigma^2\\
\end{align*}
\begin{align*}
        &\leq \left(1 + \frac{1}{K}\right) \E[ \Psi(\xmrk) + \Phi(\xbrk)] + 8KL^2 \gamma^2 \E[\Psi(\xmrk)] + 8KL^2 \gamma^2 \E[\Phi(\ybrk)] + \\
        &\hspace{2cm}\frac{8K \gamma^2}{M} \sum_{m=1}^M \E\norm{\nabla_u f_m(\xbrk,\ybrk)}^2 + \frac{\gamma^2 \sigma^2}{M} + \gamma^2 \sigma^2\\
        &= \left(1 + \frac{1}{K}\right) \E[ \Psi(\xmrk) + \Phi(\xbrk)] + 8KL^2 \gamma^2\E[\Psi(\xmrk) + \Phi(\ybrk) ]+ \\
        &\hspace{2cm}\frac{8K \gamma^2}{M} \sum_{m=1}^M \E\norm{\nabla_u f_m(\xbrk,\ybrk) - \nabla_u f_m(\uu^\star,\vv^\star) + \nabla_u f_m(\uu^\star,\vv^\star)}^2  + \frac{\gamma^2 \sigma^2}{M} + \gamma^2 \sigma^2\\
        &\left(1 + \frac{1}{K}\right) \E[ \Psi(\xmrk) + \Phi(\xbrk)] + 8KL^2 \gamma^2 \E[\Psi(\xmrk) + \Phi(\ybrk) ]+ \\
        &\hspace{2cm}\frac{16K \gamma^2}{M} \sum_{m=1}^M \E\norm{\nabla_u f_m(\xbrk,\ybrk) - \nabla_u f_m(\uu^\star,\vv^\star)}^2 + 16K \gamma^2 \zeta^2_\star + \frac{\gamma^2 \sigma^2}{M} + \gamma^2 \sigma^2 
\end{align*}
    we continue:
    \begin{align*}
    \E[\Psi(\uu^{m,r}_{k+1}) + \Phi(\bar \uu^{r}_{k+1})]
        &\leq \left(1 + \frac{1}{K}\right) \E[ \Psi(\xmrk) + \Phi(\xbrk)] + 8KL^2 \gamma^2 \E[\Psi(\xmrk) + \Phi(\ybrk) ]+ \\
        &\hspace{2cm}16 KL^2 \gamma^2 \E \norm{\zbrk - \xx^\star}^2 + 16K \gamma^2 \zeta^2_\star + \frac{\gamma^2 \sigma^2}{M} + \gamma^2 \sigma^2
    \end{align*}
    After doing the same computation with respect to $\vv$ we get: 
        \begin{align*}
            &\E[\Psi( \vv^{m,r}_{k+1}) + \Phi(\bar \vv^r_{k+1})] \\
            &\leq \left(1 + \frac{1}{K}\right) \E[ \Psi(\ymrk) + \Phi(\ybrk)]+ 8KL^2 \gamma^2 \E[\Psi(\ymrk) + \Phi(\xbrk)]+ \\
            &\hspace{2cm}16 KL^2 \gamma^2 \E\norm{\zbrk - \xx^\star}^2 + 16K \gamma^2 \zeta^2_\star + \frac{\gamma^2 \sigma^2}{M} + \gamma^2 \sigma^2
        \end{align*}
    Now we sum up both inequalities and we get: 
    \begin{align*}
    &\E[\Psi(\xx^{m,r}_{k+1}) + \Phi(\bar \xx^r_{k+1})] \\
    &\leq \left(1 + \frac{1}{K}\right) \E [\Psi(\zmrk) + \Phi(\zbrk)] + 8KL^2 \gamma^2 \E[ \Psi(\zmrk) + \Phi(\zbrk) ]+ \\
    &\hspace{2cm}32 KL^2 \gamma^2 \E\norm{\zbrk - \xx^\star}^2 + 32K \gamma^2 \zeta^2_\star + \frac{2\gamma^2 \sigma^2}{M} + 2\gamma^2 \sigma^2
    \end{align*}
    With the choice of $\gamma \leq \frac{\mu}{32L^2 K}$ we simplify the above inequality as:
    \begin{align*}
    &\E[\Psi(\xx^{m,r}_{k+1}) + \Phi(\bar \xx^r_{k+1})] \\
    &\leq \left(1 + \frac{1}{K} + \frac{1}{128K}\right) \E[\Psi(\zmrk) + \Phi(\zbrk) ] + \frac{\mu^2}{32 KL^2} \E \norm{\zbrk - \xx^\star}^2 + 32K \gamma^2 \zeta^2_\star + \frac{2\gamma^2 \sigma^2}{M} + 2\gamma^2 \sigma^2
    \end{align*}
    After unrolling the recursion for the last $K$ steps and considering the fact that $\left(1 + \frac{1}{K} + \frac{1}{128K}\right)^K \leq 4$ we have:
    \begin{align*}
    \E[\Psi(\xx^{m,r}_{k+1}) + \Phi(\bar \xx^{r}_{k+1})] \leq \sum_{i=1}^K \frac{\mu^2}{8KL^2} \E \norm{\bar \xx^r_i - \xx^\star}^2 + 32K^2 \gamma^2 \zeta^2_\star + \frac{2K\gamma^2 \sigma^2}{M} + 2K\gamma^2 \sigma^2
\end{align*}
\end{proof}

\subsection{Proof of Theorem \ref{theorem: decoupled gda in distributed setting} }
For any $K,R,L>0,\mu>0$ after running Decoupled SGDA for a total of $T=KR$ iterations on the problems in the form of \eqref{eq:fedminmax_formulation} in a distributed setting with $2M$ clients using a stepsize of $\gamma \leq \frac{\mu}{32L^2K}$, assuming that $\norm{\xx_0 - \xx^\star}^2 \leq B^2$, we have the following convergence rate:
    \begin{align}
     \E\norm{\bar\xx_{K}^R - \xx^\star}^2 &\leq B^2 \exp \left(- \frac{\gamma \mu KR}{2} \right) + \frac{96K^2 L^2 \gamma^2 \zeta^2_\star}{\mu^2} + \frac{6KL^2 \gamma^2 \sigma^2}{\mu^2} + \frac{2\gamma \sigma^2}{M\mu}
    \end{align}
    
    \begin{proof}We start by upper bounding the distance between the average iterate $\bar \uu^r_{k+1}$ and the saddle point. 
        \begin{align*}
        &\E\norm{\bar \uu^r_{k+1} - \uu^\star}^2  \\
        &= \E\norm{\xbrk - \frac{\gamma}{M} \sum_{m=1}^M \nabla_u f_m(\xmrk,\vv^{m,r}_0;\xi_m) - \uu^\star}^2 \\
        &\leq \E\norm{\xbrk - \frac{\gamma}{M} \sum_{m=1}^M \nabla_u f_m(\xmrk,\vv^{m,r}_0) - \uu^\star}^2 + \frac{\gamma^2 \sigma^2}{M}\\
        &= \E\norm{\xbrk + \frac{\gamma}{M} \sum_{m=1}^M \nabla_u f_m(\xbrk,\ybrk) - \frac{\gamma}{M} \sum_{m=1}^M \nabla_u f_m(\xmrk,\vv^{m,r}_0) - \frac{\gamma}{M} \sum_{m=1}^M \nabla_u f_m(\xbrk,\ybrk) - \uu^\star}^2 + \frac{\gamma^2 \sigma^2}{M}\\
        &\leq \left( 1+ \frac{\gamma \mu}{2}\right)\E \norm{\xbrk - \frac{\gamma}{M} \sum_{m=1}^M \nabla_u f_m(\xbrk,\ybrk) - \uu^\star}^2 + \\
        &\hspace{2cm}\left( 1 + \frac{2}{\gamma \mu}\right) \frac{\gamma^2}{M} \sum_{m=1}^M \E\norm{\nabla_u f_m(\xbrk,\ybrk) - \nabla_u f_m(\xmrk,\vv^{m,r}_0)}^2 + \frac{\gamma^2 \sigma^2}{M}
        \end{align*}
        For the first term in the above inequality we have: 
        \begin{align*}
            &\left( 1+ \frac{\gamma \mu}{2}\right) \E \norm{\xbrk - \frac{\gamma}{M} \sum_{m=1}^M \nabla_u f_m(\xbrk,\ybrk) - \uu^\star}^2 \\
            &= \left( 1+ \frac{\gamma \mu}{2}\right) \E \norm{\xbrk - \gamma \nabla_u f(\xbrk,\ybrk) - \uu^\star}^2\\
            &= \left( 1+ \frac{\gamma \mu}{2}\right) \E \left[ \norm{\xbrk - \uu^\star}^2 + \gamma^2 \norm{\nabla_u f(\xbrk,\ybrk)}^2 - 2\gamma \langle \xbrk - \uu^\star,\nabla_u f(\xbrk,\ybrk) \rangle \right]\\
            &\leq \left( 1+ \frac{\gamma \mu}{2}\right) \E \left[(1+\gamma^2 L^2) \norm{\xbrk - \uu^\star}^2  - 2\gamma \langle \xbrk - \uu^\star,\nabla_u f(\xbrk,\ybrk) \rangle \right] 
        \end{align*}
        For the second term we also have: 
        \begin{align*}
            &\left( 1 + \frac{2}{\gamma \mu}\right) \frac{\gamma^2}{M} \sum_{m=1}^M \E\norm{\nabla_u f_m(\xbrk,\ybrk) - \nabla_u f_m(\xmrk,\vv^{m,r}_0)}^2 \\
            &\leq \left( 1 + \frac{2}{\gamma \mu}\right) \frac{L^2 \gamma^2}{M} \sum_{m=1}^M \E \norm{\xbrk - \xmrk}^2 + \left( 1 + \frac{2}{\gamma \mu}\right) \frac{L^2 \gamma^2}{M} \sum_{m=1}^M \E \norm{\ybrk - \bar \vv^r_0}^2  \\
            &=\left( 1 + \frac{2}{\gamma \mu}\right) L^2 \gamma^2 \E \left[\Psi(\xmrk)\right] + \left( 1 + \frac{2}{\gamma \mu}\right) L^2 \gamma^2 \E [\Phi(\ybrk)] 
        \end{align*}
        Where in the last line, we used the fact that $\vv^{m,r}_0 = \bar \vv^r_0$. We then repeat the same computation with respect to $\vv$.
        \begin{align*}
        &\E \norm{\bar \vv^r_{k+1} - \vv^\star}^2 = \\
        &= \E\norm{\ybrk + \frac{\gamma}{M} \sum_{m=1}^M \nabla_v f_m(\uu^{m,r}_0,\ymrk;\xi_m) - \vv^\star}^2 \\
        &\leq \E\norm{\ybrk + \frac{\gamma}{M} \sum_{m=1}^M \nabla_v f_m(\uu^{m,r}_0,\ymrk) - \vv^\star}^2 + \frac{\gamma \sigma^2}{M}\\
        &= \E\norm{\ybrk + \frac{\gamma}{M} \sum_{m=1}^M \nabla_v f_m(\uu^{m,r}_0,\ymrk) - \frac{\gamma}{M} \sum_{m=1}^M \nabla_v f_m(\xbrk,\ybrk)  + \frac{\gamma}{M} \sum_{m=1}^M \nabla_v f_m(\xbrk,\ybrk) - \vv^\star}^2 + \frac{\gamma \sigma^2}{M}\\
        &\leq \left( 1+ \frac{\gamma \mu}{2}\right) \E \norm{\ybrk + \frac{\gamma}{M} \sum_{m=1}^M \nabla_v f_m(\xbrk,\ybrk) - \vv^\star}^2 + \\
        &\hspace{2cm}\left( 1 + \frac{2}{\gamma \mu}\right) \frac{\gamma^2}{M} \sum_{m=1}^M \E \norm{\nabla_v f_m(\xbrk,\ybrk) - \nabla_v f_m(\uu^{m,r}_0,\ymrk)}^2 + \frac{\gamma \sigma^2}{M}
        \end{align*}
        For the first term in the above inequality we have:
        \begin{align*}
            &\left( 1+ \frac{\gamma \mu}{2}\right) \E \norm{\ybrk + \frac{\gamma}{M} \sum_{m=1}^M \nabla_v f_m(\xbrk,\ybrk) - \vv^\star}^2 \\
            &=\left( 1+ \frac{\gamma \mu}{2}\right) \E \norm{\ybrk + \gamma \nabla_v f(\xbrk,\ybrk) - \vv^\star}^2 \\
            &= \left( 1+ \frac{\gamma \mu}{2}\right) \E \left[\norm{\ybrk - \vv^\star}^2 + \gamma^2 \norm{\nabla_v f(\xbrk,\ybrk)}^2 -2\gamma \langle \vv^\star - \ybrk ,\nabla_v f(\xbrk,\ybrk) \rangle \right] \\
            &\leq \left( 1+ \frac{\gamma \mu}{2}\right) \E \left[(1+\gamma^2 L^2)\norm{\ybrk - \vv^\star}^2 -2\gamma \langle \vv^\star - \ybrk ,\nabla_v f(\xbrk,\ybrk) \rangle \right] 
        \end{align*}
        For the second term we also have: 
        \begin{align*}
            &\left( 1 + \frac{2}{\gamma \mu}\right) \frac{\gamma^2}{M} \sum_{m=1}^M \E \norm{\nabla_v f_m(\xbrk,\ybrk) - \nabla_v f_m(\uu^{m,r}_0,\ymrk)}^2 \\
            &\leq \left(1 + \frac{2}{\gamma \mu}\right) \frac{L^2\gamma^2}{M} \sum_{m=1}^M \E \norm{\xbrk - \bar \uu^r_0}^2 + \left( 1 + \frac{2}{\gamma \mu}\right) \frac{L^2\gamma^2}{M} \sum_{m=1}^M \E\norm{\ybrk - \ymrk}^2 \\
            &= \left(1 + \frac{2}{\gamma \mu}\right) L^2\gamma^2 \E[\Phi(\xbrk)] + \left( 1 + \frac{2}{\gamma \mu}\right) L^2\gamma^2 \E[\Psi(\ymrk)]
        \end{align*}
        Summing up the results from the inequalities with respect to $\uu$ and $\vv$ gives us: 
        \begin{align*}
            &\E\norm{\bar \xx^r_{k+1} - \xx^\star}^2  \\
            & \leq\left(1 + \frac{\gamma \mu}{2}\right) \E\left[(1+\gamma^2L^2)\norm{\bar \xx^r_{k} - \xx^\star}^2 - 2\gamma \langle \zbrk - \xx^\star , F(\zbrk) \rangle \right] + \gamma \left(\gamma L^2 + \frac{2L^2}{ \mu}\right) \E \left[\Phi(\zbrk) + \Psi(\zmrk) \right] + \frac{\gamma^2 \sigma^2}{M}\\
            &\leq \left(1 + \frac{\gamma \mu}{2}\right) \E\left[(1 +\gamma^2L^2)\norm{\bar \xx^r_{k} - \xx^\star}^2 - 2\gamma \mu \norm{\zbrk - \xx^\star}^2\right] + \gamma \left(\gamma L^2 + \frac{2L^2}{ \mu}\right) \E \left[\Phi(\zbrk) + \Psi(\zmrk) \right] + \frac{\gamma^2 \sigma^2}{M}\\
            &= \left(1 + \frac{\gamma \mu}{2}\right) \E\left[(1- 2\gamma \mu +\gamma^2L^2)\norm{\bar \xx^r_{k} - \xx^\star}^2 \right] + \gamma \left(\gamma L^2 + \frac{2L^2}{ \mu}\right) \E \left[\Phi(\zbrk) + \Psi(\zmrk) \right] + \frac{\gamma^2 \sigma^2}{M}
        \end{align*}
        With the choice of $\gamma \leq \frac{\mu}{16L^2}$ we have: 
        \begin{align*}
            &\E\norm{\bar \xx^r_{k+1} - \xx^\star}^2\\ &\leq \left(1-\frac{23\gamma \mu}{16}\right) \E \norm{\bar \xx^r_{k} - \xx^\star}^2 + \frac{33\gamma L^2}{16\mu} \E \left[\Phi(\zbrk) + \Psi(\zmrk) \right]+ \frac{\gamma^2 \sigma^2}{M} \\
            &\leq \left(1-\frac{23\gamma \mu}{16}\right) \E \norm{\bar \xx^r_{k} - \xx^\star}^2 + \frac{33\gamma \mu}{128 K} \sum_{i=1}^K \norm{\bar \xx^r_i - \xx^\star}^2 + \frac{96K^2 L^2 \gamma^3 \zeta^2_\star}{\mu} + \frac{7KL^2 \gamma^3 \sigma^2}{\mu M} + \frac{6KL^2 \gamma^3 \sigma^2}{\mu}
            + \frac{\gamma^2 \sigma^2}{M}
        \end{align*}
        We change the current notation for simplicity in proof by substituting $r$ and $k$ with $t$. $t$ varies from $0$ to $T = KR$, iterating over all rounds and local steps:
        \begin{align*}
            \E\norm{\bar\xx_{t+1} - \xx^\star}^2
            &\leq \left(1-\frac{23\gamma \mu}{16}\right) \E \norm{\bar \xx_{t} - \xx^\star}^2 + \frac{33\gamma \mu}{128 K} \sum_{i=\max \{0,t-K+1\}}^t \norm{\bar \xx_i - \xx^\star}^2 
            \\&\hspace{2cm}
            + \frac{96K^2 L^2 \gamma^3 \zeta^2_\star}{\mu} + \frac{7KL^2 \gamma^3 \sigma^2}{\mu M} + \frac{6KL^2 \gamma^3 \sigma^2}{\mu}
            + \frac{\gamma^2 \sigma^2}{M}
        \end{align*}
 
        Here we use the Lemma \ref{lemma: Sebastian's lemma} with the following parameters,
            $$s_t = \E\norm{\bar\xx_t - \xx^\star}^2\;, \;a=\frac{23 \mu}{16} \;,\;b = \frac{33\mu}{128}\;, \;c= \frac{96K^2 L^2 \gamma \zeta^2_\star}{\mu} + \frac{7KL^2 \gamma \sigma^2}{\mu M} + \frac{6KL^2 \gamma \sigma^2}{\mu}+ \frac{\gamma \sigma^2}{M}$$

        The final inequality is:
        \begin{align*}
            \E\norm{\bar\xx_{t} - \xx^\star}^2 &\leq \left(1 - \frac{23\gamma \mu}{32} \right)^t\E\norm{\xx_{0} -  \xx^\star}^2 +\frac{32}{23\mu}\left( \frac{96K^2 L^2 \gamma \zeta^2_\star}{\mu} + \frac{7KL^2 \gamma \sigma^2}{\mu M} + \frac{6KL^2 \gamma \sigma^2}{\mu}+ \frac{\gamma \sigma^2}{M}\right)\gamma
            \\
            &\leq
            \left(1 - \frac{\gamma \mu}{2} \right)^t\E\norm{\xx_{0} -  \xx^\star}^2 
            +
             \frac{96K^2 L^2 \gamma^2 \zeta^2_\star}{\mu^2} + \frac{7KL^2 \gamma^2 \sigma^2}{\mu^2 M} + \frac{6KL^2 \gamma^2 \sigma^2}{\mu^2}+ \frac{\gamma \sigma^2}{M \mu}
        \end{align*}
        Recall that we assumed $\gamma=\frac{\mu}{32KL^2}$ so we have:
               \begin{align*}
            \E\norm{\bar\xx_{T} - \xx^\star}^2 
            &\leq
            \left(1 - \frac{\gamma \mu}{2} \right)^{KR}\E\norm{\xx_{0} -  \xx^\star}^2 
            +
             \frac{96K^2 L^2 \gamma^2 \zeta^2_\star}{\mu^2} + \frac{6KL^2 \gamma^2 \sigma^2}{\mu^2}
             + \frac{2\gamma \sigma^2}{M\mu}
        \end{align*}

       By setting $t = T = RK$, we get:
        \begin{align*}
            \E\norm{\bar\xx_{T} - \xx^\star}^2 
            &\leq
            \left(1 - \frac{\gamma \mu}{2} \right)^{KR}\E\norm{\xx_{0} -  \xx^\star}^2 
            +
             \frac{96K^2 L^2 \gamma^2 \zeta^2_\star}{\mu^2} + \frac{6KL^2 \gamma^2 \sigma^2}{\mu^2}
             + \frac{2\gamma \sigma^2}{M\mu}
            \\
            &\leq
            \exp{\left(-\frac{\gamma\mu}{2}KR\right
        )}\E\norm{\xx_{0} -  \xx^\star}^2 
            +
             \frac{96K^2 L^2 \gamma^2 \zeta^2_\star}{\mu^2} + \frac{6KL^2 \gamma^2 \sigma^2}{\mu^2}
             + \frac{2\gamma \sigma^2}{M\mu}
        \end{align*}
        We can see that with this inequality we can only guarantee convergence to a neighborhood of $\xx^\star$. To obtain a convergence the final, as discussed in \cite{stich2019unified}, we need to choose the step size carefully.
        If $\frac{\mu}{32KL^2}\ge  \frac{\ln(\max\{2,\mu^4 \norm{\xx_0-\xx^\star}^2T^2/\sigma^2\})}{\mu T}$ then we choose $\gamma = \frac{\ln(\max\{2,\mu^4 \norm{\xx_0-\xx^\star}^2T^2/\sigma^2\})}{\mu T}$, otherwise if $\frac{\mu}{32KL^2}<  \frac{\ln(\max\{2,\mu^4 \norm{\xx_0-\xx^\star}^2T^2/\sigma^2\})}{\mu T}$ then we choose $\gamma=\frac{\mu}{32KL^2}$

    we can see that with these choices, we would have:
    \begin{equation*}
        \E\norm{\bar\xx_{T} - \xx^\star}^2 =  \tilde{\mathcal{O}}\left(\exp{\left(-\frac{\mu^2 }{64L^2}R\right)} 
        \norm{\xx_{0} -  \xx^\star}^2 +
             \frac{K^2 L^2 \zeta^2_\star}{\mu^4 T^2} + \frac{KL^2  \sigma^2}{\mu^4 T^2}
             + \frac{2 \sigma^2}{M\mu^2 T}\right)
    \end{equation*}
    \end{proof}

\section{Decoupled SGDA with Ghost Sequence}
\label{sec: ghost}
In this section, we introduce a new extension to the Decoupled SGDA algorithm called \textit{Ghost Sequence}. The base Decoupled SGDA algorithm, explained earlier, is designed to take advantage of problems with a dominant separable component. It minimizes communication complexity by reusing outdated strategies, which has already been analyzed theoretically in the prevous sections.

However, we can push this idea further by not just reusing old strategies but also \textbf{predicting} the opponent’s next move. This smarter approach opens up a new line of research, where more advanced methods can be explored for estimating the opponent’s strategy, offering directions for future work.

To demonstrate the potential of this approach, we propose \textit{Decoupled SGDA with Ghost Sequence}. The main idea is for each player to predict (or approximate) the next move of the opponent based on their previous actions and behaviour. This is achieved by computing the difference between successive strategies during synchronization. Using this information, each player can update both their own and their opponent's parameters, leading to improved performance. As shown in Figure \ref{fig:ghost}, Decoupled SGDA with Ghost Sequence can greatly improve the algorithm’s performance. It also achieves faster communication, even in highly interactive games, and does not require the problem to be weakly coupled.

For more details, refer to Algorithm~\ref{alg:ghost}.

\begin{algorithm*}[htb] 
\caption{Decoupled SGDA with Ghost Sequence}
\begin{algorithmic}[1]
\State \textbf{Input:} Step size $\gamma$, initial strategies $\xx_0 = (\uu_0, \vv_0)$, total rounds $R$, local updates $K$
\For{$r \in \{1, \ldots, R\}$}
    \State Calculate guess $\Delta_{\uu}^r \gets \frac{1}{K} (\uu^r_0 - \uu^{r-1}_0)$
    \State Calculate guess $\Delta_{\vv}^r \gets \frac{1}{K} (\vv^r_0 - \vv^{r-1}_0)$
    \For{$t \in \{0, \ldots, K-1\}$}
        \State Update ghost sequence $\tilde{\vv}_{t+1}^r \gets \tilde{\vv}_{t+1}^r + \Delta_{\vv}^r$
        \State Update local strategy $\uu^r_{t+1} \gets \uu^r_{t} - \gamma \nabla_u f(\uu^r_t, \tilde{\vv}^r_{t+1}; \xi^r_t)$
        \State Update ghost sequence $\tilde{\uu}_{t+1}^r \gets \tilde{\uu}_{t+1}^r + \Delta_{\uu}^r$
        \State Update local strategy $\vv^r_{t+1} \gets \vv^r_{t} + \gamma \nabla_v f(\tilde{\uu}^r_{t+1}, \vv^r_t; \xi^r_t)$
    \EndFor
    \State \textbf{Communicate} $(\uu^r_K, \vv^r_K)$ to other players
\EndFor
\State \textbf{Output:} Final strategies $\xx^R_K = (\uu^R_K, \vv^R_K)$
\end{algorithmic}
\label{alg:ghost}
\end{algorithm*}

\begin{figure}[ht]
    \centering
    \includegraphics[width=1\linewidth]{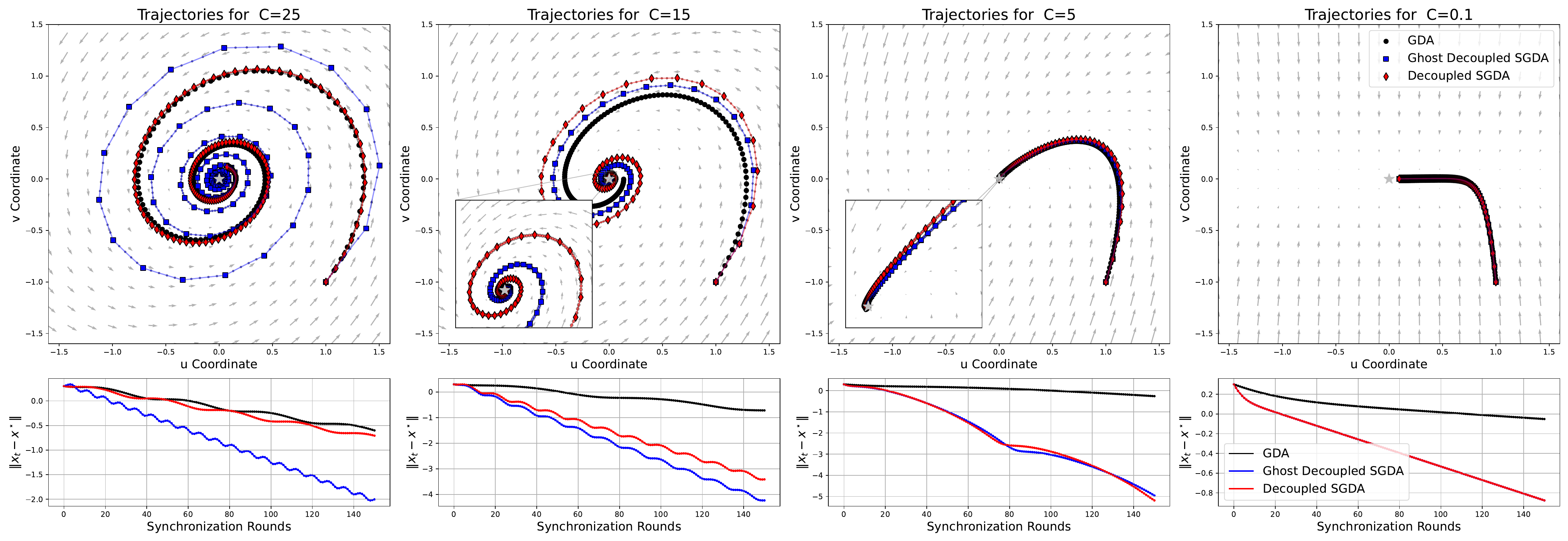}
    \caption{Trajectories and convergence comparison of GDA,\textit{Decoupled SGDA} and  \textit{Decoupled SGDA with Ghost Sequence} with different values of $\CC=C\II $ (interaction strength). The top row shows the trajectories of the different algorithms for $K = \{ 1, 5\}$ over varying values of $C \in \{25, 15, 5, 0.1\}$. As $C$ decreases, trajectories become more stable, with the Decoupled SGDA with Ghost Sequence (blue) showing more efficient convergence compared to GDA (black) and Decoupled SGDA (red). The bottom row presents the synchronization rounds versus distance to equilibrium for each configuration, highlighting faster convergence of  Decoupled SGDA with Ghost Sequence under larger $\CC$ values, while  Decoupled SGDA with Ghost Sequence and  Decoupled SGDA  converge similarly for small $\CC$.}
    \label{fig:ghost}
\end{figure}

\section{Additional Experiments}

\subsection{Finding the stationary point Decoupled SGDA for non-convex functions}
\label{subsec: additional experiments non-convex}
Here, we add one more figure for the toy GAN problem to provide further insight into the behavior of Decoupled SGDA.
\begin{figure}[H]
    \centering
    \includegraphics[width=1\linewidth]{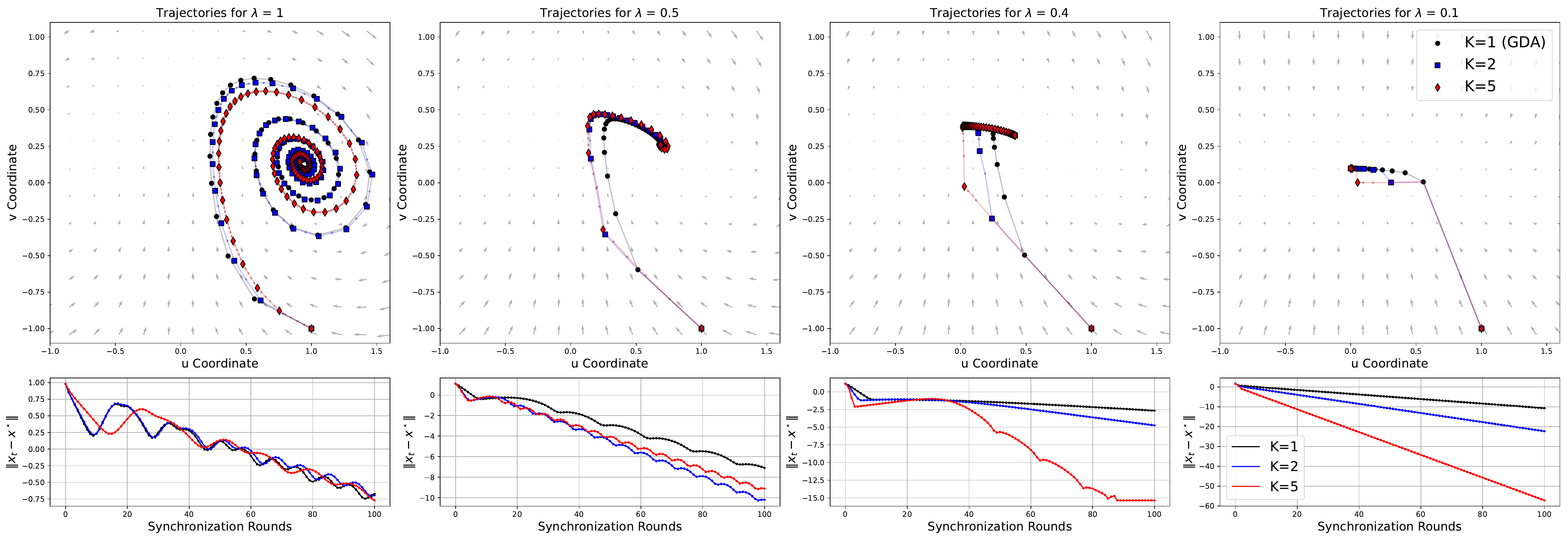}
        \caption{
    \textbf{Trajectories (top row) and distance to equilibrium over synchronization rounds (bottom row) of GDA ($K=1$) and Decoupled SGDA with $K=\{2,5\}$ on the \eqref{eq:ToyGan} problem ($d=2$).}
    $\CC$ in \eqref{eq:quadratic_game} is a constant here---the larger, the stronger the interactive term. \textbf{Left-to-right:} decreasing the constant $c\in \{ 10, 3.5, 2,7, 0\}$. }
    \label{fig:enter-label}
\end{figure}

\subsection{More Figures Decoupled SGDA With Gradient Approximation}
\label{sec:noise comparison}
In this experiment (Figure~\ref{fig:noise_comparison2}), Decoupled SGDA achieves lower gradient norms in fewer communication rounds compared to Local SGDA, especially as interaction noise increases (larger c). Decoupled SGDA shows much more stability in high-noise environments, highlighting its effectiveness in dealing with noisy gradients when compared to federated minimax settings.
\label{subsec:additional experiments 1}
\begin{figure}[H]
    \centering
    \includegraphics[width=1\linewidth]{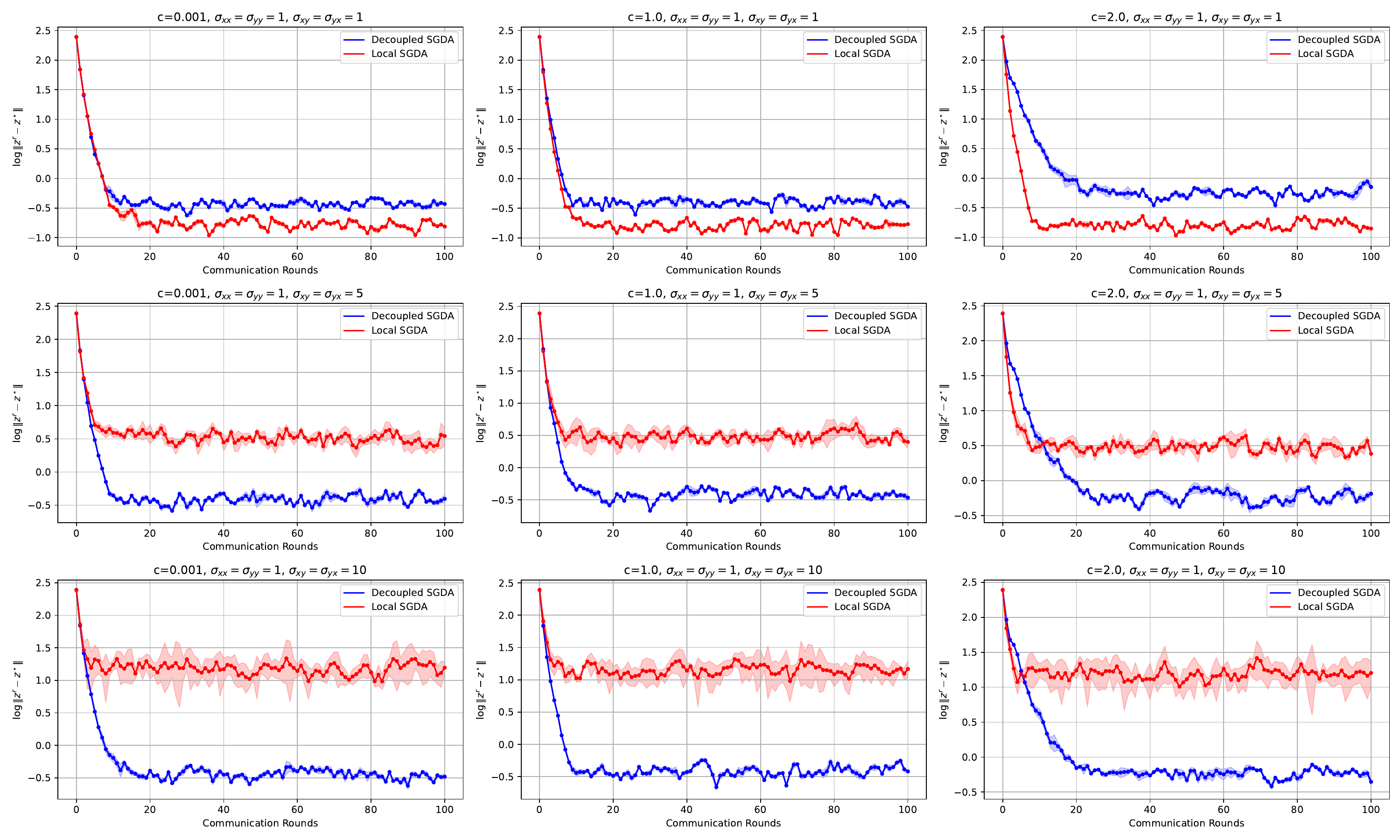}
    \caption{
    \textbf{Comparison of Decoupled SGDA and Local SGDA under different noise settings.} Each plot shows the smallest gradient norm achieved by both algorithms over 100 communication rounds, with varying interaction levels and noise variances. \textbf{Top Row:} Different settings of noise variances in off-diagonal entries (interaction noise). \textbf{Left to Right:} Increasing values of the constant \(c\) controlling the interactive term's strength in the game. Decoupled SGDA consistently outperforms Local SGDA in scenarios where off-diagonal noise is significant, achieving lower gradient norms with fewer communication rounds.
}
    \label{fig:noise_comparison2}
\end{figure}
\section{Experimental Setup}
\label{sec:experimental_setup}
\subsection{Finding the saddle point of quadratic games}
In the first experiment , we conducted tests with a dimensionality of \( D = 2 \) over \( R = 31 \) synchronization rounds. The values of \( K \) tested were \( 1, 2, \) and \( 5 \), alongside parameter combinations \( (a, b, c) \) set as \( (1, 10, 10), (1, 10, 3.5), (1, 10, 2.7), \) and \( (1, 10, 0) \). For each combination, we explored gamma values uniformly spaced in the interval \( [0.0001, 0.1] \). The algorithm initializes \( x \) and \( y \) at \( 1 \) and \( -1 \) respectively and updates these variables based on the gradients \( g_x \) and \( g_y \) computed using the defined parameters. 

For the second experiment, in the left figure, eigenvalues were sampled logarithmically between \( 10^{-1.5} \) and \( 10^{1.5} \), with random symmetric positive definite matrices generated for each. We tested agent counts \( K \) as \( [1, 2, 5, 10, 50] \) and learning rates \( \gamma \) from \( 10^{-10} \) to \( 1 \). The algorithm ran for \( R = 10^5 \) rounds, adjusted based on eigenvalue size, to measure the average distance from equilibrium until it fell below \( \epsilon = 10^{-6} \). Results were plotted to illustrate the relationship between \( \lambda_{max}(C) \) and the number of rounds required for convergence.

For the left figure, we generated random symmetric positive definite matrices as oracles, varying the maximum eigenvalue of the matrix \( C \) using logarithmic spacing between \( 10^{-1.5} \) and \( 10^{1.5} \). The accuracy threshold is set to \( \epsilon = 10^{-4} \). We evaluated five algorithms: GDA, Decoupled GDA, Optimistic, Alternating Gradient Descent, and Extragradient, with \( K \) fixed at 50. Each algorithm was executed for \( R = 10^5 \) rounds, determined based on the maximum eigenvalue, and their performance was assessed by the number of rounds required to achieve \( \epsilon \) accuracy.

\subsection{Decoupled SGDA with Gradient Approximation}
In this experiment, we analyze the performance of Decoupled and Local Stochastic Gradient Descent (SGDA) algorithms under varying conditions. We define oracles based on random symmetric positive definite matrices, with a fixed number of rounds \( R = 100 \) and \( K = 40 \). In the first experiment (left figure) The maximum eigenvalues of matrices \( C \) are sampled logarithmically between \( 10^{-0.25} \) and \( 10^{1} \).For each maximum eigenvalue, we generate corresponding matrices and evaluate the algorithms across five trials to determine the lowest gradient norm achieved. We reported the mean of these five experiments. In the second experiment (right figure), off-diagonal variances ($\sigma^2_{\vv\uu}$ and $\sigma^2_{\uu\vv}$) range linearly from 1 to 10. In this experiment they are assumed to be equal.  Results are aggregated and visualized in two plots: one depicting the relationship between the maximum eigenvalue of \( C \) and the minimum gradient norm, and the other illustrating the effect of varying off-diagonal variance on algorithm performance.

\subsection{Communication Efficiency of Decoupled SGDA for Non-Convex Functions}
In this experiment, we investigate the performance of Decoupled Single Oracle GDA under various settings of \( \lambda \) and \( K \). We evaluate the gradient norm achieved over \( R = 100 \) communication rounds. The \( \lambda \) values are sampled logarithmically between \( 10^{-4.5} \) and \( 10^{3} \), while \( K \) values range from 1 to 5. For each combination of \( \lambda \) and \( K \), we compute the lowest gradient norm over 5 independent trials. The gradient norms are averaged and plotted, with vertical lines marking the transition to the weakly coupled regime at \( \lambda = 50 \). The final results show the relationship between \( \lambda \) and the minimum gradient norm for different values of \( K \), highlighting the weakly coupled regime.

\subsection{Communication Efficiency of Decoupled SGDA in GAN Training}
In this experiment, a Generative Adversarial Network (GAN) was trained using the CIFAR-10 and SVHN datasets, both resized to \( 32 \times 32 \) pixels. The GAN was trained with a learning rate of \( 1 \times 10^{-4} \), a batch size of 256, and 50,000 rounds of updates. The hidden dimension size for the generator was 128. For evaluation, 256 samples were used to compute the Fréchet Inception Distance (FID) every 200 iterations. Both the generator and discriminator were optimized using the Adam optimizer, with a learning rate scheduler that decayed by a factor of 0.95 every 1000 steps. Additionally, a gradient penalty term was applied to stabilize training. The generator’s latent space dimension was set to 100, and its Exponential Moving Average (EMA) was maintained with a decay factor of 0.999 for evaluation purposes. Training was conducted using CUDA on an NVIDIA L4 GPU. 

The Generator uses a series of transposed convolutions, starting from a 100-dimensional latent vector, to generate a \( 32 \times 32 \times 3 \) image, with BatchNorm and ReLU, ending with a Tanh activation. The Discriminator applies four convolutional layers to downsample the input, using LeakyReLU and BatchNorm, and outputs a real/fake probability through a Sigmoid activation.

\end{document}